\newcommand{\cmark}{\ding{51}}%
\newcommand{\xmark}{\ding{55}}%
\newcommand{\fS}{\mathcal{S}}
\newcommand{\fA}{\mathcal{A}}
\newcommand{\fY}{\mathcal{Y}}
\newcommand{\fO}{\mathcal{O}}
\newcommand{\fH}{\mathcal{H}}
\newcommand{\R}{\mathbb{R}}
\newcommand{\E}{\mathbb{E}}
\newcommand{\nsa}{{|\fS \times \fA|}}
\newcommand{\ns}{{|\fS|}}
\newcommand{\na}{{|\fA|}}
\newcommand{\ny}{{|\fY|}}
\newcommand{\bop}{\mathcal{T}}
\newcommand{\indot}[2]{{\left<#1, #2\right>}}
\theoremstyle{plain}
\newtheorem{theorem}{Theorem}[section]
\newtheorem{lemma}[theorem]{Lemma}
\theoremstyle{definition}
\newtheorem{assumption}[theorem]{Assumption}
\theoremstyle{remark}
\icmltitlerunning{On the Convergence of SARSA with Linear Function Approximation}
\begin{document}

\twocolumn[
\icmltitle{On the Convergence of SARSA with Linear Function Approximation}



\icmlsetsymbol{equal}{*}

\begin{icmlauthorlist}
\icmlauthor{Shangtong Zhang}{uva}
\icmlauthor{Remi Tachet des Combes}{alp}
\icmlauthor{Romain Laroche}{un}
\end{icmlauthorlist}

\icmlaffiliation{uva}{Department of Computer Science, University of Virginia, United States}
\icmlaffiliation{alp}{AlpacaML}
\icmlaffiliation{un}{Unemployed}

\icmlcorrespondingauthor{Shangtong Zhang}{shangtong@virginia.edu}

\icmlkeywords{Reinforcement Learning, SARSA}

\vskip 0.3in
]



\printAffiliationsAndNotice{}  

\begin{abstract}
    SARSA, a classical on-policy control algorithm for reinforcement learning,
    is known to \emph{chatter} when combined with linear function approximation:
    SARSA does not diverge but oscillates in a bounded region. 
    However,
    little is known about how fast SARSA converges to that region and how large the region is.
    In this paper,
    we make progress towards this open problem by showing the convergence rate of projected SARSA to a bounded region.
    Importantly,
    the region
    is much smaller than the region that we project into, 
    provided that the magnitude of the reward is not too large.
    Existing works regarding the convergence of linear SARSA to a fixed point all require the Lipschitz constant of SARSA's policy improvement operator to be sufficiently small;
    our analysis instead applies to arbitrary Lipschitz constants 
    and thus characterizes the behavior of linear SARSA for a new regime.  
\end{abstract}

\section{Introduction}
\label{sec intro}
SARSA is a classical on-policy control algorithm 
 for reinforcement learning (RL, \citet{sutton2018reinforcement}) 
dating back to \citet{rummery1994line}.
The key idea of SARSA is to update the estimate for action  values with data generated by following an exploratory and greedy policy
(e.g., an $\epsilon$-greedy policy) 
derived from the estimate itself. 
In this paper,
we refer to the operator used for deriving such a policy from the action value estimate as the \emph{policy improvement operator}.

The study of SARSA begins in the tabular setting,
where the action value estimates are stored
in the form of a look-up table.
For example, \citet{singh2000convergence} confirm
the asymptotic convergence of SARSA to the optimal policy
provided that the policies from the policy improvement operator satisfy the ``greedy in the limit with infinite exploration'' condition.
Tabular methods,
however,
are not preferred 
when the state space is large and generalization is required across states.
One possible solution is linear function approximation,
which approximates the action values via the inner product of state-action features and a learnable weight vector. 
The behavior of SARSA with linear function approximation (linear SARSA) is,
however,
less understood.

\begin{table*}[t]
  \footnotesize
  \centering
  \begin{tabular}{c|c|c|c|c|c}
    &\citeauthor{gordon2001reinforcement} & \citeauthor{perkins2003convergent} & \citeauthor{melo2008analysis} & \citeauthor{zou2019finite} & Ours \\ \hline
    convergence & to a region & to a point & to a point & to a point & to a region \\ \hline
    \makecell{per-step \\ policy \\ improvement} & \xmark & \xmark & \cmark & \cmark & \cmark \\ \hline
    \makecell{any \\ Lipschitz \\ constant}& \cmark & \cmark$^*$ & \xmark & \xmark & \cmark \\ \hline
    \makecell{convergence \\ rate} & \xmark & \xmark & \xmark &\cmark & \cmark \\ \hline
  \end{tabular}
  \caption{\label{tab related work} Comparison with existing works. 
  \cmark$^*$ indicates that the corresponding property is not explicitly documented in the original work.
  ``Per-step policy improvement'' means that the policy improvement operator is applied every time step.}
\end{table*}

\citet{gordon1996chattering} and \citet{bertsekas1996neuro} empirically observe that linear SARSA can chatter:
the weight vector does not go to infinity 
(i.e., it does not diverge)
but oscillates in a bounded region.
Importantly,
this chattering behavior remains even if a decaying learning rate is used.
\citet{gordon2001reinforcement} further proves that \emph{trajectory-based} linear SARSA with an $\epsilon$-greedy policy improvement operator converges to a bounded region \emph{asymptotically}.
Unlike standard linear SARSA,
where the policy improvement operator is invoked \emph{every step} to generate a new policy for action selection in the next step,
trajectory-based linear SARSA generates a policy at the beginning of each episode and the policy remains fixed during the episode.
Intuitively,
within an episode,
trajectory-based linear SARSA is just linear Temporal Difference (TD, \citet{sutton1988learning}) learning for evaluating action values.
It,
therefore,
converges to an approximation of the action value function of the policy generated at the beginning of the episode \citep{tsitsiklis1997analysis}.
Since
the number of all possible $\epsilon$-greedy policies is finite in a finite Markov Decision Process with a fixed $\epsilon$,
trajectory-based linear SARSA oscillates among the (approximate) action value functions of those $\epsilon$-greedy policies,
which form a bounded region.
Later, 
\citet{perkins2003convergent} prove the asymptotic convergence of \emph{fitted} linear SARSA
(a.k.a.\ model-free approximate policy iteration) to a fixed point.
Similar to trajectory-based SARSA,
fitted SARSA alternates between \emph{thorough} TD learning for policy evaluation under a fixed policy and the application of the policy improvement operator.
In other words,
it involves bi-level optimization.
Then assuming the Lipschitz constant of the policy improvement operator is sufficiently small such that the composition of the policy improvement operator and some other function becomes contractive,
convergence of fitted SARSA is obtained thanks to Banach's fixed point theorem. 
Despite this progress,
the asymptotic behavior of standard linear SARSA,
which invokes the policy improvement operator every time step,
still remains unclear,
as does a potential convergence rate.
Understanding the behavior of linear SARSA is one of the four open theoretical questions in RL raised by \citet{sutton1999open}.

Several efforts have been made to analyze linear SARSA.
\citet{melo2008analysis} prove the asymptotic convergence of linear SARSA.
\citet{zou2019finite} further provide a convergence rate of a projected linear SARSA,
which uses an additional projection operator in the canonical linear SARSA update.
Unlike \citet{gordon2001reinforcement},
the convergence in \citet{melo2008analysis,zou2019finite} is to a fixed point instead of a bounded region.
Although convergence to a fixed point is preferred,
\citet{melo2008analysis,zou2019finite} require that SARSA's policy improvement operator is Lipschitz continuous and the Lipschitz constant is sufficiently small.
It remains an open problem \emph{how linear SARSA behaves when the Lipschitz constant is large}.

This problem is of interest because to our knowledge,
most meaningful empirical results using SARSA for control
consider an $\epsilon$-greedy policy or a softmax policy.
The former behaves similarly to a Lipschitz continuous policy improvement operator with a very large Lipschitz constant.
The latter is usually tuned such that its Lipschitz constant is reasonably large.
We refer the reader to Section~\ref{sec bkg} for more discussion about those two classes of policy improvement operators 
and now name a few notable empirical results.
In the tabular setting,
\citet{sutton2018reinforcement} use an $\epsilon$-greedy policy with $\epsilon = 0.1$ in the Windy GridWorld.
With linear function approximation,
\citet{rummery1994line} use a softmax policy in a robot control problem with the temperature decaying from $0.05$ to $0.01$ such that the softmax policy has a reasonably large Lipschitz constant.
\citet{liang2015state} use an $\epsilon$-greedy 
policy with $\epsilon=0.01$ to play all Atari games \citep{bellemare13arcade}.
In deep RL,
\citet{mnih2016asynchronous} use $\epsilon$-greedy policies with decaying $\epsilon$ in their asynchronous methods for playing Atari games.
Besides the aforementioned interest from the empirical side,
this problem has also been recognized as an important theoretical open problem in \citet{perkins2003convergent,zou2019finite}.

In this paper,
we make contributions to this open problem.
In particular,
we study the projected linear SARSA \citep{zou2019finite}
and show that it converges to a bounded region regardless of the magnitude of the Lipschitz constant of the policy improvement operator.
Importantly, 
the bounded region is much smaller than the region we project into provided that the magnitude of rewards is not too large.
The differences between our work and existing works are summarized in Table~\ref{tab related work}.

\section{Background}
\label{sec bkg}
In this paper, all vectors are column.
We use $\indot{x}{y} \doteq x^\top y$ to denote the standard inner product in Euclidean spaces.
For a positive definite matrix $D$,
we use $\norm{x}_D \doteq \sqrt{x^\top Dx}$ to denote the vector norm induced by $D$.
We overload $\norm{\cdot}_D$ to also denote the induced matrix norm.
We write $\norm{\cdot}$ as shorthand for $\norm{\cdot}_I$,
where $I$ is the identity matrix,
i.e.,
$\norm{\cdot}$ denotes the standard $\ell_2$ norm.
When it does not cause confusion,
we use vectors and functions interchangeably.
For example, 
if $f$ is a function $\fS \to \R$,
we also use $f$ to denote the vector in $\R^{\ns}$ whose $s$-indexed element is $f(s)$.

We consider an infinite horizon Markov Decision Process (MDP, \citet{puterman2014markov}) 
with a finite state space $\fS$,
a finite action space $\fA$,
a transition kernel $p: \fS \times \fS \times \fA \to [0, 1]$,
a reward function $r: \fS \times \fA \to [-r_{max}, r_{max}]$,
a discount factor $\gamma$,
and an initial distribution $p_0: \fS \to [0, 1]$.
At time step $t=0$,
an initial state $S_0$ is sampled from $p_0(\cdot)$.
At time step $t$,
an agent at a state $S_t$ takes an action $A_t \sim \pi(\cdot | S_t)$ according to a policy $\pi: \fA \times \fS \to [0 ,1]$.
The agent then receives a reward $R_{t+1} \doteq r(S_t, A_t)$ and proceeds to a successor state $S_{t+1} \sim p(\cdot | S_t, A_t)$.
The return at time step $t$ is defined as 
  $G_t \doteq \sum_{i=0}^\infty \gamma^i R_{t+i+1}$,
which allows us to define the action value function as 
\begin{align}
  \textstyle q_\pi(s, a) \doteq \E\left[G_t | S_t = s, A_t = a, \pi, p\right].
\end{align}
The action value function $q_\pi$ is closely related to the Bellman operator $\bop_\pi$,
which is defined as
  $\bop_\pi q \doteq r + \gamma P_\pi q$,
where $P_\pi \in \R^{\nsa \times \nsa}$ is the state-action pair transition matrix,
i.e., $P_\pi\left((s, a), (s', a')\right) \doteq p(a'|s, a) \pi(a'|s')$.
In particular, $q_\pi$ is the only vector $q \in \R^\nsa$ satisfying
  $q = \bop_\pi q$.
The goal of control is to find an optimal policy $\pi_*$ such that $\forall \pi, s, a$,
  $q_{\pi_*}(s, a) \geq q_\pi(s, a)$.
All optimal policies share the same action value function,
which is referred to as $q_*$.
One classical approach for finding $q_*$ is SARSA,
which updates an estimate $q \in \R^\nsa$ iteratively as
\begin{align}
  \label{eq tabular sarsa}
  A_{t+1} &\sim \pi_{q}(\cdot | S_{t+1}), \\
  \delta_t &\gets R_{t+1} + \gamma q(S_{t+1}, A_{t+1}) - q(S_t, A_t) \\
  q(S_t, A_t) &\gets q(S_t, A_t) + \alpha_t \delta_t,
\end{align}
where $\qty{\alpha_t}$ is a sequence of learning rates
and $\pi_q$ denotes that the policy $\pi$ is parameterized by the action value estimate $q$.
A commonly used $\pi_q$ is an $\epsilon$-greedy policy,
i.e.,
\begin{align}
  \label{eq epsilon greedy policy}
  \pi_q(a|s) = \begin{cases}
    \frac{\epsilon}{\abs{\fA}}, \quad a \neq \arg\max_{b \in \fA} q(s, b) \\
    1 - \epsilon + \frac{\epsilon}{\abs{\fA}},  \qq{otherwise}
  \end{cases},
\end{align}
where $\epsilon \in [0, 1]$ is a hyperparameter.
Another common example is an $\epsilon$-softmax policy,
i.e.,
\begin{align}
  \label{eq epsilon softmax policy}
  \textstyle \pi_q(a|s) = \frac{\epsilon}{\na} + (1-\epsilon)\frac{\exp(q(s, a) / \iota)}{\sum_b \exp(q(s, b) / \iota)},
\end{align}
where $\iota \in (0, \infty)$ is the temperature of the softmax function.
This $\pi_q$ is exactly the policy improvement operator discussed in Section~\ref{sec intro}:
it maps an action value estimate $q$ to a new policy;
it is ``improvement'' in that it usually has
greedification over the action value estimate to some extent.
The $\epsilon$-softmax policy $\pi_q$ in~\eqref{eq epsilon softmax policy} is Lipschitz continuous in $q$ with the Lipschitz constant being $\frac{1-\epsilon}{\iota}$.
When the temperature $\iota$ approaches $0$,
the $\epsilon$-softmax policy approaches the $\epsilon$-greedy policy.
Therefore,
despite the $\epsilon$-greedy policy is not even continuous,
it is the limit of a sequence of Lipschitz continuous policies with the Lipschitz constants approaching infinity.
We, therefore,
argue that an $\epsilon$-greedy policy would empiricially behave similarly to a Lipschitz continuous policy with a very large Lipschitz constant.

So far we have considered only time-homogeneous policies.
One can also consider time-inhomogeneous policies, 
e.g., a policy $\pi_{q, t}(a|s)$ that depends on both the action  value estimate $q$ and the time step $t$.
\citet{singh2000convergence} show that if the time-inhomogeneous policies $\pi_{q, t}$ satisfy the ``greedy in the limit with infinite exploration'' (GLIE) condition 
then the iterates generated by~\eqref{eq tabular sarsa} converge to $q_*$ almost surely.

It is,
however,
not always practical to use a look-up table for storing our action value estimates,
especially when the state space is large or generalization is required across states. 
One natural solution is linear function approximation,
where the action value estimate $q(s,a)$ is parameterized as $x(s, a)^\top w$.
Here $x: \fS \times \fA \to \R^K$ is the feature function which maps a state-action pair to a $K$-dimensional vector and $w \in \R^K$ is the learnable weight vector.
We use $X \in \R^{\nsa \times K}$ to denote the feature matrix,
whose $(s, a)$-indexed row is $x(s, a)^\top$.
We use as shorthand
\begin{align}
  \textstyle \pi_w \doteq \pi_{Xw}, \, x_t \doteq x(S_t, A_t), \, x_{max} \doteq \max_{s, a} \norm{x(s, a)}.
\end{align}

\begin{algorithm}[t]
\begin{algorithmic}
  \STATE Initialize $w_0$ such that $\norm{w_0} \leq C_{\Gamma}$
  \STATE $S_0 \sim p_0(\cdot), A_0 \sim \pi_{w_0}(\cdot | S_0)$
  \STATE $t \gets 0$
  \WHILE{true}
\STATE Execute $A_t$, get $R_{t+1}, S_{t+1}$ \;
 \STATE Sample $A_{t+1} \sim \pi_{w_t}(\cdot | S_{t+1})$ \;
 \STATE $\delta_t \gets R_{t+1} + \gamma x_{t+1}^\top w_t - x_t^\top w_t$ \;
 \STATE $w_{t+1} \gets \Gamma\left(w_t + \alpha_t \delta_t x_t\right)$ \;
 \STATE $t \gets t + 1$ \;
  \ENDWHILE
\end{algorithmic}
  \caption{\label{alg sarsa lambda}SARSA with linear function approximation}
\end{algorithm}

SARSA (Algorithm~\ref{alg sarsa lambda}) is a commonly used algorithm for learning $w$. 
In Algorithm~\ref{alg sarsa lambda}, $\Gamma: \R^K \to \R^K$ is a projection operator onto a ball of radius $C_\Gamma$, i.e., 
\begin{align}
  \label{eq projection}
  \textstyle \Gamma(w) \doteq \begin{cases}
    w, & \norm{w} \leq C_\Gamma, \\
    C_\Gamma \frac{w}{\norm{w}}, & \norm{w} > C_\Gamma
  \end{cases}.
\end{align}
For now we consider the setting where $C_\Gamma = \infty$,
i.e.,
$\Gamma$ is an identity mapping.
If the iterates $\qty{w_t}$ generated by SARSA converged to some vector $w_*$,
the expected update at $w_*$ would have to diminish,
i.e.,
\begin{align}
  \label{eq sarsa fixed point expectation}
  \E_{S_t, A_t \sim d_{\pi_{w_*}}}\left[\left(R_{t+1} + \gamma x_{t+1}^\top w_* - x_t^\top w_*\right) x_t\right] = 0,
\end{align}
where for a policy $\pi$, 
we use $d_\pi \in \R^\nsa$ to denote the stationary state action pair distribution of the chain in $\fS \times \fA$ induced by $\pi$ (assuming it exists).
We can equivalently write~\eqref{eq sarsa fixed point expectation} in a matrix form as
\begin{align}
  \label{eq sarsa fixed point expectation 2}
  X^\top D_{\pi_{w_*}}(r + \gamma P_{\pi_{w_*}} Xw_* - Xw_*) = 0,
\end{align}
where for a policy $\pi$,
we use $D_\pi \in \R^{\nsa \times \nsa}$ to denote a diagonal matrix whose diagonal entry is $d_\pi$.
By defining
\begin{align}
  \label{eq source of error function}
  A_{\pi} \doteq X^\top D_\pi(\gamma P_\pi  - I)X, \, b_\pi \doteq X^\top D_\pi r,
\end{align}
\eqref{eq sarsa fixed point expectation 2} becomes
  $A_{\pi_{w_*}} w_* + b_{\pi_{w_*}} = 0$.
It is known 
(see, e.g., \citet{tsitsiklis1997analysis}) that $A_{\pi}$ is negative definite under mild conditions.
We define a projection operator $\Pi_{D_\pi}$ mapping a vector in $\R^\nsa$ to the column space of $X$ as
\begin{align}
  \textstyle \Pi_{D_\pi} q \doteq \arg\min_{\hat q \in col(X)} \norm{\hat q - q}_{D_\pi}^2,
\end{align}
where $col(X)$ denotes the column space of $X$.
It can be computed that 
  $\Pi_{D_\pi} = X \left(X^\top D_\pi X\right)^{-1} X^\top D_\pi$
and it is known (see, e.g., \citet{de2000existence}) that~\eqref{eq sarsa fixed point expectation 2} holds if and only if 
  $\Pi_{D_{\pi_{w_*}}} \bop_{\pi_{w_*}} Xw_* = Xw_*$.
In other words,
$Xw_*$ is a fixed point of the operator
  $\fH(q) \doteq \Pi_{D_{\pi_q}} \bop_{\pi_q} q$.
The operator $\fH$ is referred to as the \emph{approximate policy iteration} operator
and SARSA is an incremental, stochastic method to find a fixed point of approximate policy iteration.
Unfortunately,
when
$\pi_q(a|s)$ is not continuous in $q$
(e.g., $\pi_q$ is an $\epsilon$-greedy policy, c.f.~\eqref{eq epsilon greedy policy}),
$\fH$ does not necessarily have a fixed point \citep{de2000existence}.
Conversely, when $\pi_q(a|s)$ is continuous in $q$,
\citet{de2000existence} show that $\fH$ has at least one fixed point.

\citet{perkins2003convergent} assume $\pi_{q}$ is Lipschitz continuous in $q$ and study a form of fitted SARSA,
which is a model-free variant of approximate policy iteration.
At the $k$-th iteration,
\citet{perkins2003convergent} first invoke TD for learning the action value function of $\pi_k$,
which converges to $w_k = A^{-1}_{\pi_k} b_{\pi_k}$ after infinitely many steps.
Then the policy for the $(k+1)$-th iteration is obtained via invoking the policy improvement operator,
i.e.,
$\pi_{k+1} \doteq \pi_{w_k}$.
\citet{perkins2003convergent} show that
\begin{align}
  \label{eq doina}
  \textstyle \norm{\pi_{A_{\hat \pi}^{-1}b_{\hat \pi}} - \pi_{A_{\pi'}^{-1}b_{\pi'}}} \leq \fO\left(L_\pi\right) \norm{\hat \pi - \pi'},
\end{align}
where the policies $\hat \pi$ and $\pi'$ should be interpreted as vectors in $\R^{\nsa}$ whose $(s,a)$-indexed element is $\pi(a|s)$ when computing $\norm{\hat \pi - \pi'}$
and $L_\pi$ denotes the Lipschitz constant of the policy improvement operator $\pi_w$,
i.e., $\forall s, a$,
\begin{align}
  \abs{\pi_{w_1}(a|s) - \pi_{w_2}(a|s)} \leq L_\pi \norm{w_1 - w_2}.
\end{align}
Consequently,
if $L_\pi$ is sufficiently small,
the function $x \to \pi_{A^{-1}_{x} b_{x}}$,
which maps $\pi_k$ to $\pi_{k+1}$,
becomes a contraction.
Banach's fixed point theorem then confirms the convergence of fitted SARSA.
From the definition of $A_\pi$ and $b_\pi$ in~\eqref{eq source of error function},
it is easy to see that~\eqref{eq doina} can also be expressed as
\begin{align}
  \norm{\pi_{A_{\hat \pi}^{-1}b_{\hat \pi}} - \pi_{A_{\pi'}^{-1}b_{\pi'}}} \leq \fO\left(L_\pi \norm{r}\right) \norm{\hat \pi - \pi'}
\end{align}
because $r$ is a multiplier in the definition of $b_\pi$. 
Hence,
for any $L_\pi$, 
if the magnitude of the reward $\norm{r}$ is small enough,
the function $x \to \pi_{A^{-1}_x b_x}$ is contractive
and fitted SARSA remains convergent.
Nevertheless,
\citet{perkins2003convergent} share the same spirit as \citet{gordon2001reinforcement} by holding the policy fixed for sufficiently (possibly infinitely) many steps
to wait for the policy evaluation to complete.

When it comes to standard linear SARSA that updates the policy every time step,
\citet{melo2008analysis} consider, for a fixed point $w_*$,
\begin{align}
  \textstyle C_{w_*} \doteq \sup_{w} \norm{A_{\pi_w} - A_{\pi_{w_*}}} + \sup_{w \neq w_*} \frac{\norm{b_{\pi_w} - b_{\pi_{w_*}}}}{\norm{w - w_*}}.
\end{align}
They show that $C_{w_*} = \fO\left(L_\pi\right)$ and if $L_\pi$ is small enough such that
\begin{align}
  \label{eq melo matrix}
  A_{\pi_{w_*}} + C_{w_*} I 
\end{align}
is negative definite,
SARSA converge to $w_*$.
The convergence of SARSA in \citet{perkins2003convergent,melo2008analysis} does not require the projection operator 
(i.e., $C_\Gamma = \infty$) but is only asymptotic,
\citet{zou2019finite} further provide a convergence rate of  SARSA using some $C_\Gamma < \infty$,
assuming
$L_\pi$ is small enough such that
\begin{align}
  \label{eq zou matrix}
  A_{\pi_{w_*}} + \fO\left(L_\pi \left(r_{max} + 2 x_{max} C_\Gamma\right)\right) I
\end{align}
is negative definite.
It is easy to see that
if $L_\pi$ is not small enough,
neither~\eqref{eq melo matrix} nor~\eqref{eq zou matrix} can be guaranteed to be negative definite
no matter how small $\norm{r}$ is.
This is because the $\sup_{w} \norm{A_{\pi_w} - A_{\pi_{w_*}}}$ term in $C_{w_*}$ and the $2x_{max}C_\Gamma$ term in~\eqref{eq zou matrix} are independent of $r$.
In other words, unlike \citet{perkins2003convergent},
where the requirement for a sufficiently small $L_\pi$ is not necessary when the magnitude of the reward is enough,
a sufficiently small $L_\pi$ is an essential requirement for the analysis of \citet{melo2008analysis,zou2019finite}.
We,
however,
note that requring a finite $C_\Gamma$ and a sufficiently small $L_\pi$ can be restrictive. 
To see this,
consider the $\epsilon$-softmax policy in~\eqref{eq epsilon softmax policy}.
On the one hand,
for this parameterization,
we have $L_\pi = \frac{1 - \epsilon}{\iota}$.
For $L_\pi$ to be sufficiently small,
we have to ensure the temperature $\iota$ to be sufficiently large.
On the other hand,
a finte $C_\Gamma$ ensures that the action value estimate $q(s, a)$ inside $\exp(\cdot)$ is bounded.
Combining the two facts together,
it is easy to see that the $\epsilon$-softmax policy cannot be much different from a uniformly random policy.
Or more formally speaking,
the probability of any action is at most 
\begin{align}
  \frac{\epsilon}{\na} + \frac{1-\epsilon}{1 + \left(\na -1 \right) \exp\left(- \frac{2x_{max}C_\Gamma}{\iota}\right)}.
\end{align}
When $\iota$ is too large, 
the above probability approaches $\frac{1}{\na}$,
making it hard to ensure sufficient exploitation.
The behavior of SARSA with a large $L_\pi$,
however,
remains an open problem.



\section{Stochastic Approximation with Rapidly Changing Markov Chains}
\label{sec sa}
To prepare us for the analysis of SARSA,
we show, in this section,
a convergence rate (to a bounded region) of a generic stochastic approximation algorithm.
More precisely,
we consider the following iterative updates:
\begin{align}
    \label{eq sa iterates}
    w_{t+1} \doteq \Gamma\left(w_t + \alpha_t \left(F_{\theta_t}(w_t, Y_t) - w_t\right)\right).
\end{align}
Here $\qty{w_t \in \R^K}$ are the iterates generated by the stochastic approximation algorithm,
$\qty{Y_t}$ is a sequence of random variables evolving in a finite space $\fY$,
$\qty{\theta_t \in \R^L}$ is a sequence of random variables controlling the transition of $\qty{Y_t}$,
$F_\theta$ is a function from $\R^K \times \fY$ to $\R^K$ parameterized by $\theta$,
and $\Gamma$ is the projection operator defined in~\eqref{eq projection}.
Importantly,
we consider the setting where 
\begin{align}
  \forall t, \theta_t \equiv w_t.
\end{align}
In other words,
there is only a single iterate in our setting.
To ease presentation, we use $\qty{w_t}$ and $\qty{\theta_t}$ to denote the same quantity. This emphasizes their different roles as the iterates of interest and as the controller of the transition kernel.

Our analysis is a natural extension of \citet{chen2021lyapunov} and \citet{zhang2021global}
but has significant differences to theirs.
\citet{chen2021lyapunov} consider a time-homogeneous Markov chain (i.e., $\forall t, \theta_t \equiv \theta_0$).
Consequently,
their results are naturally applicable to policy evaluation problems.
\citet{zhang2021global} consider a time-inhomogeneous Markov chain,
where the iterates $\qty{w_t}$ are \emph{different} from the sequence $\qty{\theta_t}$.
More importantly,
\citet{zhang2021global} assume that $\qty{\theta_t}$ changes sufficiently slowly,
i.e.,
there exists another sequence $\qty{\beta_t}$ such that
\begin{align}
  \norm{\theta_{t+1} - \theta_t} = \fO\left(\beta_t\right), \,\lim_{t\to\infty} \frac{\beta_t}{\alpha_t} = 0.
\end{align}
This is the classical two-timescale setting (see, e.g., \citet{borkar2009stochastic}) and their analysis naturally applies to actor-critic algorithms \citep{konda2000actor} with
$\qty{w_t}$ and $\qty{\theta_t}$ interpreted as critic and actor respectively.
We
instead
consider the setting where $\forall t, \theta_t = w_t$.
In other words,
the time-inhomogeneous Markov chain we consider changes \emph{rapidly},
which is the main challenge of our analysis.
As a consequence, we
introduce the projection operator $\Gamma$,
not required in \citet{chen2021lyapunov,zhang2021global}.
The price is that we only show convergence to a bounded region
while \citet{chen2021lyapunov,zhang2021global} show convergence to points.
Convergence to a bounded region is,
however,
sufficient for our purpose of understanding the behavior of SARSA 
since it matches what practitioners have observed. 
Furthermore,
we believe our analysis might be applicable to other RL algorithms and might also have independent interest beyond RL.
We now state our assumptions.
\begin{assumption}
  \label{assu makovian}
  (Time-inhomogeneous Markov chain)
  There exists a family of parameterized transition matrices $\Lambda_P \doteq \qty{P_{\theta} \in \R^{|\fY| \times |\fY|} | \theta \in \R^L}$ such that
\begin{align}
  \label{eq markov}
    \Pr(Y_{t+1} = y) = P_{\theta_{t+1}}(Y_t, y).
\end{align}
\end{assumption}
\begin{assumption}
    \label{assu uniform ergodicity}
    (Uniform ergodicity)
Let $\bar \Lambda_P$ be the closure of $\Lambda_P$.
For any $P \in \bar \Lambda_P$,
the chain induced by $P$ is ergodic.
We use $d_\theta$ to denote the invariant distribution of the chain induced by $P_\theta$.
\end{assumption}
\noindent
Those two assumptions are identical to those of \citet{zhang2021global}.
Assumption~\ref{assu makovian} states that the random process $\qty{Y_t}$ is a time-inhomogeneous Markov chain.
Assumption~\ref{assu uniform ergodicity} states the ergodicity of the Markov chains.
Assumption~\ref{assu uniform ergodicity} is also used in the analysis of RL algorithms in both on-policy \citep{marbach2001simulation} and off-policy \citep{zhang2021global,zhang2021breaking} settings.
We later show how SARSA($\lambda$) can trivially fulfill Assumption~\ref{assu uniform ergodicity}.
The uniform ergodicity in Assumption~\ref{assu uniform ergodicity} immediately implies uniform mixing.
\begin{restatable}{lemma}{uniformmixing}
  \label{lem uniform mixing}
  (Lemma 1 of \citet{zhang2021global})
Let Assumption~\ref{assu uniform ergodicity} hold. 
Then, there exist constants $C_M > 0$ and $\tau \in (0, 1)$,
independent of $\theta$,
such that for any $n > 0$,
\begin{align}
    \label{eq uniform mixing}
    \textstyle \sup_{y, \theta} \sum_{y'} \abs{P^n_\theta(y, y') -  d_\theta(y')} \leq C_M \tau^n.
\end{align}
\end{restatable}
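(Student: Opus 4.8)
The plan is to combine compactness of the space of stochastic matrices with the elementary fact that every finite ergodic chain is \emph{primitive} (some power of its transition matrix is entrywise positive), and to promote this to a uniform‑in‑$\theta$ statement via a finite‑subcover argument.

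First note that $\bar{\Lambda}_P$ is a closed subset of the set of all row‑stochastic matrices in $\R^{|\fY|\times|\fY|}$, which is compact; hence $\bar{\Lambda}_P$ is compact. Fix any $P\in\bar{\Lambda}_P$. Ergodicity of the chain induced by $P$ makes $P$ primitive, so there is an integer $n_P$ with $\min_{y,y'}P^{n}(y,y')>0$ for all $n\ge n_P$; set $\delta_P\doteq\tfrac12\min_{y,y'}P^{n_P}(y,y')>0$. Since $Q\mapsto Q^{n_P}$ is continuous on the space of stochastic matrices, there is an open neighbourhood $U_P$ of $P$ with $\min_{y,y'}Q^{n_P}(y,y')\ge\delta_P$ for all $Q\in U_P$. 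Moreover, for any $Q\in U_P$ and any $n\ge n_P$, the bound $Q^{n}(y,y'')=\sum_{y'}Q^{n-n_P}(y,y')\,Q^{n_P}(y',y'')\ge\delta_P\sum_{y'}Q^{n-n_P}(y,y')=\delta_P$, where the last equality uses that the rows of the stochastic matrix $Q^{n-n_P}$ sum to one, shows $\min_{y,y'}Q^{n}(y,y')\ge\delta_P$ for all $n\ge n_P$ and all $Q\in U_P$.

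Next extract a finite subcover $U_{P_1},\dots,U_{P_m}$ of $\bar{\Lambda}_P$ and set $N\doteq\max_i n_{P_i}$ and $\delta\doteq\min_i\delta_{P_i}>0$; then $\min_{y,y'}P^{N}(y,y')\ge\delta$ for every $P\in\bar{\Lambda}_P$. This is a Doeblin minorization for the $N$‑step kernel, uniform in $\theta$, and the standard Dobrushin‑coefficient estimate yields, for any two distributions $\mu,\nu$ on $\fY$, $\sum_{y'}\big|(\mu P_\theta^{N})(y')-(\nu P_\theta^{N})(y')\big|\le\rho\sum_{y'}\big|\mu(y')-\nu(y')\big|$ with $\rho\doteq\max\{1-|\fY|\delta,\tfrac12\}\in(0,1)$ (note $|\fY|\delta\le1$ since the entries of a row of $P^N$ sum to one).

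Finally, by Assumption~\ref{assu uniform ergodicity} each $P_\theta$ has a unique invariant distribution $d_\theta$, so $d_\theta P_\theta^{N}=d_\theta$. Applying the $N$‑step contraction $k$ times to $\mu$ equal to the point mass at $y$ and $\nu=d_\theta$ gives $\sum_{y'}\big|P_\theta^{kN}(y,y')-d_\theta(y')\big|\le2\rho^{k}$; for a general $n=kN+j$ with $0\le j<N$, using $P_\theta^{n}=P_\theta^{j}(P_\theta^{N})^{k}$, $d_\theta(P_\theta^{N})^{k}=d_\theta$, and $\sum_{y'}|(\mathbf{1}_{\{y\}}P_\theta^{j})(y')-d_\theta(y')|\le2$, the same iteration yields $\sum_{y'}\big|P_\theta^{n}(y,y')-d_\theta(y')\big|\le2\rho^{k}\le2\rho^{-1}(\rho^{1/N})^{n}$. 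Taking $\tau\doteq\rho^{1/N}\in(0,1)$ and $C_M\doteq2\rho^{-1}$, both independent of $\theta$, proves the claim. The only genuine obstacle is the uniformity in $\theta$: the finite‑subcover step converts the pointwise primitivity of each $P_\theta$ into a single pair $(N,\delta)$ valid for all $\theta$, and the row‑sum estimate above is precisely what allows $N=\max_i n_{P_i}$ without requiring the $n_{P_i}$ to agree.
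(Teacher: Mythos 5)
Your proof is correct. Note, however, that the paper does not prove this lemma at all: it is imported verbatim as Lemma~1 of \citet{zhang2021global}, so there is no in-paper argument to compare against. Your derivation is a clean, self-contained route to the same statement: compactness of $\bar\Lambda_P$ (a closed subset of the compact set of row-stochastic matrices), primitivity of each ergodic kernel, continuity of $Q \mapsto Q^{n_P}$ to get a neighbourhood on which the $n_P$-step kernel is entrywise bounded below, a finite subcover to extract a single pair $(N,\delta)$ with $\min_{y,y'}P^N(y,y') \geq \delta$ for every $P \in \bar\Lambda_P$, and then the standard Doeblin/Dobrushin contraction iterated in blocks of length $N$, with the $n = kN + j$ bookkeeping giving $C_M = 2\rho^{-1}$ and $\tau = \rho^{1/N}$. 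All the steps check out, including the observation that the row sums of $Q^{n-n_P}$ let you take $N = \max_i n_{P_i}$ without the $n_{P_i}$ agreeing, and the guard $\rho = \max\{1-\abs{\fY}\delta, \tfrac{1}{2}\}$ keeping $\rho \in (0,1)$ so that $\rho^{-1}$ is finite. Two cosmetic points: the neighbourhood/row-sum argument should be applied only to $Q \in U_P \cap \bar\Lambda_P$ (stochasticity is what makes the rows of $Q^{n-n_P}$ sum to one), which is all you use anyway; and you implicitly read ``ergodic'' as irreducible and aperiodic, which is the intended reading since the lemma's conclusion is geometric mixing. What your approach buys is an explicit, elementary proof with concrete constants in place of a citation; what the paper buys by citing \citet{zhang2021global} is brevity and consistency of constants $(C_M,\tau)$ with the downstream bounds that reuse them.
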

\noindent
As noted in \citet{zhang2021global},
the uniform mixing property in Lemma~\ref{lem uniform mixing} is usually a direct technical assumption in previous works (e.g., \citet{zou2019finite,wu2020finite}).

\begin{assumption}
    \label{assu uniform contraction}
(Uniform pseudo-contraction)
Let 
\begin{align}
    \bar F_\theta(w) &\doteq \sum_{y \in \fY} d_\theta(y) F_\theta(w, y), \\
    f^\alpha_\theta(w) &\doteq w + \alpha \left(\bar F_\theta(w) - w\right).
\end{align}
Then,
\begin{enumerate}[(i)]
  \item For any $\theta$, $\bar F_\theta$ has a unique fixed point, i.e., there exists a unique $w^*_\theta$ such that
  \begin{align}
    \bar F_\theta(w^*_\theta) = w^*_\theta.
  \end{align}
  \item There exists a constant $\bar \alpha > 0$ such that for all $\alpha \in (0, \bar \alpha)$,
  $f^\alpha_\theta$ is a uniform pseudo-contraction,
  i.e.,
  there exists a constant $\kappa_\alpha \in (0, 1)$ (depending on $\alpha$),
  such that for all $\theta, w$,
  \begin{align}
    \norm{f^\alpha_\theta(w) - w^*_\theta} \leq \kappa_\alpha \norm{w - w^*_\theta}.
  \end{align}
\end{enumerate}
\end{assumption}
Assumption~\ref{assu uniform contraction} is another difference from \citet{chen2021lyapunov,zhang2021global}.
Namely,
\citet{chen2021lyapunov,zhang2021global} require $\bar F_\theta$ to be a contraction 
while we only require $f_\theta^\alpha$ to be a pseudo-contraction.
It is easy to see that the contraction of $\bar F_\theta$ immediately implies the pseudo-contraction of $f_\theta^\alpha$ but not in the opposite direction.
In other words,
our assumption is weaker.

\begin{assumption}
  \label{assu regularization}
(Continuity and boundedness)
There exist constants $L_F, L_F', L_F'', U_F, U_F', U_F'', L_w, U_w, L_P$ such that for any $w, w', y, y', \theta, \theta'$,
\begin{enumerate}[(i).]
  \item $\norm{F_{\theta}(w, y) - F_{\theta}(w', y)} \leq L_F \norm{w - w'}$ 
  \item $\norm{F_{\theta}(w, y) - F_{\theta'}(w, y)} \leq L_F' \norm{\theta - \theta'} (\norm{w} + U_F')$
  \item $\norm{F_{\theta}(0, y)} \leq U_F$
  \item $\norm{\bar F_{\theta}(w) - \bar F_{\theta'}(w)} \leq L_F'' \norm{\theta - \theta'} (\norm{w} + U_F'')$ 
  \item $\norm{w^*_{\theta} - w^*_{\theta'}} \leq L_w \norm{\theta - \theta'}$ 
  \item $\sup_{\theta} \norm{w^*_{\theta}} \leq U_w $
  \item $\abs{ P_{\theta}(y, y') - P_{\theta'}(y, y') } \leq L_P \norm{\theta - \theta'}$
\end{enumerate}
\end{assumption}
Assumption~\ref{assu regularization} states some regularity conditions for the functions we consider and is identical to that of \citet{zhang2021global}.

\begin{assumption}
  \label{assu projection}
  (Projection)
  \begin{enumerate}[(i).]
    \item $\norm{w_0} \leq C_\Gamma, U_w \leq C_\Gamma$.
    \item For any $\theta, w, y$, we have
    \begin{align}
      P_{\theta} &= P_{\Gamma(\theta)}, \, F_{\theta}(w, y) = F_{\Gamma(\theta)}(w, y), \\
      w^*_\theta &= w^*_{\Gamma(\theta)}.
    \end{align}
  \end{enumerate}
\end{assumption}
Assumption~\ref{assu projection} requires that some of the functions we consider are invariant to the projection operator.
We will later show that SARSA($\lambda$) trivially satisfies this assumption.

\begin{assumption}
  \label{assu sa lr}
  The learning rates $\qty{\alpha_t}$ have the form 
  \begin{align}
    \textstyle \alpha_t \doteq \frac{c_\alpha}{(t_0 + t)^{\epsilon_\alpha}},
  \end{align}
  where $c_\alpha > 0, \epsilon_\alpha \in (0, 1], t_0 > 0$ are constants to be tuned.
\end{assumption}
Assumption~\ref{assu sa lr} is just one of many possible forms of learning rates;
we use this particular one to ease presentation.
Importantly, the learning rates $\qty{\alpha_t}$ here do \emph{not} verify the Robbins-Monro's condition \citep{robbins1951stochastic} when $\epsilon_\alpha \leq 0.5$,
neither do the learning rates in \citet{wu2020finite,chen2021lyapunov}.

We now present our analysis.
Given the sequences $\qty{\theta_t}$ (i.e., $\qty{w_t}$) and $\qty{Y_t}$ in \eqref{eq sa iterates},
we define an auxiliary sequence $\qty{u_t}$ as
\begin{align}
  u_0 \doteq& w_0, \\
  \label{eq sa iterates transformed}
  u_{t+1} \doteq& \Gamma(u_t) + \alpha_t (F_{\theta_t}(\Gamma(u_t), Y_t) - \Gamma(u_t)).
\end{align}
\begin{lemma}
  \label{lem transform}
  Let Assumption~\ref{assu projection} hold.
  Then for any $t$, $w_t = \Gamma(u_t)$.
\end{lemma}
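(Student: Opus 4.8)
The plan is to prove the identity $w_t = \Gamma(u_t)$ by a straightforward induction on $t$, using only the structure of the two recursions~\eqref{eq sa iterates} and~\eqref{eq sa iterates transformed} together with the elementary fact (immediate from~\eqref{eq projection}) that $\Gamma$ fixes every point of the ball of radius $C_\Gamma$. For the base case $t=0$, note $u_0 \doteq w_0$ and, by Assumption~\ref{assu projection}(i), $\norm{w_0} \leq C_\Gamma$; hence $\Gamma(u_0) = \Gamma(w_0) = w_0$, as desired.

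For the inductive step, suppose $w_t = \Gamma(u_t)$. Recall that throughout this section $\theta_t \equiv w_t$, so the controller argument appearing in~\eqref{eq sa iterates transformed} is literally $w_t$. Substituting the inductive hypothesis into~\eqref{eq sa iterates transformed},
\begin{align}
  u_{t+1} = \Gamma(u_t) + \alpha_t\big(F_{\theta_t}(\Gamma(u_t), Y_t) - \Gamma(u_t)\big) = w_t + \alpha_t\big(F_{\theta_t}(w_t, Y_t) - w_t\big).
\end{align}
The right-hand side is exactly the pre-projection argument in the SARSA-type update~\eqref{eq sa iterates}, so applying $\Gamma$ to both sides and invoking~\eqref{eq sa iterates} gives
\begin{align}
  \Gamma(u_{t+1}) = \Gamma\big(w_t + \alpha_t(F_{\theta_t}(w_t, Y_t) - w_t)\big) = w_{t+1},
\end{align}
which completes the induction.

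The only nontrivial input is Assumption~\ref{assu projection}(i), needed for the base case; matching the controller argument is automatic here because $\{\theta_t\}$ and $\{w_t\}$ are by construction the same sequence. Assumption~\ref{assu projection}(ii) — invariance of $P_\theta$, $F_\theta$, and $w^*_\theta$ under $\Gamma$ — is not strictly required for this lemma, but it is what makes $\{u_t\}$ a faithful surrogate for the analysis to come: since $w_t = \Gamma(u_t)$, part (ii) lets one freely replace $\theta_t$ by $u_t$ inside any of these quantities, so that $\{u_t\}$ may be treated as an \emph{essentially unprojected} stochastic-approximation recursion driven by $\Gamma(u_t)$. Accordingly, I do not expect any real obstacle in this lemma; it is a bookkeeping step whose purpose is to recast the projected iteration~\eqref{eq sa iterates} into the form~\eqref{eq sa iterates transformed} that is amenable to the subsequent Lyapunov/drift argument.
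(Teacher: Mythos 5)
Your proof is correct and matches the paper's argument, which is exactly the induction you spell out (the paper simply states ``it follows immediately from induction''). The base case via Assumption~\ref{assu projection}(i) and the inductive step identifying the pre-projection argument of~\eqref{eq sa iterates} are precisely what is needed, and your observation that part (ii) is not required for this particular lemma is accurate.
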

\begin{proof}
  It follows immediately from induction.
\end{proof}
Intuitively,
$\qty{u_t}$ is simply the pre-projection version of $\qty{w_t}$.
We are interested in $\qty{u_t}$ 
because it has the following nice property.
\begin{restatable}{theorem}{thmsaconvergence}
    \label{thm sa convergence}
    Let Assumptions~\ref{assu makovian} - \ref{assu sa lr} hold.
    If $t_0$ is sufficiently large,
    then the iterates $\qty{u_t}$ generated by \eqref{eq sa iterates transformed} satisfy
    \begin{align}
      &\E\left[\norm{u_{t+1} - w^*_{\theta_{t+1}}}^2\right] \\
      \leq& \left(1 - 2\left(1 - \kappa_{\alpha_t} - \fO\left(\alpha_t^2 \log^2 \alpha_t \right) \right)\right) \E\left[\norm{\Gamma(u_{t}) - w^*_{\theta_{t}}}^2\right] \\
      &+ 2L_w L_\theta \alpha_t  \E\left[\norm{\Gamma(u_{t}) - w^*_{\theta_{t}}}\right] + \fO\left(\alpha_t^2 \log^2 \alpha_t\right),
    \end{align}
    where $L_{\theta} \doteq U_F + (L_F + 1)C_\Gamma$.
\end{restatable}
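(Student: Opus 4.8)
The plan is to combine the deterministic one-step pseudo-contraction of Assumption~\ref{assu uniform contraction} with a ``conditioning on the distant past'' argument to control the Markovian noise, and then pay separately for the drift of the moving target $w^*_{\theta_t}$ induced by $\theta_{t+1}\neq\theta_t$. Write $z_t\doteq\Gamma(u_t)-w^*_{\theta_t}$ and $\fF_k\doteq\sigma(w_0,Y_0,\dots,Y_k)$; recall that $\theta_t\equiv w_t=\Gamma(u_t)$ is $\fF_{t-1}$-measurable by Lemma~\ref{lem transform} and that $\norm{z_t}\le 2C_\Gamma$ by Assumptions~\ref{assu regularization}(vi) and~\ref{assu projection}(i).

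\emph{Step 1: one step towards $w^*_{\theta_t}$.} Adding and subtracting $\bar F_{\theta_t}(\Gamma(u_t))$ in~\eqref{eq sa iterates transformed} gives $u_{t+1}-w^*_{\theta_t}=a_t+\alpha_t\Delta_t$ with $a_t\doteq f^{\alpha_t}_{\theta_t}(\Gamma(u_t))-w^*_{\theta_t}$ and $\Delta_t\doteq F_{\theta_t}(\Gamma(u_t),Y_t)-\bar F_{\theta_t}(\Gamma(u_t))$, so $\norm{u_{t+1}-w^*_{\theta_t}}^2=\norm{a_t}^2+2\alpha_t\indot{a_t}{\Delta_t}+\alpha_t^2\norm{\Delta_t}^2$. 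Assumption~\ref{assu uniform contraction}(ii) gives $\norm{a_t}\le\kappa_{\alpha_t}\norm{z_t}$; and since $\norm{\Gamma(u_t)}\le C_\Gamma$, Assumptions~\ref{assu regularization}(i)(iii) bound $\norm{\Delta_t}$ and $\norm{a_t}$ by absolute constants, so $\alpha_t^2\norm{\Delta_t}^2=\fO(\alpha_t^2)$.

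\emph{Step 2: the Markovian cross term.} This is the crux, since $\theta_t=w_t$ depends on the past so $\E[\Delta_t\mid\fF_{t-1}]\neq 0$ in general. Let $\tau_t\doteq\min\{n:C_M\tau^n\le\alpha_t\}=\fO(\log(1/\alpha_t))$ with $C_M,\tau$ from Lemma~\ref{lem uniform mixing}, and take $t_0$ large enough (Assumption~\ref{assu sa lr}) that $\tau_t<t$ and $\alpha_{t-\tau_t}\le 2\alpha_t$. Using $\norm{F_{\theta_k}(w_k,Y_k)-w_k}\le L_\theta$ (Assumptions~\ref{assu regularization}(i)(iii)) and nonexpansiveness of $\Gamma$, the quantities $\norm{\theta_t-\theta_{t-\tau_t}}$, $\norm{z_t-z_{t-\tau_t}}$, $\norm{a_t-a_{t-\tau_t}}$ are all $\fO(\tau_t\alpha_{t-\tau_t})=\fO(\alpha_t\log\alpha_t)$, where the latter two also use Assumption~\ref{assu regularization}(v) and that $f^\alpha_\theta$ is $\fO(1)$-Lipschitz in $w$, $\fO(1)$-Lipschitz in $\theta$ on the ball (Assumptions~\ref{assu regularization}(i)(iv)) and affine in $\alpha$. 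Splitting $\indot{a_t}{\Delta_t}=\indot{a_{t-\tau_t}}{\Delta_t}+\indot{a_t-a_{t-\tau_t}}{\Delta_t}$, the second summand is $\fO(\alpha_t\log\alpha_t)$ deterministically. For the first, $\E[\indot{a_{t-\tau_t}}{\Delta_t}\mid\fF_{t-\tau_t}]=\indot{a_{t-\tau_t}}{\E[\Delta_t\mid\fF_{t-\tau_t}]}$ since $a_{t-\tau_t}$ is $\fF_{t-\tau_t}$-measurable, and I would bound $\norm{\E[\Delta_t\mid\fF_{t-\tau_t}]}$ by (a) replacing $(\theta_t,\Gamma(u_t))$ inside $\Delta_t$ by their time-$(t-\tau_t)$ values at cost $\fO(\alpha_t\log\alpha_t)$ (Assumptions~\ref{assu regularization}(i)(ii)(iv)); and (b) comparing the time-inhomogeneous $\tau_t$-step law of $Y_t$ given $\fF_{t-\tau_t}$ (a product of $\tau_t$ one-step kernels, Assumption~\ref{assu makovian}) with $d_{\theta_{t-\tau_t}}$: the homogeneous mixing error is $\le C_M\tau^{\tau_t}\le\alpha_t$ by construction, while the inhomogeneity error telescopes to $\fO(\tau_t L_P\norm{\theta_t-\theta_{t-\tau_t}})=\fO(\tau_t^2\alpha_t)=\fO(\alpha_t\log^2\alpha_t)$ via Assumption~\ref{assu regularization}(vii), the deterministic bound $\norm{\theta_{t-\tau_t+j}-\theta_{t-\tau_t}}=\fO(\tau_t\alpha_{t-\tau_t})$, and the elementary estimate that a product of $\tau_t$ stochastic matrices each within $\varepsilon$ of a fixed $P$ is within $\tau_t\varepsilon$ of $P^{\tau_t}$. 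Hence $\norm{\E[\Delta_t\mid\fF_{t-\tau_t}]}=\fO(\alpha_t\log^2\alpha_t)$, and with $\norm{a_{t-\tau_t}}\le\norm{z_{t-\tau_t}}\le\norm{z_t}+\fO(\alpha_t\log\alpha_t)$ and the tower rule, $2\alpha_t\E[\indot{a_t}{\Delta_t}]\le\fO(\alpha_t^2\log^2\alpha_t)\E[\norm{z_t}]+\fO(\alpha_t^2\log^2\alpha_t)$, which becomes $\fO(\alpha_t^2\log^2\alpha_t)\E[\norm{z_t}^2]+\fO(\alpha_t^2\log^2\alpha_t)$ after $\E[\norm{z_t}]\le\tfrac12(1+\E[\norm{z_t}^2])$. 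Combined with Step~1, $\E[\norm{u_{t+1}-w^*_{\theta_t}}^2]\le\bigl(\kappa_{\alpha_t}^2+\fO(\alpha_t^2\log^2\alpha_t)\bigr)\E[\norm{z_t}^2]+\fO(\alpha_t^2\log^2\alpha_t)$.

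\emph{Step 3: paying for the moving target.} Expand $\norm{u_{t+1}-w^*_{\theta_{t+1}}}^2=\norm{u_{t+1}-w^*_{\theta_t}}^2+2\indot{u_{t+1}-w^*_{\theta_t}}{w^*_{\theta_t}-w^*_{\theta_{t+1}}}+\norm{w^*_{\theta_t}-w^*_{\theta_{t+1}}}^2$. Since $\theta_{t+1}=\Gamma\bigl(w_t+\alpha_t(F_{\theta_t}(w_t,Y_t)-w_t)\bigr)$ and $\Gamma(w_t)=w_t$, nonexpansiveness gives $\norm{\theta_{t+1}-\theta_t}\le\alpha_t\norm{F_{\theta_t}(w_t,Y_t)-w_t}\le\alpha_t L_\theta$ with $L_\theta=U_F+(L_F+1)C_\Gamma$; Assumption~\ref{assu regularization}(v) then gives $\norm{w^*_{\theta_{t+1}}-w^*_{\theta_t}}\le L_w L_\theta\alpha_t$. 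Using $\norm{u_{t+1}-w^*_{\theta_t}}\le\norm{z_t}+\alpha_t L_\theta$, the cross term is at most $2L_w L_\theta\alpha_t\norm{z_t}+\fO(\alpha_t^2)$ and the last term is $\fO(\alpha_t^2)$; taking expectations, inserting Step~2, and using $\kappa_{\alpha_t}^2\le 1-2(1-\kappa_{\alpha_t})+\fO(\alpha_t^2\log^2\alpha_t)$ yields the stated recursion with $L_\theta$ as claimed. The main obstacle is Step~2(b): showing that the accumulated drift of the transition kernels over the window of length $\tau_t\sim\log(1/\alpha_t)$ contributes only $\fO(\alpha_t\log^2\alpha_t)$ to the conditional bias of $\Delta_t$; the remainder is bookkeeping with the constants of Assumption~\ref{assu regularization}.
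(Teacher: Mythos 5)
Your proposal is correct and follows essentially the same route as the paper's proof: a quadratic Lyapunov expansion, the pseudo-contraction of $f^{\alpha_t}_{\theta_t}$ for the main drift, the moving-target terms bounded via $\norm{w^*_{\theta_{t+1}}-w^*_{\theta_t}}\leq L_wL_\theta\alpha_t$, and the Markovian bias handled by conditioning $\tau_{\alpha_t}=\fO(\log(1/\alpha_t))$ steps back, replacing arguments by their time-$(t-\tau_{\alpha_t})$ values, and comparing the true chain with a frozen-kernel auxiliary chain (mixing error $\leq\alpha_t$ plus a telescoped kernel-drift error $\fO(\alpha_t\log^2\alpha_t)$), which is precisely the paper's $T_1$--$T_6$, $T_{331}$--$T_{334}$ decomposition and its auxiliary-chain lemma. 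The only cosmetic deviation is bounding $\norm{f^{\alpha_t}_{\theta_t}(\Gamma(u_t))-w^*_{\theta_t}}^2\leq\kappa_{\alpha_t}^2\norm{\Gamma(u_t)-w^*_{\theta_t}}^2$ rather than the paper's inner-product bound, which is harmless because $1-\kappa_{\alpha_t}=\fO(\alpha_t)$ (reverse triangle inequality plus Lipschitzness of $\bar F_\theta$), so $(1-\kappa_{\alpha_t})^2$ is absorbed into the $\fO(\alpha_t^2\log^2\alpha_t)$ term as you implicitly use.
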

\noindent See Section~\ref{sec proof thm sa convergence} for the proof of Theorem~\ref{thm sa convergence},
where 
the constants hidden by $\fO\left(\cdot\right)$ 
and how large $t_0$ is are also explicitly documented.
Theorem~\ref{thm sa convergence} gives a recursive form of some error terms.
We, however, cannot go further unless we have the domain knowledge of $\kappa_\alpha$.

\begin{restatable}{corollary}{thmsasingleweight}
  \label{thm sa single weight}
  Let Assumptions~\ref{assu makovian} - \ref{assu sa lr} hold. 
  Assume $\kappa_\alpha = \sqrt{1 - \eta \alpha}$ for some positive constant $\eta > 0$.
  If $t_0$ is sufficiently large,
  then the iterates $\qty{w_t}$ generated by~\eqref{eq sa iterates} satisfy
  \begin{align}
  &\E\left[\norm{w_{t} - w^*_{w_{t}}}^2\right] = \frac{72L_w^2 L_\theta^2}{\eta^2} \\
  +& \begin{cases} 
    \fO\left(t^{-\frac{\eta c_\alpha}{3}} \log^2 t\right), &\epsilon_\alpha = 1, \eta c_\alpha \in (0, 3) \\
    \fO\left(\frac{\log^3 t}{t}\right), & \epsilon_\alpha = 1, \eta c_\alpha = 3 \\
    \fO\left(\frac{\log^2 t}{t}\right), & \epsilon_\alpha = 1, \eta c_\alpha \in (3, \infty) \\
    \fO\left(\frac{\log^2 t}{t^{\epsilon_\alpha}}\right), & \epsilon_\alpha \in (0, 1)
  \end{cases}.
\end{align}
\end{restatable}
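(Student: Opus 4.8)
The plan is to specialize Theorem~\ref{thm sa convergence} to $\kappa_\alpha=\sqrt{1-\eta\alpha}$ and collapse the resulting vector inequality into a scalar recursion that can be unrolled. First I would translate the statement of Theorem~\ref{thm sa convergence}, which is phrased in terms of the pre-projection iterates $\{u_t\}$, back into a statement about $\{w_t\}$. By Lemma~\ref{lem transform}, $w_t=\Gamma(u_t)$; and since Assumption~\ref{assu projection}(i) gives $U_w\le C_\Gamma$, the fixed point $w^*_{\theta_{t+1}}$ lies in the ball of radius $C_\Gamma$, so the non-expansiveness of the Euclidean projection yields $\E[\|w_{t+1}-w^*_{w_{t+1}}\|^2]=\E[\|\Gamma(u_{t+1})-\Gamma(w^*_{\theta_{t+1}})\|^2]\le\E[\|u_{t+1}-w^*_{\theta_{t+1}}\|^2]$. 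Writing $a_t\doteq\E[\|w_t-w^*_{w_t}\|^2]$, using $\theta_t=w_t$, and bounding $\E[\|w_t-w^*_{w_t}\|]\le\sqrt{a_t}$ by Jensen's inequality, Theorem~\ref{thm sa convergence} gives
\begin{align}
a_{t+1}\le\left(1-2\left(1-\kappa_{\alpha_t}-\fO(\alpha_t^2\log^2\alpha_t)\right)\right)a_t+2L_wL_\theta\alpha_t\sqrt{a_t}+\fO(\alpha_t^2\log^2\alpha_t).
\end{align}

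Next I would insert $\kappa_{\alpha_t}=\sqrt{1-\eta\alpha_t}$. The elementary bound $1-\sqrt{1-x}\ge x/2$ on $[0,1]$ gives $1-\kappa_{\alpha_t}\ge\eta\alpha_t/2$, and once $t_0$ is large enough that the step sizes are small, the $\fO(\alpha_t^2\log^2\alpha_t)$ correction is dominated by a fixed fraction of $\eta\alpha_t$, so the contraction coefficient is at most $1-c_1\alpha_t$ for a constant $c_1$ proportional to $\eta$. The cross term $2L_wL_\theta\alpha_t\sqrt{a_t}$ is the heart of the matter: being only linear in $\sqrt{a_t}$, it cannot be absorbed into a genuine contraction, which is precisely why convergence is to a region rather than a point. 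I would tame it with Young's inequality $2xy\le\lambda x^2+\lambda^{-1}y^2$ using a weight $\lambda$ proportional to $\alpha_t$, folding part of it into the $-c_1\alpha_t a_t$ term and leaving a non-vanishing forcing term proportional to $\alpha_t$. Tuning the split so that the recursion reads $a_{t+1}\le(1-c_1'\alpha_t)a_t+c_2\alpha_t+\fO(\alpha_t^2\log^2\alpha_t)$ with $c_1'=\eta/3$ and $c_2/c_1'=\tfrac{72L_w^2L_\theta^2}{\eta^2}$ pins down the constants; note that $a_t\le(2C_\Gamma)^2$ always, since $\|w_t\|\le C_\Gamma$ and $\|w^*_{w_t}\|\le U_w\le C_\Gamma$, so the recursion starts from a bounded value.

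Then I would recenter at the claimed limit. Setting $b_t\doteq a_t-\tfrac{72L_w^2L_\theta^2}{\eta^2}$, the constant forcing $c_2\alpha_t$ cancels exactly against $c_1'\alpha_t\cdot\tfrac{c_2}{c_1'}$, giving $b_{t+1}\le(1-c_1'\alpha_t)b_t+\fO(\alpha_t^2\log^2\alpha_t)$. Since $b_t$ is bounded above, an induction using $1-c_1'\alpha_t\ge0$ dominates $b_t$ by the solution $v_t$ of the linear recursion $v_{t+1}=(1-c_1'\alpha_t)v_t+\fO(\alpha_t^2\log^2\alpha_t)$ with $v_0=(2C_\Gamma)^2$. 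Unrolling, $v_t=\prod_{s<t}(1-c_1'\alpha_s)v_0+\sum_{s<t}\prod_{s<k<t}(1-c_1'\alpha_k)\fO(\alpha_s^2\log^2\alpha_s)$, and the four rates follow by inserting $\alpha_t=c_\alpha(t_0+t)^{-\epsilon_\alpha}$ from Assumption~\ref{assu sa lr} into a standard discrete Gr\"onwall estimate. For $\epsilon_\alpha\in(0,1)$ the product decays faster than any polynomial, the forcing dominates, and $v_t=\fO(\alpha_t\log^2 t)=\fO(\log^2 t/t^{\epsilon_\alpha})$. For $\epsilon_\alpha=1$ the product decays like $(t_0/t)^{c_1'c_\alpha}$ with $c_1'c_\alpha=\eta c_\alpha/3$, and comparing this polynomial decay exponent against the accumulated forcing $\sum_s\log^2 s/s^2$ produces the three sub-cases: $\eta c_\alpha>3$ gives $\fO(\log^2 t/t)$, the boundary $\eta c_\alpha=3$ picks up the extra $\log$ factor typical of such borderline sums, and $\eta c_\alpha<3$ gives the slower $\fO(t^{-\eta c_\alpha/3}\log^2 t)$ inherited from the homogeneous term.

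I expect the main obstacle to be the bookkeeping rather than any conceptual difficulty: engineering the Young's-inequality split so that the induced fixed point of the scalar recursion is exactly $\tfrac{72L_w^2L_\theta^2}{\eta^2}$ while keeping $c_1'$ equal to $\eta/3$ (so that the exponent $\eta c_\alpha/3$ comes out right), and then verifying the unrolled-recursion bounds uniformly across all learning-rate regimes, in particular the delicate borderline case $\eta c_\alpha=3$. Everything else is a direct consequence of Theorem~\ref{thm sa convergence} and Lemma~\ref{lem transform}.
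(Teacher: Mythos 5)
Your proposal is correct and takes essentially the same route as the paper: invoke Theorem~\ref{thm sa convergence}, reduce to a scalar recursion via the non-expansiveness of $\Gamma$ (using $U_w\le C_\Gamma$) and Jensen's inequality, turn $\kappa_{\alpha_t}=\sqrt{1-\eta\alpha_t}$ into a contraction factor $1-\tfrac{\eta}{3}\alpha_t$, and unroll under Assumption~\ref{assu sa lr} to get the four rate regimes. The only cosmetic differences are that the paper absorbs the cross term $2L_wL_\theta\alpha_t\sqrt{a_t}$ by a case split on whether $\sqrt{a_t}\ge 12L_wL_\theta/\eta$ (rather than Young's inequality) and controls the accumulated constant forcing by showing $\sum_{i}\prod_{j>i}\left(1-\tfrac{\eta}{3}\alpha_j\right)\alpha_i\le 3/\eta$ instead of recentering at the limit, which is exactly how $\tfrac{72L_w^2L_\theta^2}{\eta^2}$ arises; your tuning gives the same (or an even smaller, hence still valid) constant.
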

\noindent See Section~\ref{sec proof of thm sa single weight} for the proof of Corollary~\ref{thm sa single weight},
as well as the constants hidden by $\fO\left(\cdot\right)$ and how large $t_0$ is.
Due to the projection operator,
we have the bound
\begin{align}
  \textstyle \E\left[\norm{w_{t} - w^*_{w_{t}}}^2\right] \leq 4C_\Gamma^2.
\end{align}
So Corollary~\ref{thm sa single weight} is informative only if  
\begin{align}
  \textstyle \frac{72L_w^2 L_\theta^2}{\eta^2} \leq 4C_\Gamma^2.
\end{align}
This is where we need more domain knowledge and the analysis in the next section provides an example.

\section{SARSA with Linear Function Approximation}
\label{sec sarsa}

We first analyze SARSA and expected SARSA with the following assumptions.
\begin{assumption}
  \label{assu lipschitz mu}
  (Lipschitz continuity) There exists $L_\pi > 0$ such that $\forall w, w', a, s$,
  \begin{align}
    \norm{\pi_w(a|s) - \pi_{w'}(a|s)} &\leq L_\pi \norm{w - w'}.
  \end{align}
\end{assumption}
\begin{assumption}
  \label{assu mu uniform ergodicity}
  (Uniform ergodicity)
  Let $\bar \Lambda_\pi$ be the closure of $\qty{\pi_w \mid w \in \R^\nsa}$.
  For any $\pi \in \bar \Lambda_\pi$,
  the chain induced by $\pi$ is ergodic and $\pi(a|s) > 0$.
\end{assumption}
\begin{assumption}
  \label{assu full rank}
  (Linear independence)
  The feature matrix $X$ has full column rank.
\end{assumption}
Assumption~\ref{assu lipschitz mu} is also used in \citet{perkins2003convergent,melo2008analysis,zou2019finite}.
As noted by \citet{zhang2021global},
Assumption~\ref{assu mu uniform ergodicity} is easy to fulfill especially
when the chain induced by a uniformly random policy is ergodic,
which we believe is a fairly weak assumption.
An example policy satisfying those two assumptions is the $\epsilon$-softmax policy in~\eqref{eq epsilon softmax policy} with any $\epsilon \in (0, 1]$,
provided that the chain induced by a uniformly random policy is ergodic.
Assumption~\ref{assu full rank} is standard in the literature regarding RL with linear function approximation (see, e.g., \citet{tsitsiklis1997analysis}).

As discussed in~\eqref{eq source of error function},
if SARSA, as well as expected SARSA, converged to some vector $w_*$,
that vector would verify
\begin{align}
  w_* = A_{\pi_{w_*}}^{-1} b_{\pi_{w_*}}.
\end{align}
This inspires us to define the error function
\begin{align}
  &e(w) \\
  \textstyle \doteq& \norm{w - \left(X^\top D_{\pi_{\Gamma(w)}} \left(\gamma P_{\pi_{\Gamma(w)}} - I\right)X\right)^{-1}X^\top D_{\pi_{\Gamma(w)} }r}^2
\end{align}
to study the behavior of SARSA.
Here we have included the projection operator $\Gamma$ in the definition of $e(w)$ because
we use a finite $C_\Gamma$ in
Algorithm~\ref{alg sarsa lambda}.
\begin{restatable}{theorem}{propcriticconvergence}
  \label{thm sarsa convergence}
  Let Assumptions~\ref{assu sa lr} and~\ref{assu lipschitz mu} -~\ref{assu full rank} hold.
  Assume $\norm{X} = 1$ and $r_{max}$ is not so large such that
  \begin{align}
    L_w \doteq \fO\left(L_\pi r_{max}\right) < 1.
  \end{align}
  Assume $C_\Gamma$ is large enough such that 
  \begin{align}
    U_w \doteq \fO\left(r_{max}\right) \leq C_\Gamma.
  \end{align}
  Let $t_0$ be sufficiently large.
  Then the iterates $\qty{w_t}$ generated by Algorithm~\ref{alg sarsa lambda} satisfy
  \begin{align}
    &\E\left[\norm{w_t - w_*}\right] = \frac{6\sqrt{2}L_w \left(1+4C_\Gamma\right)}{\eta (1-L_w)} \\
    &+ \begin{cases} 
    \fO\left(t^{-\frac{\eta c_\alpha}{6}} \log t\right), & \epsilon_\alpha = 1, \eta c_\alpha \in (0, 3) \\
    \fO\left(t^{-\frac{1}{2}} \log^{\frac{3}{2}}t \right), & \epsilon_\alpha = 1, \eta c_\alpha = 3 \\
    \fO\left(t^{-\frac{1}{2}} \log t\right), & \epsilon_\alpha = 1, \eta c_\alpha \in (3, \infty) \\
    \fO\left(t^{-\frac{\epsilon_\alpha}{2}} \log t\right), & \epsilon_\alpha \in (0, 1)
    \end{cases},
  \end{align}
  where $\eta$ is a positive constant and $w_*$ is the unique vector such that $e(w_*) = 0$.
\end{restatable}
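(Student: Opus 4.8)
The plan is to recognize SARSA (and expected SARSA) as an instance of the stochastic approximation recursion~\eqref{eq sa iterates} and then invoke Corollary~\ref{thm sa single weight}. I would take the driving chain to be the augmented tuple $Y_t \doteq (S_t, A_t, S_{t+1}, A_{t+1})$ on the finite space $\fY \doteq (\fS\times\fA)^2$, set $\theta_t \doteq w_t$, let $P_\theta$ be the transition matrix of $\qty{Y_t}$ under the policy $\pi_{\Gamma(\theta)}$, and define, for $y = (s,a,s',a')$, $F_\theta(w,y) \doteq w + \big(r(s,a) + \gamma x(s',a')^\top w - x(s,a)^\top w\big)x(s,a)$, so that $F_{\theta_t}(w_t, Y_t) - w_t = \delta_t x_t$ and~\eqref{eq sa iterates} becomes exactly Algorithm~\ref{alg sarsa lambda}. (Expected SARSA is handled the same way, replacing $x(s',a')^\top w$ by $\sum_{a'} \pi_{\Gamma(\theta)}(a'|s') x(s',a')^\top w$ and dropping $A_{t+1}$ from $Y_t$.) The index convention of Assumption~\ref{assu makovian} is met because the last action recorded in $Y_t$ was drawn from $\pi_{w_t}$, so $Y_{t+1}\mid Y_t \sim P_{\theta_{t+1}}(Y_t,\cdot)$. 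Since every $\theta$-dependent object is built through $\pi_{\Gamma(\theta)}$, Assumption~\ref{assu projection}(ii) holds by construction --- this is exactly why $\Gamma$ was put inside $e(\cdot)$ --- and Assumption~\ref{assu projection}(i) is precisely the two hypotheses $\norm{w_0}\le C_\Gamma$ and $U_w\le C_\Gamma$.

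Next I would verify the remaining assumptions. Assumption~\ref{assu makovian} is immediate, and Assumption~\ref{assu uniform ergodicity} follows from Assumption~\ref{assu mu uniform ergodicity}: ergodicity of the base chain together with $\pi(a|s)>0$ lifts to ergodicity of the augmented chain, uniformly over the compact closure $\bar\Lambda_\pi$. For Assumption~\ref{assu uniform contraction}, a direct computation using~\eqref{eq source of error function} gives $\bar F_\theta(w) = w + A_{\pi_{\Gamma(\theta)}}w + b_{\pi_{\Gamma(\theta)}}$, whose unique fixed point $w^*_\theta$ is exactly the TD/SARSA fixed point subtracted in the definition of $e$, i.e.\ $e(w) = \norm{w - w^*_{\Gamma(w)}}^2$; this fixed point is well defined because, by Assumption~\ref{assu full rank} and the classical estimate $v^\top A_\pi v \le -(1-\gamma)\,d_{\min}\,\sigma_{\min}(X)^2\norm{v}^2$, the matrix $A_\pi$ is negative definite uniformly over $\bar\Lambda_\pi$. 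Moreover $f^\alpha_\theta(w) - w^*_\theta = (I + \alpha A_{\pi_{\Gamma(\theta)}})(w-w^*_\theta)$ and $\norm{I+\alpha A_\pi}^2 \le 1 - \eta\alpha$ for all sufficiently small $\alpha$ and a uniform $\eta > 0$, so $\kappa_\alpha = \sqrt{1-\eta\alpha}$, exactly the form required by Corollary~\ref{thm sa single weight}. For Assumption~\ref{assu regularization}, parts (i)--(iii) are elementary with $L_F = (1+\gamma)x_{max}^2$, $U_F = r_{max}x_{max}$, $L'_F = 0$ for SARSA, using $x_{max}\le\norm{X}=1$; parts (iv)--(vii) follow from Lipschitz continuity of $\pi\mapsto(P_\pi,D_\pi)$ on $\bar\Lambda_\pi$ ($P_\pi$ is linear in $\pi$, and $\pi\mapsto d_\pi$ is Lipschitz by perturbation theory for uniformly ergodic chains) composed with the $L_\pi$-Lipschitz map $w\mapsto\pi_w$. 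The delicate bookkeeping is in (v)--(vi): from $w^*_\theta - w^*_{\theta'} = -A_\theta^{-1}\big((b_\theta - b_{\theta'}) + (A_\theta - A_{\theta'})w^*_{\theta'}\big)$ (writing $A_\theta \doteq A_{\pi_{\Gamma(\theta)}}$, $b_\theta\doteq b_{\pi_{\Gamma(\theta)}}$), the uniform bound $\norm{A_\theta^{-1}}\le 1/\eta$, the fact that $r$ is a factor of $b_\pi$ so $\norm{b_\theta} = \fO(r_{max})$ and $\norm{b_\theta - b_{\theta'}} = \fO(L_\pi r_{max})\norm{\theta-\theta'}$, and $\norm{A_\theta - A_{\theta'}} = \fO(L_\pi)\norm{\theta-\theta'}$, one gets $L_w = \fO(L_\pi r_{max})$ and $U_w = \sup_\theta\norm{w^*_\theta} = \fO(r_{max})$, matching the theorem's hypotheses. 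Assumption~\ref{assu sa lr} is hypothesized.

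With all assumptions in place, Corollary~\ref{thm sa single weight} gives $\E[\norm{w_t - w^*_{w_t}}^2] = \frac{72 L_w^2 L_\theta^2}{\eta^2} + (\text{vanishing terms})$, and the SARSA-specific constants give $L_\theta = U_F + (L_F+1)C_\Gamma \le 1 + 4C_\Gamma$. It remains to replace $w^*_{w_t}$ by the claimed $w_*$. I would obtain $w_*$ from Banach's theorem: $g(w)\doteq w^*_{\Gamma(w)}$ is $L_w$-Lipschitz with $L_w<1$ and maps the closed $C_\Gamma$-ball into the $U_w$-ball $\subseteq C_\Gamma$-ball, hence has a unique fixed point $w_*$ there, which (lying inside the ball) satisfies $\Gamma(w_*)=w_*$ and $e(w_*)=0$; no solution of $e(w)=0$ can lie outside the ball since $\norm{g(w)}\le C_\Gamma$ always, giving uniqueness. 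Since $w_t = \Gamma(u_t)$ lies in the ball and $w_* = w^*_{w_*}$, the triangle inequality and Assumption~\ref{assu regularization}(v) give $\norm{w_t - w_*} \le \norm{w_t - w^*_{w_t}} + L_w\norm{w_t - w_*}$, i.e.\ $\norm{w_t - w_*}\le \frac{1}{1-L_w}\norm{w_t - w^*_{w_t}}$. Taking expectations, applying Jensen's inequality, and using $\sqrt{a+b}\le\sqrt a+\sqrt b$ with $\sqrt{72}=6\sqrt2$ converts the $\ell_2$ bound into the stated bound on $\E[\norm{w_t - w_*}]$, each square root sending $\fO(t^{-a}\log^b t)$ to $\fO(t^{-a/2}\log^{b/2}t)$ exactly as displayed.

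I expect the main obstacle to be Assumption~\ref{assu uniform contraction} together with parts (v)--(vi) of Assumption~\ref{assu regularization}. One must (a) upgrade the textbook negative definiteness of a \emph{single} $A_\pi$ to the uniform bound $\norm{I+\alpha A_\pi}^2 \le 1-\eta\alpha$ over the closure $\bar\Lambda_\pi$, which needs compactness plus continuity of $\pi\mapsto A_\pi$ down to the boundary where some $\pi(a|s)$ might vanish --- here Assumption~\ref{assu mu uniform ergodicity}, which keeps $\bar\Lambda_\pi$ strictly inside the ergodic regime, is essential; and (b) track the constants carefully enough that the $r_{max}$ and $L_\pi$ factors land in exactly the right places to give $L_w = \fO(L_\pi r_{max})$ with no spurious $C_\Gamma$ (unlike~\eqref{eq zou matrix}). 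It is precisely this absence of a $C_\Gamma$ factor in $L_w$ that makes the limiting region $\frac{6\sqrt2 L_w(1+4C_\Gamma)}{\eta(1-L_w)}$ much smaller than $C_\Gamma$ when $r_{max}$ is moderate; everything else is routine on top of Corollary~\ref{thm sa single weight}.
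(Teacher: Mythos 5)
Your proposal is correct and follows essentially the same route as the paper: cast linear SARSA as the stochastic approximation scheme of Section~\ref{sec sa} with $Y_t=(S_t,A_t,S_{t+1},A_{t+1})$ and $\theta_t=w_t$, verify Assumptions~\ref{assu makovian}--\ref{assu sa lr} (with $L_F$, $U_F$, $L_w=\fO(L_\pi r_{max})$, $U_w=\fO(r_{max})$, and $\kappa_\alpha=\sqrt{1-\eta\alpha}$ exactly as in the paper's verification), invoke Corollary~\ref{thm sa single weight}, bound $L_\theta\le 1+4C_\Gamma$, and convert to $\E[\norm{w_t-w_*}]$ via Jensen, the triangle inequality, and the factor $\frac{1}{1-L_w}$. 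Your only deviations are cosmetic: you obtain the pseudo-contraction and the uniform bound $\norm{A_\theta^{-1}}\le 1/\eta$ by direct quadratic-form estimates rather than citing Lemma~\ref{lem pseudo contraction} and the extreme value theorem, and you make explicit (via Banach's fixed point theorem) the existence and uniqueness of $w_*$, which the paper leaves implicit.
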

\noindent
We prove Theorem~\ref{thm sarsa convergence} mainly by invoking Corollary~\ref{thm sa single weight}.
See Section~\ref{sec proof thm sarsa convergence} for more details.
The exact expressions of $L_w, U_w, \eta$ are detailed in~\eqref{eq exact lw},~\eqref{eq exact uw},~\eqref{eq exact eta} in the proof,
all of which are independent of $C_\Gamma$.

\paragraph*{Significance of Theorem~\ref{thm sarsa convergence}.}

Theorem~\ref{thm sarsa convergence} ensures that the iterates $\qty{w_t}$ converge, in expectation,
to a ball of size
\begin{align}
    \textstyle R_* \doteq \frac{6\sqrt{2}L_w \left(1+4C_\Gamma\right)}{\eta (1-L_w)},
\end{align}
centered at $w_*$.
Due to the use of projection,
one can also trivially claim that the iterates always stay in a ball of size $2C_\Gamma$, centered at $w_*$.
Thus Theorem~\ref{thm sarsa convergence} is informative only if
\begin{align}
    \label{eq informative}
    \textstyle \frac{R_*}{2C_\Gamma} = \frac{24\sqrt{2}L_w }{\eta (1-L_w)} \times \frac{1+4C_\Gamma}{4C_\Gamma} < 1.
\end{align}
The term $\frac{1+4C_\Gamma}{4C_\Gamma}$ monotonically decreases when the size of the ball for projection increases and eventually converges to $1$.
So this ratio is essentially determined by the first term $\frac{24\sqrt{2}L_w }{\eta (1-L_w)}$,
which can be arbitrarily small as long as $L_w$ is small.
In other words,
suppose 
$L_w$ is small enough,
no matter how large the ball used for projection is (practitioners usually use a very large ball for projection),
the iterates generated by linear SARSA asymptotically only visit a small portion of the ball.
The exact portion is determined by the relative magnitude of $L_w$ and other properties of the problem and can be arbitrarily small when $L_w$ is small enough.
Here we want to compare with $Q$-learning \citep{watkins1989learning} with linear function approximation.
As demonstrated in Baird's counterexample \citep{baird1995residual},
the $Q$-learning iterates can diverge to infinity.
This essentially means that if we apply a ball for projection in linear $Q$-learning,
the iterates can asymptotically visit every part of the ball.
By contrast,
Theorem~\ref{thm sarsa convergence} proves that the iterates generated by linear SARSA visit asymptotically only a possibly small portion of the ball,
on some problems.
In those problems,
Theorem~\ref{thm sarsa convergence},
to our best knowledge, 
is the first to characterize the fundamentally different behaviors between linear SARSA and linear $Q$-learning.
We regard this as the most important contribution of this work.
This difference demonstrates the challenge in off-policy learning compared with on-policy learning.

\paragraph*{Limitation of Theorem~\ref{thm sarsa convergence}.}
That being said,
there are a few things that Theorem~\ref{thm sarsa convergence} does not offer.

First, Theorem~\ref{thm sarsa convergence} is solely about the magnitude of the iterates and is not about the performance of the corresponding policy.
In other words,
this work does \emph{not} fully address the question raised by \citet{sutton1999open}.
Addressing that question requires to understand (a) how linear SARSA iterates behave asymptotically and (b) why such behavior generates good performance.
This work contributes to the former but does \emph{not} contribute to the latter.

Second, 
the ball of size $R_*$ specified in Theorem~\ref{thm sarsa convergence} is small only when compared with the the ball for projection on some problems.
If we compare $R_*$ with some other quantities of the problem,
it is indeed very large.
On some problems where $L_w$ is not small enough, 
the ball is also quite large.
In other words,
even for (a),
we only address it in some problems with a very coarse bound.
On problems where $L_w$ does not meet our condition,
Theorem~\ref{thm sarsa convergence} simply does not apply.
Theorem~\ref{thm sarsa convergence} is \emph{not} meant to be a general result that applies to all problems.
We, however, argue that this is still, to our knowledge, the best result regarding the question in \citet{sutton1999open}.

Third,
even if we step back to the tabular setting,
Theorem~\ref{thm sarsa convergence} is still convergence to only a ball instead of a fixed point.
This indicates the fundamental limit of the techniques employed therein.
This might seem disappointing at first glance but less so when taking a holistic view of the history of SARSA. 
The best result regarding tabular SARSA for control, 
to our knowledge, is \citet{singh2000convergence}, 
which proves that if the $\epsilon$ decays properly in the $\epsilon$-greedy policy, 
tabular SARSA converges to an optimal policy. 
That being said, to implement the decay, 
it is required to maintain a counter of the state-action visitations and the policy itself is non-Markovian in that it is a function of the current time step. 
When it comes to the canonical Markovian policies that depend only on current states, 
we still know nothing about the behavior of tabular SARSA for control.

\section{Related Work}

\paragraph*{Comparison with \citet{zou2019finite}.}
The most similar result to our work is \citet{zou2019finite}.
Assuming $L_\pi$ is small enough,
\citet{zou2019finite} give a finite sample analysis of the convergence of $\qty{w_t}$ to $w_*$.
Theorem~\ref{thm sarsa convergence} requires $L_w$ to be sufficiently small.
This holds when either $L_\pi$ or $r_{max}$ is sufficiently small.
In the case where $L_\pi$ is sufficiently small,
our result is indeed weaker than \citet{zou2019finite} because \citet{zou2019finite} show convergence to a point while we show convergence to only a region. 
Thus the only scenario that our result is preferred over \citet{zou2019finite} is when $L_\pi$ is large but $r_{max}$ is small.
In this scenario,
our result still apply but \citet{zou2019finite} do \emph{not} apply.
The reason is that in Theorem~\ref{thm sarsa convergence},
our condition is related to the product $L_\pi r_{max}$.
So the role of $L_\pi$ and $r_{max}$ is interchangeable and a small $L_\pi$ is only a sufficient condition for the product to be small.
However, 
\citet{zou2019finite}
essentially consider a condition
in the form of $L_\pi r_{max} + L_\pi C_\Gamma$ (see \eqref{eq zou matrix} or Assumption 2 in \citet{zou2019finite}).
For this summation to be sufficiently small,
a small $L_\pi$ is a necessary condition.
This means although our convergence is weaker than \citet{zou2019finite},
our result applies to much more problems than \citet{zou2019finite}.
In particular,
in the case where $L_\pi$ is large but $r_{max}$ is small,
our result applies but \citet{zou2019finite} do not.
More importantly,
due to the existence of $L_\pi C_\Gamma$,
$L_\pi$ in \citet{zou2019finite} is at most the order of $\fO\left(\frac{1}{C_\Gamma}\right)$.
Since practitioners usually use a very large, possibly infinite, ball for projection,
the $L_\pi$ in \citet{zou2019finite} has to be really small.
By contrast, our $L_w = \fO\left(L_\pi r_{max}\right)$ does \emph{not} have any dependence on $C_\Gamma$.
In other words, 
if $C_\Gamma$ is really large,
our result applies to much more $L_\pi$ than \citet{zou2019finite}.
We regard the broader settings as an improvement, and thus a contribution,
over \citet{zou2019finite}. 
To our best knowledge,
we do not know how to make \citet{zou2019finite} work in such broader settings.


A concurrent work \citep{gopalan2022approximate} prove that the iterates in their linear SARSA is bounded almost surely.
In particular,
they do not have projection but their result is only asymptotic without finite sample analysis.
The most significant restriction is that they require i.i.d. samples,
i.e.,
at each time step, the state $S_t$ is assumed to be sampled from the stationary distribution $d_{\pi_{w_t}}$ of the current policy $\pi_{w_t}$.
Since the policy $w_t$ changes rapidly every time step,
we argue that such i.i.d. samples are hard to obtain in practice.

Our results regarding the finite sample analysis of the general stochastic approximation algorithm in Section~\ref{sec sa} rely on the pseudo contraction property
and follow from \citet{chen2021lyapunov,zhang2021global}.
Another family of convergent results regarding stochastic approximation algorithms
is usually based on the analysis of the corresponding ordinary differential equations
(see, e.g., \citet{DBLP:books/sp/BenvenisteMP90,kushner2003stochastic,borkar2009stochastic}).
See \citet{chen2021lyapunov} and references therein for more details. 

SARSA is an extension of TD for control.
The convergence of linear TD,
which aims at estimating the value of a \emph{fixed} policy,
is an active research area,
see,
e.g.,
\citet{tsitsiklis1997analysis,dalal2018finiteaaai,lakshminarayanan2018linear,bhandari2018finite,srikant2019finite}.
Analyzing SARSA is more challenging than TD because the policy SARSA considers \emph{changes} every step. 

SARSA is an incremental and stochastic way to implement approximate policy iteration.
Other variants of approximate policy iteration include \citet{lagoudakis2003least,antos2008learning,farahmand2010error,lazaric2012finite,lazaric2016analysis}.

\section{Experiments}
\begin{figure}[h]
  \centering
  \includegraphics[width=0.3\textwidth]{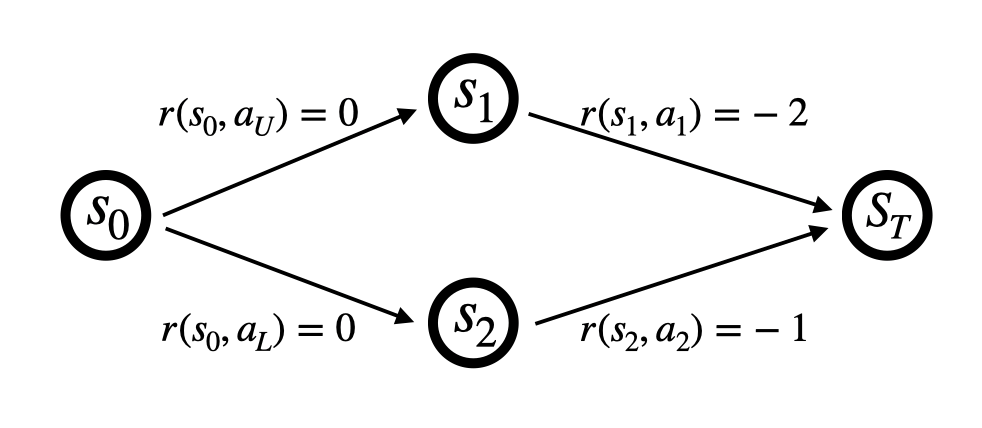}
  \caption{\label{fig mdp} 
  A diagnostic MDP from \citet{gordon1996chattering}. 
  The state $s_0$ is the initial state with two actions $a_U$ and $a_L$ available,
  both of which yield a 0 reward.
  At $s_1$,
  only one action $a_1$ is available,
  which yields a reward -2.
  At $s_2$,
  the action $a_2$ yields a reward -1.
  Both $a_1$ and $a_2$ leads to the terminal state $S_T$.}
\end{figure}
We use a diagnostic MDP from \citet{gordon1996chattering} (Figure~\ref{fig mdp}) to illustrate the chattering of linear SARSA.
\citet{gordon1996chattering} tested the $\epsilon$-greedy policy \eqref{eq epsilon greedy policy},
which is not continuous.
We further test the $\epsilon$-softmax policy~\eqref{eq epsilon softmax policy},
whose Lipschitz constant is inversely proportional to the temperature $\iota$.
When $\iota$ approaches $0$,
the $\epsilon$-softmax policy approaches the $\epsilon$-greedy policy.
We run Algorithm~\ref{alg sarsa lambda} in this MDP
with $C_\Gamma = \infty$,
i.e.,
there is no projection.
Following \citet{gordon1996chattering},
we set $\epsilon = 0.1, \gamma = 1.0$, and $\alpha_t = 0.01 \, \forall t$.
As discussed in \citet{gordon1996chattering},
using a smaller discount factor or a decaying learning rate only
slows down the chattering
but the chattering always occurs.
Following \citet{gordon1996chattering},
we use the following feature function:
\begin{align}
  x(s_0, a_U) &= [1, 0, 0]^\top, \, x(s_0, a_L) = [0, 1, 0]^\top, \\
  x(s_1, a_1) &= x(s_2, a_2) = [0, 0, 1]^\top.
\end{align}
In other words, 
it is essentially state aggregation.

\begin{figure}
  \centering
  \includegraphics[width=0.3\textwidth]{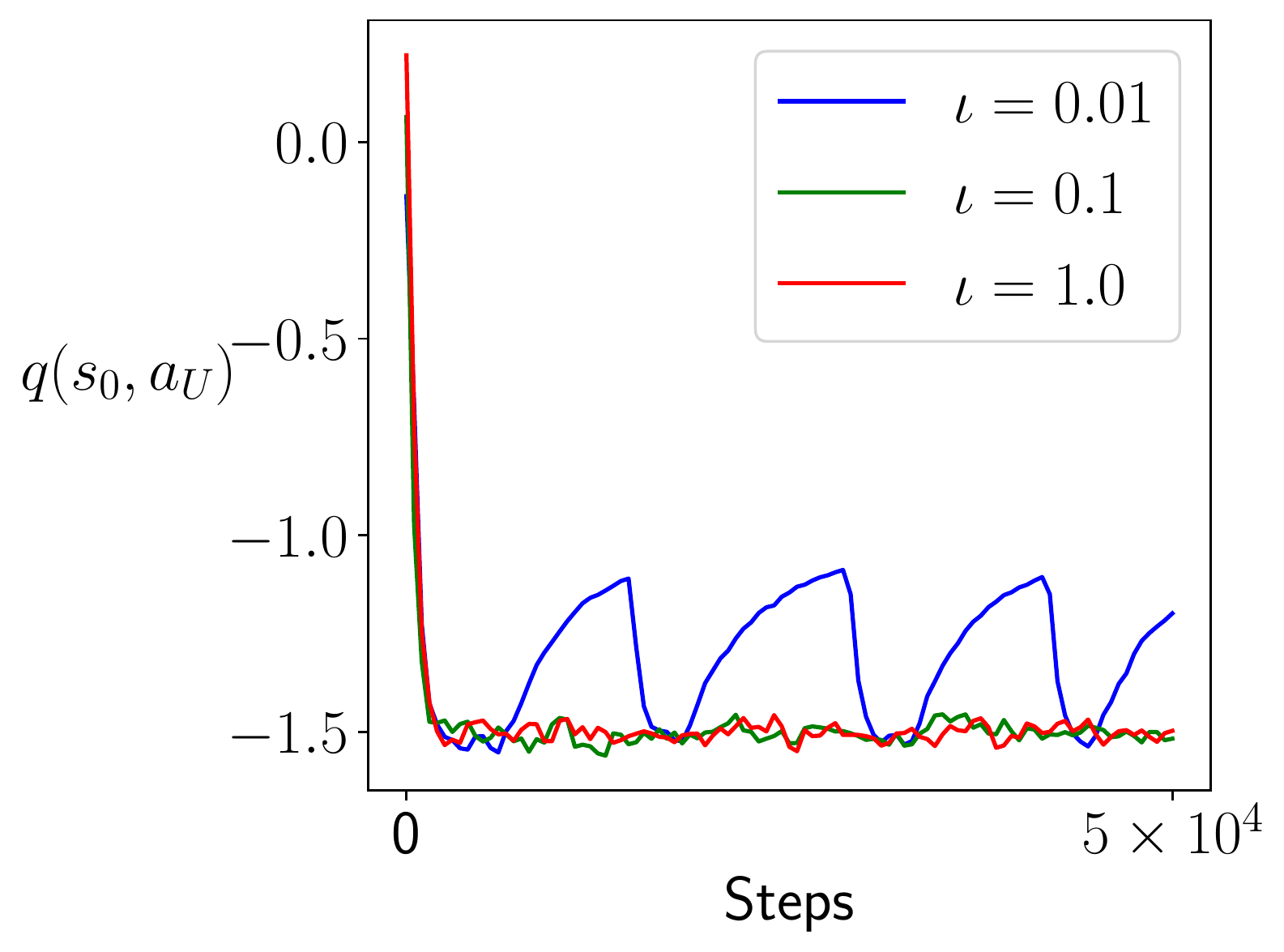}
  \caption{\label{fig chain0}
  The action value of $a_U$ during training under different temperatures.}
\end{figure}

\begin{figure}
  \centering
  \includegraphics[width=0.3\textwidth]{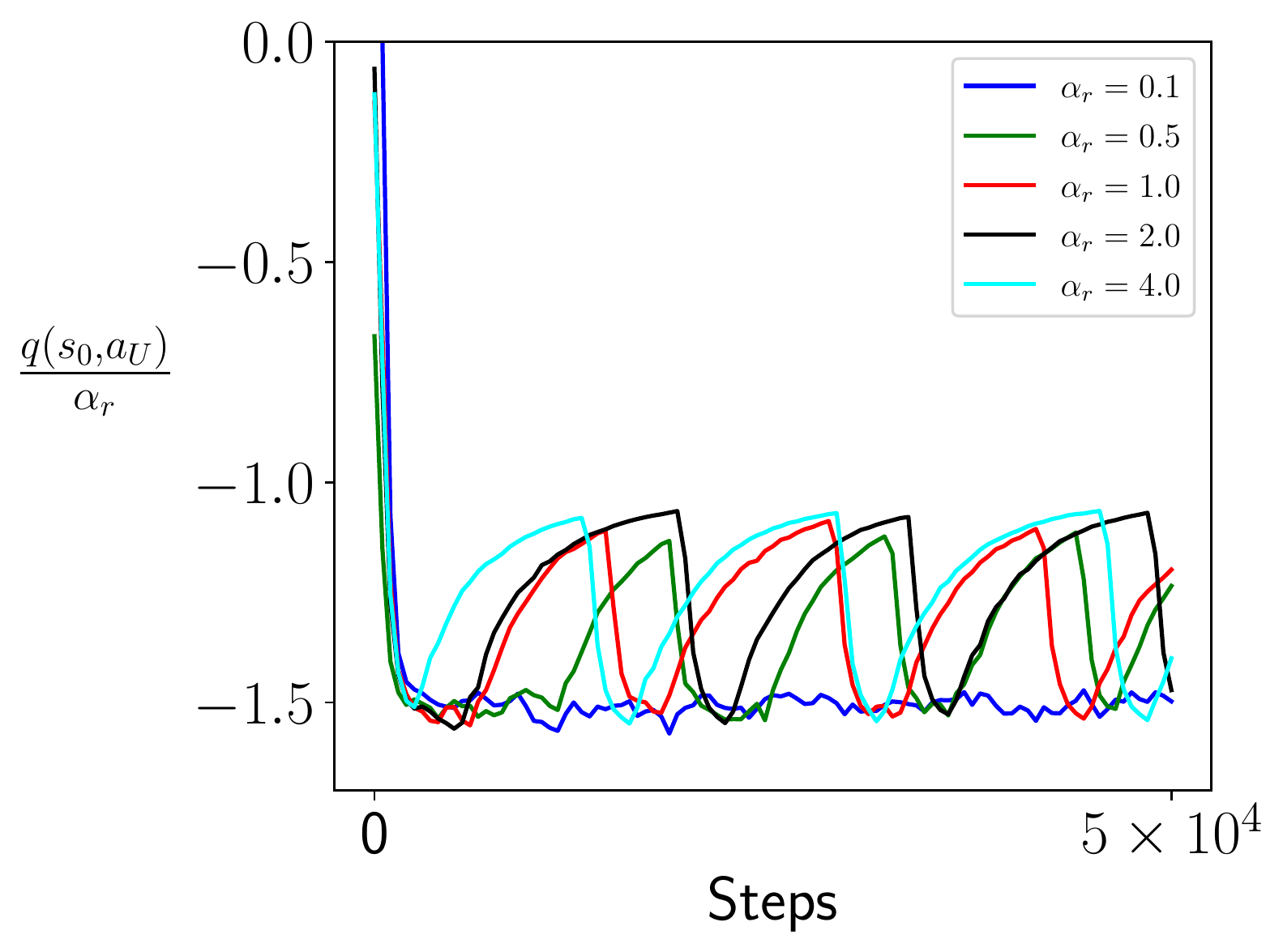}
  \caption{\label{fig chain1}
  The $\alpha_r$-normalized action value of $a_U$ during training with a fixed temperature $\iota = 0.01$.
  The reward of the MDP in Figure~\ref{fig mdp} is scaled via $\alpha_r$,
  e.g.,
  the reward for the action $a_1$ is now $-2\alpha_r$.
  }
\end{figure}

As shown in Figure~\ref{fig chain0},
when the temperature is small (i.e., $\iota = 0.01$),
linear SARSA chatters.
We further fix $\iota$ to be $0.01$ and 
test reward with different magnitudes. 
To this end, we multiply the reward with a multiplier $\alpha_r$.
We stress that this is just a simple way to get MDPs with rewards of different magnitudes.
It does \emph{not} mean that one should artificially scale the reward down when Theorem~\ref{thm sarsa convergence} does not apply.
As shown in Figure~\ref{fig chain1},
the chattering behavior disappears with $\alpha_r = 0.1$.
This suggests that our results might be improved such that when the magnitude of the rewards is small enough 
we can also achieve convergence to a fixed point,
instead of a bounded region.
We,
however,
leave this for future work.
When we set $\alpha_r=4.0$,
the iterates still only chatter but do not diverge.
This suggests that our requirement for $r_{max}$ might be only sufficient and not necessary.
We,
however,
leave the development of a necessary condition for future work.
All the curves in Figures~\ref{fig chain0} and~\ref{fig chain1} are from a single run.
Due to the randomness in the policy and the initialization of the weight, we find the peaks and valleys can sometimes average each other out
when we average over multiple runs. 

\section{Conclusion}
The behavior of linear SARSA is a long-standing open problem in the RL community.
Despite the progress made in this work,
there are still many open problems regarding the behavior of linear SARSA.
To name a few:
how does linear SARSA behave if the policy improvement operator is merely continuous but not Lipschitz continuous?
How does linear SARSA behave if both the Lipschitz constant and the magnitude of the rewards are not small?
Can we get a convergence rate without using any projection?
We hope this work can draw more attention to 
the convergence of linear SARSA,
arguably one of the most fundamental RL algorithms.

\section*{Acknowledgements}
We thank Shimon Whiteson and Nicolas Le Roux for their insightful comments. 
SZ is part of the Link Lab at the University of Virginia. RTdC and RL were part of MSR Montreal when this work was done.

\bibliography{ref.bib}
\bibliographystyle{icml2023}

\newpage
\appendix
\onecolumn

\section{Proofs of Section~\ref{sec sa}}
\subsection{Proof of Theorem \ref{thm sa convergence}}
\label{sec proof thm sa convergence}

\thmsaconvergence*
\begin{proof}
  We consider a Lyapunov function
  \begin{align}
    M(x) \doteq \frac{1}{2} \norm{x}^2.
  \end{align}
  It is well-known that for any $x, x'$,
  \begin{align}
      M(x') \leq M(x) + \indot{\nabla M(x)}{x' - x} + \frac{1}{2} \norm{x - x'}^2.
  \end{align}
  Using $x' = u_{t+1} - w^*_{\theta_{t+1}}$ and $x = \Gamma(u_t) - w^*_{\theta_t}$ in the above inequality and recalling the update~\eqref{eq sa iterates transformed}
  \begin{align}
      u_{t+1} =& \Gamma(u_t) + \alpha_t (F_{\theta_t}(\Gamma(u_t), Y_t) - \Gamma(u_t)) \\
      =& f_{\theta_t}^{\alpha_t}(\Gamma(u_t)) + \alpha_t \left(F_{\theta_t}(\Gamma(u_t), Y_t) - \bar F_{\theta_t}(\Gamma(u_t))\right)
  \end{align} 
  yield
  \begin{align}
    \label{eq expansion of M}
    &\frac{1}{2} \norm{u_{t+1} - w^*_{\theta_{t+1}}}^2 \\
    \leq& \frac{1}{2}\norm{\Gamma(u_t) - w^*_{\theta_t}}^2 + \indot{\Gamma(u_t) - w^*_{\theta_t}}{u_{t+1} - \Gamma(u_t) + w^*_{\theta_t} - w^*_{\theta_{t+1}}} \\
    &+ \frac{1}{2} \norm{u_{t+1} - \Gamma(u_t) + w^*_{\theta_t} - w^*_{\theta_{t+1}}}^2 \\
    =&\frac{1}{2}\norm{\Gamma(u_t) - w^*_{\theta_t}}^2 \\
    &+ \underbrace{\indot{\Gamma(u_t) - w^*_{\theta_t}}{w^*_{\theta_t} - w^*_{\theta_{t+1}}}}_{T_1} \\
    &+ \underbrace{\indot{\Gamma(u_t) - w^*_{\theta_t}}{f^{\alpha_t}_{\theta_t}(\Gamma(u_t)) - \Gamma(u_t)}}_{T_2} \\
    &+ \alpha_t \underbrace{\indot{\Gamma(u_t) - w^*_{\theta_t}}{F_{\theta_t}(\Gamma(u_t), Y_t) - \bar F_{\theta_t}(\Gamma(u_t))} }_{T_3} \\
    &+ \alpha_t^2\underbrace{\norm{F_{\theta_t}(\Gamma(u_t), Y_t) - \Gamma(u_t)}^2}_{T_5} \\
    &+ \underbrace{\norm{w^*_{\theta_t} - w^*_{\theta_{t+1}}}^2}_{T_6}.
  \end{align}
Here we do not have $T_4$ because the counterpart in \citet{zhang2021global} is now 0.
To further decompose $T_3$,
we define a function $\tau_\alpha$ of $\alpha$ as
\begin{align}
  \label{eq definition of tau alpha t}
    \tau_{\alpha} \doteq \min\qty{n \geq 0 \mid C_M \tau^n \leq \alpha},
\end{align}
where the constants $C_M$ and $\tau$ are given in Lemma~\ref{lem uniform mixing}.
In particular, 
$\tau_{\alpha_t}$ denotes the number of steps the chain needs to mix to an accuracy of $\alpha_t$.
It is easy to see 
\begin{align}
  \label{eq lr order}
  \tau_{\alpha} = \fO\left(-\log \alpha\right), \, \lim_{\alpha\to 0} \alpha\tau_\alpha = 0.
\end{align}
  We now decompose $T_3$ as
  \begin{align}
      T_3 =& \indot{\Gamma(u_t) - w^*_{\theta_t}}{F_{\theta_t}(\Gamma(u_t), Y_t) - \bar F_{\theta_t}(\Gamma(u_t))} \\
      =&\underbrace{\indot{\Gamma(u_t) - w^*_{\theta_t} - \left(\Gamma(u_{t-\tau_{\alpha_t}}) - w^*_{\theta_{t-\tau_{\alpha_t}}}\right) }{F_{\theta_t}(\Gamma(u_t), Y_t) - \bar F_{\theta_t}(\Gamma(u_t))}}_{T_{31}} \\
      &+\underbrace{\indot{\Gamma(u_{t-\tau_{\alpha_t}}) - w^*_{\theta_{t-\tau_{\alpha_t}}}}{F_{\theta_t}(\Gamma(u_t), Y_t) - F_{\theta_t}(\Gamma(u_{t- \tau_{\alpha_t}}), Y_t) + \bar F_{\theta_t}(\Gamma(u_{t- \tau_{\alpha_t}})) - \bar F_{\theta_t}(\Gamma(u_t))}}_{T_{32}}\\
      &+\underbrace{\indot{\Gamma(u_{t-\tau_{\alpha_t}}) - w^*_{\theta_{t-\tau_{\alpha_t}}}}{F_{\theta_t}(\Gamma(u_{t- \tau_{\alpha_t}}), Y_t) - \bar F_{\theta_t}(\Gamma(u_{t- \tau_{\alpha_t}}))}}_{T_{33}}.
  \end{align}
  We further decompose $T_{33}$ as
  \begin{align}
    T_{33} =& \indot{\Gamma(u_{t-\tau_{\alpha_t}}) - w^*_{\theta_{t-\tau_{\alpha_t}}}}{F_{\theta_t}(\Gamma(u_{t- \tau_{\alpha_t}}), Y_t) - \bar F_{\theta_t}(\Gamma(u_{t- \tau_{\alpha_t}}))} \\
    =& \underbrace{\indot{\Gamma(u_{t-\tau_{\alpha_t}}) - w^*_{\theta_{t-\tau_{\alpha_t}}}}{F_{\theta_{t - \tau_{\alpha_t}}}(\Gamma(u_{t- \tau_{\alpha_t}}), \tilde Y_t) - \bar F_{\theta_{t-\tau_{\alpha_t}}}(\Gamma(u_{t- \tau_{\alpha_t}}))}}_{T_{331}} + \\
    & \underbrace{\indot{\Gamma(u_{t-\tau_{\alpha_t}}) - w^*_{\theta_{t-\tau_{\alpha_t}}}}{F_{\theta_{t - \tau_{\alpha_t}}}(\Gamma(u_{t- \tau_{\alpha_t}}), Y_t) -F_{\theta_{t - \tau_{\alpha_t}}}(\Gamma(u_{t- \tau_{\alpha_t}}), \tilde Y_t)}}_{T_{332}} + \\
    & \underbrace{\indot{\Gamma(u_{t-\tau_{\alpha_t}}) - w^*_{\theta_{t-\tau_{\alpha_t}}}}{F_{\theta_{t}}(\Gamma(u_{t- \tau_{\alpha_t}}), Y_t) -F_{\theta_{t - \tau_{\alpha_t}}}(\Gamma(u_{t- \tau_{\alpha_t}}), Y_t)}}_{T_{333}} + \\
    & \underbrace{\indot{\Gamma(u_{t-\tau_{\alpha_t}}) - w^*_{\theta_{t-\tau_{\alpha_t}}}}{\bar F_{\theta_{t-\tau_{\alpha_t}}}(\Gamma(u_{t- \tau_{\alpha_t}})) - \bar F_{\theta_{t}}(\Gamma(u_{t- \tau_{\alpha_t}}))}}_{T_{334}}.
  \end{align}
  Here $\qty{\tilde Y_t}$ is an auxiliary chain inspired from \citet{zou2019finite}.
  Before time $t - \tau_{\alpha_t} - 1$,
  $\qty{\tilde Y_t}$ is exactly the same as $\qty{Y_t}$.
  After time $t - \tau_{\alpha_t} - 1$,
  $\tilde Y_t$ evolves according to the \emph{fixed} kernel $P_{\theta_{t-\tau_{\alpha_t}}}$
  while $Y_t$ evolves according the changing kernel $P_{\theta_{t - \tau_{\alpha_t}}}, P_{\theta_{k - \tau_{\alpha_t} + 1}}, \dots$.
  \begin{align}
    \qty{\tilde Y_t}&: \dots \to Y_{t-\tau_{\alpha_t}-1} \underbrace{\to}_{P_{\theta_{t-\tau_{\alpha_t}}}} Y_{t-\tau_{\alpha_t}} \underbrace{\to}_{P_{\theta_{t-\tau_{\alpha_t}}}} \tilde Y_{t-\tau_{\alpha_t}+1} \underbrace{\to}_{P_{\theta_{t-\tau_{\alpha_t}}}} \tilde Y_{t-\tau_{\alpha_t}+2} \to \dots \\
    \qty{Y_t}&: \dots \to Y_{t-\tau_{\alpha_t}-1} \underbrace{\to}_{P_{\theta_{t-\tau_{\alpha_t}}}} Y_{t-\tau_{\alpha_t}} \underbrace{\to}_{P_{\theta_{t-\tau_{\alpha_t}+1}}} Y_{t-\tau_{\alpha_t}+1} \underbrace{\to}_{P_{\theta_{t-\tau_{\alpha_t}+2}}} Y_{t-\tau_{\alpha_t}+2} \to \dots.
  \end{align}
  
  We are now ready to present bounds for each of the above terms.
  To begin,
  we define some shorthand:
  \begin{align}
    \label{eq shorthand a and b}
    A &\doteq 2 L_F + 1, \quad B \doteq U_F, \quad C \doteq AU_w + B + A + A(1 + U_F' + U_F''), \, \alpha_{i:j} \doteq \sum_{t=i}^j \alpha_t.
  \end{align}
  According to \eqref{eq lr order},
  we can select a sufficiently large $t_0$ such that
  \begin{align}
    \alpha_{t-\tau_{\alpha_t}, t-1} \leq \frac{1}{4A}
  \end{align}
  holds for all $t$.
  This condition is crucial for Lemma~\ref{lem bound of xk diff},
  which plays an important role in the following bounds.
  
  \begin{restatable}{lemma}{lemboundtone}
    \label{lem bound t1}
    (Bound of $T_1$)
    \begin{align}
        T_1 
        \leq L_w L_\theta \alpha_t \norm{\Gamma(u_t) - w^*_{\theta_t}}.
    \end{align}
  \end{restatable}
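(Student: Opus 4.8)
The plan is to bound $T_1 = \indot{\Gamma(u_t) - w^*_{\theta_t}}{w^*_{\theta_t} - w^*_{\theta_{t+1}}}$ by peeling off the inner product with Cauchy--Schwarz and then controlling $\norm{w^*_{\theta_t} - w^*_{\theta_{t+1}}}$ via the Lipschitz continuity of the fixed point map. First I would write
\begin{align}
  T_1 \leq \norm{\Gamma(u_t) - w^*_{\theta_t}} \cdot \norm{w^*_{\theta_t} - w^*_{\theta_{t+1}}}
\end{align}
by Cauchy--Schwarz. Next, by Assumption~\ref{assu regularization}(v) we have $\norm{w^*_{\theta_t} - w^*_{\theta_{t+1}}} \leq L_w \norm{\theta_t - \theta_{t+1}}$, so it remains to bound $\norm{\theta_{t+1} - \theta_t}$, i.e., the one-step movement of the iterate, by $L_\theta \alpha_t$.

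The one-step bound comes directly from the update rule. Since $\theta_t \equiv w_t$ and $w_{t+1} = \Gamma(w_t + \alpha_t(F_{\theta_t}(w_t, Y_t) - w_t))$, and $\Gamma$ is a projection onto a convex ball (hence nonexpansive) with $\Gamma(w_t) = w_t$ because $\norm{w_t} \leq C_\Gamma$, we get
\begin{align}
  \norm{w_{t+1} - w_t} = \norm{\Gamma(w_t + \alpha_t(F_{\theta_t}(w_t, Y_t) - w_t)) - \Gamma(w_t)} \leq \alpha_t \norm{F_{\theta_t}(w_t, Y_t) - w_t}.
\end{align}
Then I would bound $\norm{F_{\theta_t}(w_t, Y_t) - w_t} \leq \norm{F_{\theta_t}(w_t, Y_t) - F_{\theta_t}(0, Y_t)} + \norm{F_{\theta_t}(0, Y_t)} + \norm{w_t}$, which by Assumption~\ref{assu regularization}(i) and (iii) is at most $L_F \norm{w_t} + U_F + \norm{w_t} \leq (L_F + 1)C_\Gamma + U_F = L_\theta$, using $\norm{w_t} = \norm{\Gamma(u_t)} \leq C_\Gamma$ (Lemma~\ref{lem transform} and the definition of $\Gamma$). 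Chaining these gives $\norm{\theta_{t+1} - \theta_t} \leq L_\theta \alpha_t$, and combining with the previous display yields $T_1 \leq L_w L_\theta \alpha_t \norm{\Gamma(u_t) - w^*_{\theta_t}}$, which is exactly the claimed bound.

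This lemma is essentially routine — there is no real obstacle, since everything follows from nonexpansiveness of $\Gamma$, the triangle inequality, and the boundedness/Lipschitz assumptions already in place. The only mild subtlety worth stating carefully is that $\Gamma(w_t) = w_t$: this holds because Assumption~\ref{assu projection}(i) guarantees $\norm{w_0} \leq C_\Gamma$ and the projection keeps all subsequent iterates in the ball, so $w_t$ is always a fixed point of $\Gamma$. With that in hand the argument is a short two-step estimate, and I would present it in that order: Cauchy--Schwarz, then Lipschitz continuity of $w^*_\cdot$, then the one-step increment bound $\norm{\theta_{t+1} - \theta_t} \leq L_\theta \alpha_t$.
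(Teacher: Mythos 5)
Your proof is correct and follows essentially the same route as the paper: Cauchy--Schwarz on the inner product, Lipschitz continuity of $\theta \mapsto w^*_\theta$, and a one-step increment bound $\norm{\theta_{t+1}-\theta_t}\leq L_\theta\alpha_t$ via nonexpansiveness of $\Gamma$ and the bound $\norm{F_{\theta_t}(w_t,Y_t)-w_t}\leq U_F+(L_F+1)C_\Gamma$. The paper simply packages the increment bound as the $j=t+1,\ i=t$ case of its Lemma~\ref{lem accu learning rates} (whose proof is the same nonexpansiveness-plus-boundedness argument you wrote out), so there is no substantive difference.
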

  \noindent
  The proof of Lemma~\ref{lem bound t1} is provided in Section~\ref{sec proof lem bound t1}.
  
  \begin{restatable}{lemma}{lemboundttwo}
    \label{lem bound t2}
  (Bound of $T_2$)
  \begin{align}
    T_2 \leq -(1 - \kappa_{\alpha_t}) \norm{\Gamma(u_t) - w^*_{\theta_t}}^2.
  \end{align}
  \end{restatable}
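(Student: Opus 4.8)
The plan is to expand $T_2$ around the fixed point $w^*_{\theta_t}$ and then cash in the uniform pseudo-contraction property of Assumption~\ref{assu uniform contraction}. First, recall that $T_2 = \indot{\Gamma(u_t) - w^*_{\theta_t}}{f^{\alpha_t}_{\theta_t}(\Gamma(u_t)) - \Gamma(u_t)}$. Since the learning rates have the form $\alpha_t = c_\alpha/(t_0 + t)^{\epsilon_\alpha}$, taking $t_0$ large enough (which is already part of the hypothesis of Theorem~\ref{thm sa convergence}) ensures $\alpha_t < \bar\alpha$ for every $t$, so that $f^{\alpha_t}_{\theta_t}$ is a $\kappa_{\alpha_t}$-pseudo-contraction toward $w^*_{\theta_t}$. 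Note also that $w^*_{\theta_t}$ is a genuine fixed point of $f^{\alpha_t}_{\theta_t}$, because $f^{\alpha_t}_{\theta_t}(w^*_{\theta_t}) = w^*_{\theta_t} + \alpha_t(\bar F_{\theta_t}(w^*_{\theta_t}) - w^*_{\theta_t}) = w^*_{\theta_t}$ by part (i) of Assumption~\ref{assu uniform contraction}.

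Next I would add and subtract $w^*_{\theta_t}$ in the second slot of the inner product:
\begin{align}
  T_2 &= \indot{\Gamma(u_t) - w^*_{\theta_t}}{f^{\alpha_t}_{\theta_t}(\Gamma(u_t)) - w^*_{\theta_t}} + \indot{\Gamma(u_t) - w^*_{\theta_t}}{w^*_{\theta_t} - \Gamma(u_t)} \\
  &= \indot{\Gamma(u_t) - w^*_{\theta_t}}{f^{\alpha_t}_{\theta_t}(\Gamma(u_t)) - w^*_{\theta_t}} - \norm{\Gamma(u_t) - w^*_{\theta_t}}^2.
\end{align}
Then I would bound the remaining cross term by Cauchy--Schwarz followed by the pseudo-contraction estimate:
\begin{align}
  \indot{\Gamma(u_t) - w^*_{\theta_t}}{f^{\alpha_t}_{\theta_t}(\Gamma(u_t)) - w^*_{\theta_t}}
  &\leq \norm{\Gamma(u_t) - w^*_{\theta_t}}\,\norm{f^{\alpha_t}_{\theta_t}(\Gamma(u_t)) - w^*_{\theta_t}} \\
  &\leq \kappa_{\alpha_t}\,\norm{\Gamma(u_t) - w^*_{\theta_t}}^2.
\end{align}
Combining the two displays gives $T_2 \leq (\kappa_{\alpha_t} - 1)\norm{\Gamma(u_t) - w^*_{\theta_t}}^2$, which is exactly the claimed inequality.

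I do not expect any real obstacle here: the only thing to check carefully is that the pseudo-contraction assumption is actually applicable, i.e., that $\alpha_t$ stays below $\bar\alpha$ for all $t$, which is absorbed into the standing "$t_0$ sufficiently large" requirement. In the larger proof of Theorem~\ref{thm sa convergence}, the substantive work lies in controlling the other terms $T_1, T_3$ (and its sub-terms $T_{31}$--$T_{334}$), $T_5$, and $T_6$; the bound on $T_2$ is the short step where the contractivity of the mean-field update $f^{\alpha_t}_{\theta_t}$ produces the negative drift term driving the recursion.
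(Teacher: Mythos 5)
Your proof is correct and is essentially identical to the paper's: the same add-and-subtract of $w^*_{\theta_t}$ in the second slot of the inner product, followed by Cauchy--Schwarz and the pseudo-contraction bound of Assumption~\ref{assu uniform contraction}. Your extra remark that $t_0$ large enough guarantees $\alpha_t < \bar\alpha$ (so the pseudo-contraction applies) is a valid observation that the paper leaves implicit.
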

  \noindent
  The proof of Lemma~\ref{lem bound t2} is provided in Section~\ref{sec proof lem bound t2}.
  
  \begin{restatable}{lemma}{lemboundoftthreeone}
    \label{lem bound of t31}
    (Bound of $T_{31}$)
    \begin{align}
        T_{31} \leq 8(L_wL_\theta + 1)  \alpha_{t-\tau_{\alpha_t}, t-1} \left(A^2\norm{\Gamma(u_t) - w^*_{\theta_t}}^2 + C^2\right).
    \end{align}
  \end{restatable}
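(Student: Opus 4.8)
The plan is to bound $T_{31}$ by Cauchy--Schwarz,
\begin{align}
  T_{31} \le \norm{\Gamma(u_t) - w^*_{\theta_t} - \Gamma(u_{t-\tau_{\alpha_t}}) + w^*_{\theta_{t-\tau_{\alpha_t}}}} \cdot \norm{F_{\theta_t}(\Gamma(u_t), Y_t) - \bar F_{\theta_t}(\Gamma(u_t))},
\end{align}
and then control the two factors separately: the first measures how far the error vector $\Gamma(u_\cdot) - w^*_{\theta_\cdot}$ can drift over a mixing window of length $\tau_{\alpha_t}$, and the second is a pointwise bound on the Markovian noise at time $t$.

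For the second factor, I would use that $\bar F_{\theta_t}(\Gamma(u_t)) = \sum_y d_{\theta_t}(y) F_{\theta_t}(\Gamma(u_t), y)$ is a convex combination, so the triangle inequality gives $\norm{F_{\theta_t}(\Gamma(u_t), Y_t) - \bar F_{\theta_t}(\Gamma(u_t))} \le 2 \sup_y \norm{F_{\theta_t}(\Gamma(u_t), y)}$. Assumption~\ref{assu regularization}(i) and~(iii) together with $\norm{\Gamma(u_t)} \le \norm{\Gamma(u_t) - w^*_{\theta_t}} + U_w$ (via Assumption~\ref{assu regularization}(vi)) then give $\norm{F_{\theta_t}(\Gamma(u_t), y)} \le L_F \norm{\Gamma(u_t) - w^*_{\theta_t}} + L_F U_w + U_F$, which is at most $A\norm{\Gamma(u_t) - w^*_{\theta_t}} + C$ since $L_F \le A = 2L_F + 1$ and $L_F U_w + U_F \le AU_w + B \le C$ by the definitions of the shorthand. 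Hence the second factor is $\le 2(A\norm{\Gamma(u_t) - w^*_{\theta_t}} + C)$.

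For the first factor, I would split it by the triangle inequality as $\norm{\Gamma(u_t) - \Gamma(u_{t-\tau_{\alpha_t}})} + \norm{w^*_{\theta_t} - w^*_{\theta_{t-\tau_{\alpha_t}}}}$. The first summand is exactly what Lemma~\ref{lem bound of xk diff} is designed to handle: under the standing condition $\alpha_{t-\tau_{\alpha_t}, t-1} \le \frac{1}{4A}$ that a large $t_0$ guarantees, it should yield $\norm{\Gamma(u_t) - \Gamma(u_{t-\tau_{\alpha_t}})} \le 2\alpha_{t-\tau_{\alpha_t}, t-1}(A\norm{\Gamma(u_t) - w^*_{\theta_t}} + C)$. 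For the second summand, I would combine the $L_w$-Lipschitzness of $w^*_\theta$ in $\theta$ (Assumption~\ref{assu regularization}(v)), the identity $\theta_k = w_k = \Gamma(u_k)$, and the elementary one-step bound $\norm{w_{k+1} - w_k} \le \norm{u_{k+1} - \Gamma(u_k)} = \alpha_k \norm{F_{\theta_k}(\Gamma(u_k), Y_k) - \Gamma(u_k)} \le \alpha_k L_\theta$ (using nonexpansiveness of $\Gamma$, Assumption~\ref{assu regularization}(i) and~(iii), and $\norm{\Gamma(u_k)} \le C_\Gamma$), which telescoped gives $\norm{w^*_{\theta_t} - w^*_{\theta_{t-\tau_{\alpha_t}}}} \le L_w L_\theta \alpha_{t-\tau_{\alpha_t}, t-1}$; since $C \ge A \ge 1$ this is $\le L_w L_\theta \alpha_{t-\tau_{\alpha_t}, t-1}(A\norm{\Gamma(u_t) - w^*_{\theta_t}} + C)$. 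Adding the two summands bounds the first factor by $(2 + L_w L_\theta)\alpha_{t-\tau_{\alpha_t}, t-1}(A\norm{\Gamma(u_t) - w^*_{\theta_t}} + C)$.

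Finally, I would multiply the two factor bounds and use $(A\norm{\Gamma(u_t) - w^*_{\theta_t}} + C)^2 \le 2(A^2\norm{\Gamma(u_t) - w^*_{\theta_t}}^2 + C^2)$ to get $T_{31} \le 4(2 + L_w L_\theta)\alpha_{t-\tau_{\alpha_t}, t-1}(A^2\norm{\Gamma(u_t) - w^*_{\theta_t}}^2 + C^2) \le 8(L_w L_\theta + 1)\alpha_{t-\tau_{\alpha_t}, t-1}(A^2\norm{\Gamma(u_t) - w^*_{\theta_t}}^2 + C^2)$, which is the claim. The hard part is the first factor: a crude estimate only yields a bound of order $\alpha_{t-\tau_{\alpha_t}, t-1}$ without the linear-in-$\norm{\Gamma(u_t) - w^*_{\theta_t}}$ term, and extracting that term is exactly the purpose of the recursive argument behind Lemma~\ref{lem bound of xk diff}, for which the window-sum condition $\alpha_{t-\tau_{\alpha_t}, t-1} \le \frac{1}{4A}$ (hence a sufficiently large $t_0$) is needed; everything else is routine tracking of the constants $A$, $B$, $C$, $L_\theta$.
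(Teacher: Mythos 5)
Your proposal follows the paper's route almost exactly: Cauchy--Schwarz on the inner product, splitting the first factor into the iterate drift $\norm{\Gamma(u_t) - \Gamma(u_{t-\tau_{\alpha_t}})}$ plus the fixed-point drift $\norm{w^*_{\theta_t} - w^*_{\theta_{t-\tau_{\alpha_t}}}}$, controlling these via Lemma~\ref{lem bound of xk diff} and Lemma~\ref{lem accu learning rates} under $\alpha_{t-\tau_{\alpha_t}, t-1} \leq \frac{1}{4A}$, bounding the Markovian-noise factor pointwise, and finishing with $(A\norm{\cdot}+C)^2 \leq 2(A^2\norm{\cdot}^2 + C^2)$. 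The structure is sound; the issue is in the constant bookkeeping, which is exactly what this lemma's statement is about.

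Concretely, you quote the drift bound as $\norm{\Gamma(u_t) - \Gamma(u_{t-\tau_{\alpha_t}})} \leq 2\alpha_{t-\tau_{\alpha_t}, t-1}\left(A\norm{\Gamma(u_t) - w^*_{\theta_t}} + C\right)$, but the constant $2$ in \eqref{eq bound of xk diff1} is attached to $\norm{\Gamma(u_{t-\tau_{\alpha_t}})}$ (the \emph{earlier} iterate); when the bound is expressed through the \emph{current} iterate $\Gamma(u_t)$, which is what the lemma's right-hand side requires, Lemma~\ref{lem bound of xk diff} only gives the constant $4$ (via \eqref{eq bound of xk diff2}), and this is what the paper uses. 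If you correct that $2$ to $4$ but keep your crude noise bound $\norm{F_{\theta_t}(\Gamma(u_t), Y_t) - \bar F_{\theta_t}(\Gamma(u_t))} \leq 2\sup_y \norm{F_{\theta_t}(\Gamma(u_t), y)} \leq 2\left(A\norm{\Gamma(u_t) - w^*_{\theta_t}} + C\right)$, the product yields a prefactor $4(4 + L_wL_\theta) = 16 + 4L_wL_\theta$, which exceeds the claimed $8(L_wL_\theta + 1)$ whenever $L_wL_\theta < 2$. The paper avoids the extra factor of $2$ on the noise term by exploiting the fixed-point identity $\bar F_{\theta_t}(w^*_{\theta_t}) = w^*_{\theta_t}$ (Assumption~\ref{assu uniform contraction}(i)): it writes $\norm{F_{\theta_t}(\Gamma(u_t), Y_t) - \bar F_{\theta_t}(\Gamma(u_t))} \leq \norm{F_{\theta_t}(\Gamma(u_t), Y_t)} + \norm{\bar F_{\theta_t}(\Gamma(u_t)) - \bar F_{\theta_t}(w^*_{\theta_t})} + \norm{w^*_{\theta_t}} \leq A\norm{\Gamma(u_t) - w^*_{\theta_t}} + A\norm{w^*_{\theta_t}} + B$, i.e.\ with prefactor $1$ rather than $2$, and then the $4$ from \eqref{eq bound of xk diff2} combines with $(A\norm{\cdot}+C)^2 \leq 2(A^2\norm{\cdot}^2+C^2)$ to give exactly $8(L_wL_\theta+1)$. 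So either adopt that tighter noise bound or accept a larger constant in the lemma (which would then propagate into $C_0$); as written, your chain of estimates does not establish the stated inequality. The remaining steps (the $L_wL_\theta\alpha_{t-\tau_{\alpha_t}, t-1}$ bound on the fixed-point drift and the final assembly) match the paper.
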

  \noindent
  The proof of Lemma~\ref{lem bound of t31} is provided in Section~\ref{sec proof lem bound t31}.
  
  \begin{restatable}{lemma}{lemboundtthreetwo}
    \label{lem bound t32}
    (Bound of $T_{32}$)
    \begin{align}
        T_{32} \leq 16 \alpha_{t-\tau_{\alpha_t}, t-1}(1 + L_wL_\theta \alpha_{t-\tau_{\alpha_t}, t-1}) \left( A^2\norm{\Gamma(u_t) - w^*_{\theta_t}}^2 + C^2 \right).
    \end{align}
  \end{restatable}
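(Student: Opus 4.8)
The term $T_{32}$ differs from $T_{31}$ only in where the ``moving'' quantity sits: here the two evaluations of $F$ (and of $\bar F$) share the same parameter $\theta_t$ and the same noise $Y_t$, differing only in their $w$-argument, and it is exactly this that will produce the extra factor $\alpha_{t-\tau_{\alpha_t}, t-1}$ relative to the bound on $T_{31}$. The plan is to first apply Cauchy--Schwarz, writing $T_{32}$ as the product of $\norm{\Gamma(u_{t-\tau_{\alpha_t}}) - w^*_{\theta_{t-\tau_{\alpha_t}}}}$ and $\norm{F_{\theta_t}(\Gamma(u_t), Y_t) - F_{\theta_t}(\Gamma(u_{t-\tau_{\alpha_t}}), Y_t) + \bar F_{\theta_t}(\Gamma(u_{t-\tau_{\alpha_t}})) - \bar F_{\theta_t}(\Gamma(u_t))}$, and then to bound the two factors separately, in the spirit of the $T_{31}$ bound and of \citet{zhang2021global}.

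For the second factor I would use that $F_{\theta_t}(\cdot, y)$ is $L_F$-Lipschitz (Assumption~\ref{assu regularization}(i)) and that $\bar F_{\theta_t} = \sum_{y} d_{\theta_t}(y) F_{\theta_t}(\cdot, y)$, being a convex combination of $L_F$-Lipschitz maps, is itself $L_F$-Lipschitz; grouping the two differences then bounds the factor by $2 L_F \norm{\Gamma(u_t) - \Gamma(u_{t-\tau_{\alpha_t}})}$, and since $\Gamma$ is non-expansive this is at most $2 L_F \norm{u_t - u_{t-\tau_{\alpha_t}}}$. Invoking Lemma~\ref{lem bound of xk diff} --- whose hypothesis $\alpha_{t-\tau_{\alpha_t}, t-1} \leq \tfrac{1}{4A}$ is exactly the condition guaranteed once $t_0$ is taken large enough --- upgrades this to a bound of the form $\fO(1)\, \alpha_{t-\tau_{\alpha_t}, t-1}\bigl(A \norm{\Gamma(u_t) - w^*_{\theta_t}} + C\bigr)$, so that the second factor alone already carries the $\alpha_{t-\tau_{\alpha_t}, t-1}$.

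For the first factor there is no second difference to exploit, so I would use the plain triangle inequality $\norm{\Gamma(u_{t-\tau_{\alpha_t}}) - w^*_{\theta_{t-\tau_{\alpha_t}}}} \leq \norm{\Gamma(u_t) - w^*_{\theta_t}} + \norm{\Gamma(u_t) - \Gamma(u_{t-\tau_{\alpha_t}})} + \norm{w^*_{\theta_t} - w^*_{\theta_{t-\tau_{\alpha_t}}}}$, controlling the middle term once more via Lemma~\ref{lem bound of xk diff} and the last term via Assumption~\ref{assu regularization}(v) together with the crude per-step displacement $\norm{w_{k+1} - w_k} \leq \alpha_k \norm{F_{\theta_k}(w_k, Y_k) - w_k} \leq \alpha_k L_\theta$, which holds because $\norm{w_k} \leq C_\Gamma$ forces $\norm{F_{\theta_k}(w_k, Y_k) - w_k} \leq (L_F + 1) C_\Gamma + U_F = L_\theta$; summing over $k \in \{t - \tau_{\alpha_t}, \dots, t-1\}$ and using $\theta_k \equiv w_k$ gives $\norm{w^*_{\theta_t} - w^*_{\theta_{t-\tau_{\alpha_t}}}} \leq L_w L_\theta \alpha_{t-\tau_{\alpha_t}, t-1}$. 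Using $\alpha_{t-\tau_{\alpha_t}, t-1} \leq \tfrac{1}{4A}$ to absorb the Lemma~\ref{lem bound of xk diff} contribution into $\norm{\Gamma(u_t) - w^*_{\theta_t}}$ and into $C$, the first factor is at most $\fO(1)\bigl(1 + L_w L_\theta \alpha_{t-\tau_{\alpha_t}, t-1}\bigr)\bigl(\norm{\Gamma(u_t) - w^*_{\theta_t}} + C\bigr)$. Multiplying the two factors, using $A \geq 1$ and $(a + C)^2 \leq 2 a^2 + 2 C^2$ to pass the cross terms into $A^2 \norm{\Gamma(u_t) - w^*_{\theta_t}}^2 + C^2$, and collecting constants then yields the claimed inequality. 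I expect the only real difficulty to be the constant bookkeeping: arranging for every absorbed lower-order term --- from the two uses of Lemma~\ref{lem bound of xk diff} and from the crude displacement bound --- to fit under the stated coefficient $16$ with the particular $A$ and $C$ fixed in~\eqref{eq shorthand a and b}; the smallness condition $\alpha_{t-\tau_{\alpha_t}, t-1} \leq \tfrac{1}{4A}$ is exactly what makes these absorptions legitimate.
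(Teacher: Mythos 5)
Your proposal is correct and matches the paper's proof in all essentials: Cauchy–Schwarz, the $L_F$-Lipschitzness of $F_{\theta_t}(\cdot,y)$ and $\bar F_{\theta_t}$ plus Lemma~\ref{lem bound of xk diff} to extract the factor $\alpha_{t-\tau_{\alpha_t},t-1}$ from the second norm, and the triangle inequality combined with the drift bound $\norm{w^*_{\theta_t}-w^*_{\theta_{t-\tau_{\alpha_t}}}}\le L_wL_\theta\alpha_{t-\tau_{\alpha_t},t-1}$ (the paper's Lemma~\ref{lem accu learning rates}, which you re-derive inline) for the first norm, followed by the same constant bookkeeping with $A$, $B$, $C$. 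The only cosmetic detour is passing to $\norm{u_t-u_{t-\tau_{\alpha_t}}}$ via non-expansiveness before invoking Lemma~\ref{lem bound of xk diff}: that lemma already bounds $\norm{\Gamma(u_t)-\Gamma(u_{t-\tau_{\alpha_t}})}$ directly, so this step should simply be dropped.
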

  \noindent
  The proof of Lemma~\ref{lem bound t32} is provided in Section~\ref{sec proof lem bound t32}.
  
  \begin{restatable}{lemma}{lemboundtthreethreeone}
    \label{lem bound t331}
    (Bound of $T_{331}$)
  \begin{align}
    \E\left[T_{331}\right]
    \leq & \frac{8 \alpha_t(1 + L_wL_\theta \alpha_{t-\tau_{\alpha_t}, t-1})}{A}\left(A^2 \E \left[\norm{\Gamma(u_t) - w^*_{\theta_t}}^2\right] + C^2\right).
  \end{align}
  \end{restatable}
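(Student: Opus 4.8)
The plan is to exploit the auxiliary chain $\qty{\tilde Y_t}$, whose role is precisely to turn $T_{331}$ into a quantity controlled by the mixing Lemma~\ref{lem uniform mixing}. Write $\sigma \doteq t - \tau_{\alpha_t}$ and condition on $\fF_\sigma$, the history of all randomness up to and including time $\sigma$. Then $\Gamma(u_\sigma) - w^*_{\theta_\sigma}$ is $\fF_\sigma$-measurable, while by construction $\tilde Y_t$ is obtained from $Y_\sigma$ by iterating the \emph{frozen} kernel $P_{\theta_\sigma}$ exactly $\tau_{\alpha_t}$ times, so that $\Pr(\tilde Y_t = y' \mid \fF_\sigma) = P^{\tau_{\alpha_t}}_{\theta_\sigma}(Y_\sigma, y')$. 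Since $\bar F_{\theta_\sigma}(w) = \sum_{y'} d_{\theta_\sigma}(y') F_{\theta_\sigma}(w, y')$, taking the conditional expectation in the definition of $T_{331}$ and applying Cauchy--Schwarz gives
\begin{align}
  \E\left[T_{331} \mid \fF_\sigma\right] \leq \norm{\Gamma(u_\sigma) - w^*_{\theta_\sigma}} \left(\sum_{y'} \abs{P^{\tau_{\alpha_t}}_{\theta_\sigma}(Y_\sigma, y') - d_{\theta_\sigma}(y')}\right) \max_{y'}\norm{F_{\theta_\sigma}(\Gamma(u_\sigma), y')}.
\end{align}

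I then bound the last two factors separately. By the definition~\eqref{eq definition of tau alpha t} of $\tau_{\alpha_t}$ and Lemma~\ref{lem uniform mixing}, the middle sum is at most $C_M \tau^{\tau_{\alpha_t}} \leq \alpha_t$. For the supremum, Assumption~\ref{assu regularization}(i),(iii) together with $\norm{\Gamma(u_\sigma)} \leq C_\Gamma$ yield $\norm{F_{\theta_\sigma}(\Gamma(u_\sigma), y')} \leq L_F\norm{\Gamma(u_\sigma) - w^*_{\theta_\sigma}} + U_F + L_F U_w$, which, using $L_F \leq A$ and $U_F + L_F U_w \leq C$ from~\eqref{eq shorthand a and b}, is at most $A\norm{\Gamma(u_\sigma) - w^*_{\theta_\sigma}} + C$. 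Substituting these and applying $2xy \leq x^2 + y^2$ with $x = \sqrt{A}\norm{\Gamma(u_\sigma) - w^*_{\theta_\sigma}}$ and $y = C/\sqrt{A}$ bounds $\E[T_{331}\mid\fF_\sigma]$ by a constant multiple of $\frac{\alpha_t}{A}\left(A^2\norm{\Gamma(u_\sigma) - w^*_{\theta_\sigma}}^2 + C^2\right)$.

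Finally, I transfer the time index from $\sigma$ to $t$. Choosing $t_0$ large guarantees $\alpha_{t-\tau_{\alpha_t}, t-1} \leq \frac{1}{4A}$, which lets Lemma~\ref{lem bound of xk diff} control the discrepancy $\Gamma(u_\sigma) - w^*_{\theta_\sigma} - (\Gamma(u_t) - w^*_{\theta_t})$; feeding that bound into the previous estimate and using $\alpha_{\sigma, t-1} \leq \frac{1}{4A}$ to absorb the resulting cross terms replaces the time-$\sigma$ error by the time-$t$ error, at the price of the multiplicative factor $1 + L_w L_\theta \alpha_{t-\tau_{\alpha_t}, t-1}$ and a larger numerical constant, giving
\begin{align}
  \E\left[T_{331}\mid\fF_\sigma\right] \leq \frac{8\alpha_t\left(1 + L_w L_\theta \alpha_{t-\tau_{\alpha_t}, t-1}\right)}{A}\left(A^2\norm{\Gamma(u_t) - w^*_{\theta_t}}^2 + C^2\right).
\end{align}
Taking the total expectation then yields the claimed inequality. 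The difficulty here is bookkeeping rather than conceptual: one has to justify carefully the measurability step that identifies $\tilde Y_t$ with a $\tau_{\alpha_t}$-step iterate of the frozen kernel $P_{\theta_\sigma}$, and then check that all the slack produced by Cauchy--Schwarz, by Young's inequality, and by Lemma~\ref{lem bound of xk diff} fits inside the constants $A$ and $C$ of~\eqref{eq shorthand a and b} and inside the stated numerical factor $8$.
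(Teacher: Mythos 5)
Your overall route is the paper's route: condition on the history at time $\sigma = t-\tau_{\alpha_t}$, observe that $\tilde Y_t$ is a $\tau_{\alpha_t}$-step iterate of the frozen kernel $P_{\theta_\sigma}$ started from $Y_\sigma$, bound the resulting total-variation distance by $\alpha_t$ via Lemma~\ref{lem uniform mixing} and the definition of $\tau_{\alpha_t}$, bound $\max_y\norm{F_{\theta_\sigma}(\Gamma(u_\sigma),y)}$ by Lemma~\ref{lem bound of fxy}, and finally transfer the time-$\sigma$ error to time $t$ using Lemma~\ref{lem bound of xk diff} and Lemma~\ref{lem accu learning rates} (this is exactly the bound \eqref{eq gradient bound dual norm} established in the proof of Lemma~\ref{lem bound t32}). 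All of those steps are sound.

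The gap is in the final bookkeeping, which you flagged but asserted would "fit inside the stated factor $8$": it does not, with your ordering. You bound the $F$-factor at time $\sigma$ by $A\norm{\Gamma(u_\sigma)-w^*_{\theta_\sigma}}+C$, so after the Cauchy--Schwarz step the quantity $\norm{\Gamma(u_\sigma)-w^*_{\theta_\sigma}}$ appears \emph{quadratically}. Every transfer of this quantity to time $t$ via \eqref{eq gradient bound dual norm} costs a multiplicative factor $\frac{2(1+L_wL_\theta\alpha_{t-\tau_{\alpha_t},t-1})}{A}$, so your route (whether you apply Young's inequality before or after the transfer) inevitably produces $\left(1+L_wL_\theta\alpha_{t-\tau_{\alpha_t},t-1}\right)^2$ and a numerical constant of order $12$--$14$, e.g.\ $\frac{12\alpha_t(1+L_wL_\theta\alpha_{t-\tau_{\alpha_t},t-1})^2}{A}\left(A^2\norm{\Gamma(u_t)-w^*_{\theta_t}}^2+C^2\right)$, rather than the stated $\frac{8\alpha_t(1+L_wL_\theta\alpha_{t-\tau_{\alpha_t},t-1})}{A}(\cdots)$. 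Since $L_\theta = U_F+(L_F+1)C_\Gamma$ scales with $C_\Gamma$ and $\alpha_{t-\tau_{\alpha_t},t-1}$ is only guaranteed to be at most $\frac{1}{4A}$, the squared factor cannot be absorbed into a linear one by adjusting the numerical constant, so the exact inequality of the lemma does not follow from your steps (the weaker bound would still suffice downstream, but only after redefining the constants in Theorem~\ref{thm sa convergence}). The paper avoids the square by never introducing $w^*_{\theta_\sigma}$ into the $F$-factor: it bounds $U_F+L_F\norm{\Gamma(u_\sigma)}$ by replacing $\norm{\Gamma(u_\sigma)}$ with $\norm{\Gamma(u_\sigma)-\Gamma(u_t)}+\norm{\Gamma(u_t)}$ and using \eqref{eq bound of xk diff3}, which costs only an additive $\frac{B}{A}$ and no $(1+L_wL_\theta\alpha)$ factor, yielding $2\alpha_t\left(A\norm{\Gamma(u_t)-w^*_{\theta_t}}+C\right)$; the $(1+L_wL_\theta\alpha_{t-\tau_{\alpha_t},t-1})$ factor then enters exactly once, through the single remaining $\norm{\Gamma(u_\sigma)-w^*_{\theta_\sigma}}$ term, and $(A\norm{\Gamma(u_t)-w^*_{\theta_t}}+C)^2\le 2(A^2\norm{\Gamma(u_t)-w^*_{\theta_t}}^2+C^2)$ gives the constant $8$.
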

  \noindent
  The proof of Lemma~\ref{lem bound t331} is provided in Section~\ref{sec proof lem bound t331}.
  
  \begin{restatable}{lemma}{lemboundtthreethreetwo}
    \label{lem bound t332}
    (Bound of $T_{332}$)
    \begin{align}
        \E\left[T_{332}\right]
        \leq \frac{8 \ny L_P L_\theta \sum_{j=t-\tau_{\alpha_t}}^{t-1}\alpha_{t-\tau_{\alpha_t}, j} (1 + L_wL_\theta \alpha_{t-\tau_{\alpha_t}, t-1})}{A} \left(A^2 \E\left[ \norm{\Gamma(u_t) - w^*_{\theta_t}}^2 \right] + C^2\right).
    \end{align}
  \end{restatable}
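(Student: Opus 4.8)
The plan is to bound $T_{332}$ by Cauchy--Schwarz after conditioning on the filtration $\mathcal{F}_{t-\tau_{\alpha_t}}$ generated by $w_0, Y_0, \dots, Y_{t-\tau_{\alpha_t}}$, with respect to which $\Gamma(u_{t-\tau_{\alpha_t}})$, $\theta_{t-\tau_{\alpha_t}}$ and $w^*_{\theta_{t-\tau_{\alpha_t}}}$ are all deterministic. Writing for brevity $z \doteq \Gamma(u_{t-\tau_{\alpha_t}}) - w^*_{\theta_{t-\tau_{\alpha_t}}}$, $w \doteq \Gamma(u_{t-\tau_{\alpha_t}})$, $\theta \doteq \theta_{t-\tau_{\alpha_t}}$, and letting $\mu_t, \tilde\mu_t$ denote the conditional laws of $Y_t$ and $\tilde Y_t$ given $\mathcal{F}_{t-\tau_{\alpha_t}}$, one has
\[
  \E\left[T_{332} \mid \mathcal{F}_{t-\tau_{\alpha_t}}\right]
  = \indot{z}{\textstyle\sum_{y \in \fY}\left(\mu_t(y) - \tilde\mu_t(y)\right) F_\theta(w, y)}
  \leq \norm{z}\,\norm{\mu_t - \tilde\mu_t}_1\,\max_{y \in \fY} \norm{F_\theta(w, y)}.
\]
By Assumption~\ref{assu regularization}(i)(iii)(vi), $\norm{F_\theta(w, y)} \leq U_F + L_F \norm{w} \leq U_F + L_F(\norm{z} + U_w) \leq A\norm{z} + C$ with $A, C$ as in~\eqref{eq shorthand a and b}, so it only remains to estimate the total-variation distance $\norm{\mu_t - \tilde\mu_t}_1$.

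For that I would run a telescoping comparison between the true chain and the auxiliary chain $\qty{\tilde Y_t}$, which agrees with $\qty{Y_t}$ up to time $t - \tau_{\alpha_t}$ and afterwards runs under the frozen kernel $P_{\theta_{t-\tau_{\alpha_t}}}$. Introducing the hybrid laws $\nu^{(j)} \doteq \mu_j\, P_{\theta_{t-\tau_{\alpha_t}}}^{\,t-j}$ for $t - \tau_{\alpha_t} \leq j \leq t$, one has $\nu^{(t)} = \mu_t$ and $\nu^{(t-\tau_{\alpha_t})} = \tilde\mu_t$ (since $\mu_{t-\tau_{\alpha_t}}$ is a point mass at $Y_{t-\tau_{\alpha_t}}$), hence
\[
  \mu_t - \tilde\mu_t = \sum_{j=t-\tau_{\alpha_t}}^{t-1}\left(\nu^{(j+1)} - \nu^{(j)}\right)
  = \sum_{j=t-\tau_{\alpha_t}}^{t-1}\left(\mu_{j+1} - \mu_j P_{\theta_{t-\tau_{\alpha_t}}}\right) P_{\theta_{t-\tau_{\alpha_t}}}^{\,t-j-1}.
\]
Since right multiplication by a stochastic matrix is $\ell_1$-nonexpansive on row vectors, each term contributes at most $\norm{\mu_{j+1} - \mu_j P_{\theta_{t-\tau_{\alpha_t}}}}_1$, and $\mu_{j+1}(y) - (\mu_j P_{\theta_{t-\tau_{\alpha_t}}})(y) = \E[(P_{\theta_{j+1}} - P_{\theta_{t-\tau_{\alpha_t}}})(Y_j, y) \mid \mathcal{F}_{t-\tau_{\alpha_t}}]$, which by Assumption~\ref{assu regularization}(vii) is bounded in $\ell_1$ by $\ny L_P\, \E[\norm{\theta_{j+1} - \theta_{t-\tau_{\alpha_t}}} \mid \mathcal{F}_{t-\tau_{\alpha_t}}]$. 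The crucial ingredient here is the \emph{deterministic} one-step drift bound $\norm{\theta_{j+1} - \theta_j} = \norm{w_{j+1} - w_j} \leq \alpha_j \norm{F_{\theta_j}(w_j, Y_j) - w_j} \leq \alpha_j(U_F + (L_F + 1)C_\Gamma) = \alpha_j L_\theta$, which follows from the nonexpansiveness of $\Gamma$, $\norm{w_j} \leq C_\Gamma$, and Assumption~\ref{assu regularization}(i)(iii). Summing gives $\norm{\theta_{j+1} - \theta_{t-\tau_{\alpha_t}}} \leq L_\theta\, \alpha_{t-\tau_{\alpha_t}, j}$, hence the deterministic bound $\norm{\mu_t - \tilde\mu_t}_1 \leq \ny L_P L_\theta \sum_{j=t-\tau_{\alpha_t}}^{t-1}\alpha_{t-\tau_{\alpha_t}, j}$.

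To finish I would combine the pieces, bounding $\norm{z}\left(A\norm{z} + C\right) \leq \tfrac1A\left(A\norm{z} + C\right)^2 \leq \tfrac2A\left(A^2\norm{z}^2 + C^2\right)$ and then invoking Lemma~\ref{lem bound of xk diff} — applicable because $t_0$ is chosen large enough that $\alpha_{t-\tau_{\alpha_t}, t-1} \leq \frac{1}{4A}$ — to transfer $A^2\norm{\Gamma(u_{t-\tau_{\alpha_t}}) - w^*_{\theta_{t-\tau_{\alpha_t}}}}^2 + C^2$ into a multiple of $(1 + L_w L_\theta \alpha_{t-\tau_{\alpha_t}, t-1})\big(A^2\norm{\Gamma(u_t) - w^*_{\theta_t}}^2 + C^2\big)$; taking total expectations then yields the stated bound. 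The main obstacle is the telescoping total-variation estimate: unlike the classical inhomogeneous-chain argument, the intermediate kernels $P_{\theta_j}$ are themselves functions of the sample path, so the comparison must live inside a conditional expectation and the drift of $\theta_j$ must be controlled pathwise and deterministically — which is precisely where the projection operator $\Gamma$ is indispensable. The remaining steps (the $\norm{F_\theta}$ bound, Young's inequality, and the time-$(t-\tau_{\alpha_t})$-to-time-$t$ transfer via Lemma~\ref{lem bound of xk diff}) are routine.
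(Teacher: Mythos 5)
Your proposal follows essentially the same route as the paper's: condition at time $t-\tau_{\alpha_t}$, apply Cauchy--Schwarz, bound the total-variation distance between the conditional laws of $Y_t$ and $\tilde Y_t$ by $\ny L_P L_\theta \sum_{j=t-\tau_{\alpha_t}}^{t-1}\alpha_{t-\tau_{\alpha_t}, j}$ using the pathwise, deterministic drift $\norm{\theta_{j+1}-\theta_j}\leq \alpha_j L_\theta$, and then transfer time-$(t-\tau_{\alpha_t})$ quantities to time $t$. Your telescoping through the hybrid laws $\mu_j P_{\theta_{t-\tau_{\alpha_t}}}^{t-j}$ is equivalent to (and arguably cleaner about the conditioning than) the one-step recursion in Lemma~\ref{lem chain difference bound}, and your drift bound is exactly Lemma~\ref{lem accu learning rates}; so the substance is correct.

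The one place the outline does not deliver the statement is the final bookkeeping. You bound $\max_y\norm{F_{\theta_{t-\tau_{\alpha_t}}}(\Gamma(u_{t-\tau_{\alpha_t}}),y)}$ by $A\norm{z}+C$ with $z$ still at time $t-\tau_{\alpha_t}$, apply Young's inequality, and only then transfer $A^2\norm{z}^2+C^2$ to time $t$. That forces you to square the transfer \eqref{eq gradient bound dual norm}, giving $A^2\norm{z}^2 \leq 8\left(1+L_wL_\theta\alpha_{t-\tau_{\alpha_t},t-1}\right)^2\left(A^2\norm{\Gamma(u_t)-w^*_{\theta_t}}^2+C^2\right)$, so your final constant is of the form $\frac{2\left(8(1+L_wL_\theta\alpha_{t-\tau_{\alpha_t},t-1})^2+1\right)}{A}$, which is strictly larger than the stated $\frac{8(1+L_wL_\theta\alpha_{t-\tau_{\alpha_t},t-1})}{A}$ and carries an extra power of $(1+L_wL_\theta\alpha_{t-\tau_{\alpha_t},t-1})$; the claim that Lemma~\ref{lem bound of xk diff} transfers $A^2\norm{z}^2+C^2$ into a multiple of $(1+L_wL_\theta\alpha_{t-\tau_{\alpha_t},t-1})\left(A^2\norm{\Gamma(u_t)-w^*_{\theta_t}}^2+C^2\right)$ with the needed multiple does not hold as written. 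The fix, which is what the paper does, is to keep the two factors separate: bound $\norm{z}$ once, linearly, via \eqref{eq gradient bound dual norm}, and bound $\max_y\norm{F_{\theta_{t-\tau_{\alpha_t}}}(\Gamma(u_{t-\tau_{\alpha_t}}),y)} \leq 2\left(A\norm{\Gamma(u_t)-w^*_{\theta_t}}+C\right)$ directly through $\norm{\Gamma(u_{t-\tau_{\alpha_t}})}\leq 2\norm{\Gamma(u_t)}+\frac{B}{A}$ (i.e., \eqref{eq bound of xk-tk}), which costs only a factor $2$ and no $(1+L_wL_\theta\alpha)$; multiplying and then using $\left(A\norm{\cdot}+C\right)^2\leq 2\left(A^2\norm{\cdot}^2+C^2\right)$ recovers exactly the stated bound. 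This is a constants-only slippage (downstream it would be absorbed into $C_0$), but as written your combination proves a weaker inequality than the lemma states.
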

  \noindent
  The proof of Lemma~\ref{lem bound t332} is provided in Section~\ref{sec proof lem bound t332}.
  
  \begin{restatable}{lemma}{lemboundoftthreethreethree}
    \label{lem bound of t333}
    (Bound of $T_{333}$)
    \begin{align}
    T_{333} \leq \frac{4L_F' L_\theta \alpha_{t-\tau_{\alpha_t}, t-1} (1 + L_wL_\theta \alpha_{t-\tau_{\alpha_t}, t-1})}{A^2 }\left(A^2 \norm{\Gamma(u_t) - w^*_{\theta_t}}^2 + C^2\right).
    \end{align}
  \end{restatable}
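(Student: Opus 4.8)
The plan is to bound $T_{333}$ pathwise: the same realization $Y_t$ enters both $F_{\theta_t}(\cdot, Y_t)$ and $F_{\theta_{t-\tau_{\alpha_t}}}(\cdot, Y_t)$, so no expectation or mixing argument is needed here, only the Lipschitz dependence of $F$ on its parameter. First I would apply Cauchy--Schwarz and then Assumption~\ref{assu regularization}(ii) with $\theta = \theta_t$, $\theta' = \theta_{t-\tau_{\alpha_t}}$, $w = \Gamma(u_{t-\tau_{\alpha_t}})$, $y = Y_t$, obtaining
\begin{align}
  T_{333}
  \le \norm{\Gamma(u_{t-\tau_{\alpha_t}}) - w^*_{\theta_{t-\tau_{\alpha_t}}}}
       \, L_F' \, \norm{\theta_t - \theta_{t-\tau_{\alpha_t}}}
       \left(\norm{\Gamma(u_{t-\tau_{\alpha_t}})} + U_F'\right).
\end{align}
It then remains to control three quantities in terms of the current error $\norm{\Gamma(u_t) - w^*_{\theta_t}}$: the parameter drift $\norm{\theta_t - \theta_{t-\tau_{\alpha_t}}}$, the stale error $\norm{\Gamma(u_{t-\tau_{\alpha_t}}) - w^*_{\theta_{t-\tau_{\alpha_t}}}}$, and the stale iterate norm $\norm{\Gamma(u_{t-\tau_{\alpha_t}})}$.

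For the drift, I would use $\theta_j = w_j = \Gamma(u_j)$ (Lemma~\ref{lem transform}), the non-expansiveness of $\Gamma$, and the crude step bound $\norm{F_{\theta}(w, y) - w} \le (L_F + 1)\norm{w} + U_F \le (L_F+1)C_\Gamma + U_F = L_\theta$ (valid on the projection ball by Assumption~\ref{assu regularization}(i),(iii)), which yields $\norm{w_{j+1} - w_j} \le \alpha_j L_\theta$ and hence, on summing over the window, $\norm{\theta_t - \theta_{t-\tau_{\alpha_t}}} \le L_\theta \, \alpha_{t-\tau_{\alpha_t}, t-1}$. For the stale error and the stale norm, I would invoke the window lemma (Lemma~\ref{lem bound of xk diff}), which compares the iterates $\Gamma(u_j)$ and their distances to $w^*_{\theta_j}$ over $j \in [t-\tau_{\alpha_t}, t]$ to the current quantities, with a multiplicative slack $1 + L_w L_\theta \, \alpha_{t-\tau_{\alpha_t}, t-1}$ (coming, via Assumption~\ref{assu regularization}(v), from the drift of the fixed point) and additive constants folded into $C$; here the smallness $\alpha_{t-\tau_{\alpha_t}, t-1} \le \tfrac{1}{4A}$, guaranteed by taking $t_0$ large, is exactly what keeps this backward-in-time comparison under control. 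The stale norm is then handled by $\norm{\Gamma(u_{t-\tau_{\alpha_t}})} \le \norm{\Gamma(u_{t-\tau_{\alpha_t}}) - w^*_{\theta_{t-\tau_{\alpha_t}}}} + U_w$ (triangle inequality and Assumption~\ref{assu regularization}(vi)).

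Substituting these bounds back, the right-hand side becomes $L_F' L_\theta \, \alpha_{t-\tau_{\alpha_t}, t-1}$ times a product of two expressions that are affine in $\norm{\Gamma(u_t) - w^*_{\theta_t}}$, each carrying the slack factor $1 + L_w L_\theta \, \alpha_{t-\tau_{\alpha_t}, t-1}$ and constants built from $U_w, U_F, U_F', U_F''$ and the Lipschitz constants. Expanding the product and applying Young's inequality $ab \le \tfrac{1}{2} a^2 + \tfrac{1}{2} b^2$ to the cross term reduces this to a bound of the form $(\text{const}) \cdot L_F' L_\theta \, \alpha_{t-\tau_{\alpha_t}, t-1}(1 + L_w L_\theta \, \alpha_{t-\tau_{\alpha_t}, t-1})(\norm{\Gamma(u_t) - w^*_{\theta_t}}^2 + C^2/A^2)$, and rewriting $\norm{\Gamma(u_t) - w^*_{\theta_t}}^2 + C^2/A^2 = A^{-2}(A^2\norm{\Gamma(u_t) - w^*_{\theta_t}}^2 + C^2)$ gives the claimed inequality. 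I expect the only real work to be bookkeeping: checking that all additive constants generated along the way collapse into $C = A U_w + B + A + A(1 + U_F' + U_F'')$ and that the leading numerical factor does not exceed $4$ once the slack terms are controlled via $\alpha_{t-\tau_{\alpha_t}, t-1} \le \tfrac{1}{4A}$; the structural steps (Cauchy--Schwarz, parameter-Lipschitzness, the window comparison, Young's inequality) are routine and parallel to the proofs of Lemmas~\ref{lem bound of t31}--\ref{lem bound t332}.
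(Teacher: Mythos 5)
Your proposal follows essentially the same route as the paper's proof: Cauchy--Schwarz, the parameter-Lipschitzness of $F$ combined with the per-step drift bound $\norm{\theta_{j+1}-\theta_j}\le L_\theta\alpha_j$ (which the paper packages as Lemma~\ref{lem accu learning rates}), the window bound of Lemma~\ref{lem bound of xk diff} together with the fixed-point drift to relate stale quantities to current ones (this is exactly \eqref{eq gradient bound dual norm}), and finally $(Ax+C)^2\le 2(A^2x^2+C^2)$. Two bookkeeping caveats, though. First, handling the stale norm via $\norm{\Gamma(u_{t-\tau_{\alpha_t}})}\le\norm{\Gamma(u_{t-\tau_{\alpha_t}})-w^*_{\theta_{t-\tau_{\alpha_t}}}}+U_w$ and then re-inserting your stale-error bound makes the slack factor enter twice, so you end up with $(1+L_wL_\theta\alpha_{t-\tau_{\alpha_t},t-1})^2$ instead of the single factor in the statement; since $L_wL_\theta\alpha_{t-\tau_{\alpha_t},t-1}$ is not bounded by an absolute constant uniformly over the problem parameters (it scales with $C_\Gamma$ through $L_\theta$), this does not literally reduce to the stated form, and the cleaner move is the paper's \eqref{eq tmp 3}, i.e.\ $\norm{\Gamma(u_{t-\tau_{\alpha_t}})}\le 2\norm{\Gamma(u_t)-w^*_{\theta_t}}+2\norm{w^*_{\theta_t}}+\frac{B}{A}$ directly from Lemma~\ref{lem bound of xk diff}, which keeps the slack linear. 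Second, your expectation that the leading numerical factor ``does not exceed $4$'' is optimistic and is not rescued by $\alpha_{t-\tau_{\alpha_t},t-1}\le\frac{1}{4A}$: the natural computation yields $8$ (indeed the paper's own proof ends with a factor $8$, inconsistent with the $4$ in its statement); this is immaterial downstream since the constant is absorbed into $C_0$ and the $\fO(\cdot)$ terms, but it is a discrepancy worth being aware of rather than a check that will go through.
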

  \noindent
  The proof of Lemma~\ref{lem bound of t333} is provided in Section~\ref{sec proof lem bound t333}.
  
  \begin{restatable}{lemma}{lemboundtthreethreefour}
    \label{lem bound t334}
    (Bound of $T_{334}$)
    \begin{align}
    T_{334} \leq \frac{4L_F'' L_\theta \alpha_{t-\tau_{\alpha_t}, t-1}(1 + L_wL_\theta \alpha_{t-\tau_{\alpha_t}, t-1})}{A^2 }\left(A^2 \norm{\Gamma(u_t) - w^*_{\theta_t}}^2 + C^2\right).
    \end{align}
  \end{restatable}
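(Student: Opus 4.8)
The plan is to establish this bound by an argument that mirrors almost verbatim the proof of the $T_{333}$ bound in Lemma~\ref{lem bound of t333}, with Assumption~\ref{assu regularization}(iv) playing the role that Assumption~\ref{assu regularization}(ii) played there. First I would apply Cauchy--Schwarz to $T_{334} = \indot{\Gamma(u_{t-\tau_{\alpha_t}}) - w^*_{\theta_{t-\tau_{\alpha_t}}}}{\bar F_{\theta_{t-\tau_{\alpha_t}}}(\Gamma(u_{t-\tau_{\alpha_t}})) - \bar F_{\theta_t}(\Gamma(u_{t-\tau_{\alpha_t}}))}$, obtaining
\begin{align}
  T_{334} \leq \norm{\Gamma(u_{t-\tau_{\alpha_t}}) - w^*_{\theta_{t-\tau_{\alpha_t}}}} \cdot \norm{\bar F_{\theta_{t-\tau_{\alpha_t}}}(\Gamma(u_{t-\tau_{\alpha_t}})) - \bar F_{\theta_t}(\Gamma(u_{t-\tau_{\alpha_t}}))}.
\end{align}
For the second factor I would invoke Assumption~\ref{assu regularization}(iv), which bounds it by $L_F'' \norm{\theta_{t-\tau_{\alpha_t}} - \theta_t} \left(\norm{\Gamma(u_{t-\tau_{\alpha_t}})} + U_F''\right)$.

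The next step is to control the parameter drift over the window of length $\tau_{\alpha_t}$. Since $\theta_j = w_j = \Gamma(u_j)$ and $\Gamma$ is a nonexpansive projection with $\Gamma(w_j) = w_j$, the update $w_{j+1} = \Gamma\left(w_j + \alpha_j(F_{\theta_j}(w_j, Y_j) - w_j)\right)$ gives $\norm{\theta_{j+1} - \theta_j} \leq \alpha_j \norm{F_{\theta_j}(w_j, Y_j) - w_j}$; by Assumption~\ref{assu regularization}(i),(iii) and $\norm{w_j} \leq C_\Gamma$ this is at most $\alpha_j\left((L_F + 1)C_\Gamma + U_F\right) = \alpha_j L_\theta$. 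Summing over $j$ from $t - \tau_{\alpha_t}$ to $t-1$ yields $\norm{\theta_{t-\tau_{\alpha_t}} - \theta_t} \leq L_\theta \alpha_{t-\tau_{\alpha_t}, t-1}$. The standing hypothesis $\alpha_{t-\tau_{\alpha_t}, t-1} \leq \frac{1}{4A}$, guaranteed once $t_0$ is large, is exactly what licenses the iterate-movement bound Lemma~\ref{lem bound of xk diff}.

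It then remains to re-express both $\norm{\Gamma(u_{t-\tau_{\alpha_t}}) - w^*_{\theta_{t-\tau_{\alpha_t}}}}$ and $\norm{\Gamma(u_{t-\tau_{\alpha_t}})} + U_F''$ in terms of the anchor quantity $\norm{\Gamma(u_t) - w^*_{\theta_t}}$. For the former I would apply Lemma~\ref{lem bound of xk diff}, which under the window condition bounds a shifted iterate error by the anchor error up to the multiplicative factor $1 + L_w L_\theta \alpha_{t-\tau_{\alpha_t}, t-1}$ (coming from $\norm{w^*_{\theta_t} - w^*_{\theta_{t-\tau_{\alpha_t}}}} \leq L_w L_\theta \alpha_{t-\tau_{\alpha_t}, t-1}$) plus an additive constant; for the latter I would add and subtract $w^*_{\theta_{t-\tau_{\alpha_t}}}$, bound $\norm{w^*_{\theta_{t-\tau_{\alpha_t}}}} \leq U_w$, and again invoke Lemma~\ref{lem bound of xk diff}. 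Multiplying the two factors, pulling out the $L_F'' L_\theta \alpha_{t-\tau_{\alpha_t}, t-1}(1 + L_w L_\theta \alpha_{t-\tau_{\alpha_t}, t-1})$ prefactor, and applying $pq \leq \frac{1}{2}(p^2 + q^2)$ with the scaling chosen so the remaining additive constants fold into $C \doteq AU_w + B + A + A(1 + U_F' + U_F'')$ produces the claimed
\begin{align}
  T_{334} \leq \frac{4L_F'' L_\theta \alpha_{t-\tau_{\alpha_t}, t-1}(1 + L_wL_\theta \alpha_{t-\tau_{\alpha_t}, t-1})}{A^2}\left(A^2 \norm{\Gamma(u_t) - w^*_{\theta_t}}^2 + C^2\right).
\end{align}
I do not anticipate any conceptual difficulty here — the proof is structurally identical to that of Lemma~\ref{lem bound of t333} — so the main (and essentially only) work is constant bookkeeping: choosing the AM--GM scaling so the final coefficient comes out exactly as $\frac{4L_F'' L_\theta \alpha(1 + L_w L_\theta \alpha)}{A^2}$ with no wasted slack, and checking that every invocation of Lemma~\ref{lem bound of xk diff} is valid under $\alpha_{t-\tau_{\alpha_t}, t-1} \leq \frac{1}{4A}$.
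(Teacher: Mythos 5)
Your proposal is correct and follows essentially the same route as the paper: Cauchy--Schwarz on the inner product, the accumulated-drift bound $\norm{\bar F_{\theta_t}(\Gamma(u_{t-\tau_{\alpha_t}})) - \bar F_{\theta_{t-\tau_{\alpha_t}}}(\Gamma(u_{t-\tau_{\alpha_t}}))} \leq L_F'' L_\theta \alpha_{t-\tau_{\alpha_t}, t-1}\left(\norm{\Gamma(u_{t-\tau_{\alpha_t}})} + U_F''\right)$ (the paper packages your inline per-step drift argument as Lemma~\ref{lem accu learning rates}, using Assumption~\ref{assu regularization}(iv)), the bound $\norm{\Gamma(u_{t-\tau_{\alpha_t}}) - w^*_{\theta_{t-\tau_{\alpha_t}}}} \leq \frac{2(1+L_wL_\theta\alpha_{t-\tau_{\alpha_t},t-1})}{A}\left(A\norm{\Gamma(u_t)-w^*_{\theta_t}}+C\right)$ from \eqref{eq gradient bound dual norm}, and the re-expression of $\norm{\Gamma(u_{t-\tau_{\alpha_t}})}$ via Lemma~\ref{lem bound of xk diff} as in \eqref{eq tmp 3}, all under the standing window condition $\alpha_{t-\tau_{\alpha_t}, t-1}\leq\frac{1}{4A}$. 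The only caveat is the final constant bookkeeping you flagged: the natural folding (as in the paper's own proof of Lemma~\ref{lem bound of t333}) yields a factor $8$ rather than $4$ in front, a slack shared by the paper and immaterial since the constants are absorbed into $C_0$ downstream.
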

  \noindent
  The proof of Lemma~\ref{lem bound t334} is provided in Section~\ref{sec proof lem bound t334}.
  
  \begin{restatable}{lemma}{lemboundtfive}
    \label{lem bound t5}
    (Bound of $T_5$)
    \begin{align}
        T_5 
        \leq 2\left(A^2 \norm{\Gamma(u_t) - w^*_{\theta_t}}^2 + C^2\right).
    \end{align}
  \end{restatable}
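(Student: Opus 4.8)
The plan is to bound $T_5 = \norm{F_{\theta_t}(\Gamma(u_t), Y_t) - \Gamma(u_t)}^2$ by a pure triangle-inequality argument that uses only the linear-growth estimates in Assumption~\ref{assu regularization} together with the boundedness of the fixed points; no information about $Y_t$, the Markov chain, or the pseudo-contraction is needed, so this is a deterministic pointwise bound. First I would control the size of $F_{\theta_t}$ at an arbitrary point: writing $F_{\theta_t}(w, y) = \left(F_{\theta_t}(w, y) - F_{\theta_t}(0, y)\right) + F_{\theta_t}(0, y)$ and applying Assumption~\ref{assu regularization}(i) and~(iii) gives $\norm{F_{\theta_t}(w, y)} \leq L_F \norm{w} + U_F$, hence $\norm{F_{\theta_t}(w, y) - w} \leq (L_F + 1)\norm{w} + U_F$ for every $w, y, \theta_t$.

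Next I would recenter $\norm{\Gamma(u_t)}$ at $w^*_{\theta_t}$: by the triangle inequality and Assumption~\ref{assu regularization}(vi), $\norm{\Gamma(u_t)} \leq \norm{\Gamma(u_t) - w^*_{\theta_t}} + \norm{w^*_{\theta_t}} \leq \norm{\Gamma(u_t) - w^*_{\theta_t}} + U_w$. Combining the two estimates yields $\norm{F_{\theta_t}(\Gamma(u_t), Y_t) - \Gamma(u_t)} \leq (L_F + 1)\norm{\Gamma(u_t) - w^*_{\theta_t}} + (L_F + 1)U_w + U_F$. Squaring and using $(a+b)^2 \leq 2a^2 + 2b^2$ with $a = (L_F + 1)\norm{\Gamma(u_t) - w^*_{\theta_t}}$ and $b = (L_F + 1)U_w + U_F$ gives $T_5 \leq 2(L_F + 1)^2 \norm{\Gamma(u_t) - w^*_{\theta_t}}^2 + 2\left((L_F + 1)U_w + U_F\right)^2$.

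To finish, I would absorb the constants into $A$ and $C$ as defined in~\eqref{eq shorthand a and b}: since $L_F \geq 0$, we have $L_F + 1 \leq 2L_F + 1 = A$, so $(L_F + 1)^2 \leq A^2$; and $(L_F + 1)U_w + U_F \leq A U_w + B \leq C$ by the definition $C = AU_w + B + A + A(1 + U_F' + U_F'')$. Substituting these delivers $T_5 \leq 2\left(A^2 \norm{\Gamma(u_t) - w^*_{\theta_t}}^2 + C^2\right)$, which is the claim. There is no genuine obstacle here; the only point to watch is that the slack deliberately built into $A$ and $C$ in~\eqref{eq shorthand a and b} is exactly what allows the factors $L_F + 1$ and $(L_F+1)U_w + U_F$ to be swallowed, so that the same constants $A, C$ can be reused uniformly across all of the $T$-bounds in this section.
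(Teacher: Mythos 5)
Your proof is correct and follows essentially the same route as the paper: bound $\norm{F_{\theta_t}(\Gamma(u_t),Y_t)}$ by $U_F + L_F\norm{\Gamma(u_t)}$ (the paper isolates this as Lemma~\ref{lem bound of fxy}, proved exactly by your recentering at $0$), recenter $\Gamma(u_t)$ at $w^*_{\theta_t}$, apply $(a+b)^2 \leq 2a^2 + 2b^2$, and absorb the constants into $A$ and $C$ from~\eqref{eq shorthand a and b}. The only cosmetic difference is the order in which the slack $L_F+1 \leq A$ and $U_F = B$ is invoked, so nothing further is needed.
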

  \noindent
  The proof of Lemma~\ref{lem bound t5} is provided in Section~\ref{sec proof lem bound t5}.

  \begin{restatable}{lemma}{lemboundtsix}
    \label{lem bound t6}
    (Bound of $T_6$)
    \begin{align}
        T_6 =  \norm{w^*_{\theta_t} - w^*_{\theta_{t+1}}}^2 \leq L_w^2L_\theta^2 \alpha_t^2.
    \end{align}
  \end{restatable}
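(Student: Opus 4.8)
\textbf{Proof proposal for Lemma~\ref{lem bound t6}.}
The plan is to chain three elementary estimates. First I would reduce $T_6$ to a bound on the one-step movement of the iterate. Since $\theta_t$ and $w_t$ denote the same quantity and $\theta_{t+1} \equiv w_{t+1}$, Assumption~\ref{assu regularization}(v) gives
\begin{align}
  T_6 = \norm{w^*_{\theta_t} - w^*_{\theta_{t+1}}}^2 \leq L_w^2 \norm{\theta_t - \theta_{t+1}}^2 = L_w^2 \norm{w_t - w_{t+1}}^2,
\end{align}
so it suffices to prove $\norm{w_t - w_{t+1}} \leq L_\theta \alpha_t$.

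Next I would control $\norm{w_t - w_{t+1}}$ using the non-expansiveness of the projection $\Gamma$ (projection onto a ball is $1$-Lipschitz) together with the fact that $\norm{w_t} \leq C_\Gamma$ for all $t$, which follows from Assumption~\ref{assu projection}(i) and the definition of the update (or directly from Lemma~\ref{lem transform}, since $w_t = \Gamma(u_t)$). Because $\norm{w_t}\le C_\Gamma$ we have $\Gamma(w_t) = w_t$, hence from the update~\eqref{eq sa iterates},
\begin{align}
  \norm{w_{t+1} - w_t} = \norm{\Gamma\!\left(w_t + \alpha_t(F_{\theta_t}(w_t, Y_t) - w_t)\right) - \Gamma(w_t)} \leq \alpha_t \norm{F_{\theta_t}(w_t, Y_t) - w_t}.
\end{align}
Finally, I would bound the increment magnitude by $L_\theta$: by Assumption~\ref{assu regularization}(i) and~(iii),
$\norm{F_{\theta_t}(w_t, Y_t)} \leq \norm{F_{\theta_t}(w_t, Y_t) - F_{\theta_t}(0, Y_t)} + \norm{F_{\theta_t}(0, Y_t)} \leq L_F \norm{w_t} + U_F$, so
\begin{align}
  \norm{F_{\theta_t}(w_t, Y_t) - w_t} \leq (L_F + 1)\norm{w_t} + U_F \leq (L_F+1)C_\Gamma + U_F = L_\theta,
\end{align}
using $\norm{w_t}\le C_\Gamma$ and the definition $L_\theta \doteq U_F + (L_F+1)C_\Gamma$ from the statement of Theorem~\ref{thm sa convergence}. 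Combining the three displays gives $\norm{w_t - w_{t+1}} \leq L_\theta \alpha_t$, and therefore $T_6 \leq L_w^2 L_\theta^2 \alpha_t^2$.

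There is essentially no obstacle in this lemma; it is a routine one-step bound. The only points requiring care are invoking $\Gamma(w_t) = w_t$ so that $\Gamma$ may be treated as a contraction fixing $w_t$, and checking that the crude bound $(L_F+1)C_\Gamma + U_F$ is exactly the constant $L_\theta$ already introduced, so that the statement comes out with the claimed constants rather than a larger multiple.
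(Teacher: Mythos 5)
Your proof is correct and is essentially the paper's argument: the paper obtains this bound as the $j=i+1$ special case of Lemma~\ref{lem accu learning rates}, whose proof runs through exactly your chain of estimates (Lipschitz continuity of $\theta \mapsto w^*_\theta$, non-expansiveness of $\Gamma$ applied to the one-step update, and the crude bound $\norm{F_{\theta_t}(w_t, Y_t) - w_t} \leq U_F + (L_F+1)C_\Gamma = L_\theta$), merely phrased on the auxiliary sequence $\qty{u_t}$ via Lemma~\ref{lem transform} rather than directly on $\qty{w_t}$.
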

  \noindent
  Lemma~\ref{lem bound t6} follows immediately from Lemma~\ref{lem accu learning rates}.
  
  We now assemble the bounds in Lemmas~\ref{lem bound t1} - \ref{lem bound t6} back into \eqref{eq expansion of M}.
  Define 
  \begin{align}
    L_{\alpha, t} \doteq& \left(\sum_{j=t-\tau_{\alpha_t}}^{t} \alpha_{t-\tau_{\alpha_t}, j} \right) (1 + L_w L_\theta \max\qty{1 , \alpha_{t-\tau_{\alpha_t}, t}}), \\
    C_0 \doteq& \max\qty{A^2, C^2} \max\qty{16, L_w^2 L_\theta^2, 8\ny L_P L_\theta, 4L_F'L_\theta, 4L_F''L_\theta},
  \end{align}
  Using $A > 1$ and Lemmas~\ref{lem bound of t31}~-~\ref{lem bound t6},
  it is easy to see 
  \begin{align}
    \E\left[T_{3}\right] \leq& C_0 L_{\alpha, t} \left(\E\left[\norm{\Gamma(u_t) - w^*_{\theta_t}}^2\right] + 1\right), \\
    \alpha_t\E\left[T_{3}\right] \leq& C_0 \alpha_t L_{\alpha, t}\left(\E\left[\norm{\Gamma(u_t) - w^*_{\theta_t}}^2\right] + 1\right), \\
    \E\left[T_5\right] \leq& C_0 \left(\E\left[\norm{\Gamma(u_t) - w^*_{\theta_t}}^2\right] + 1\right), \\
    \alpha_t^2\E\left[T_5\right] \leq& C_0 \alpha_t^2 \left(\E\left[\norm{\Gamma(u_t) - w^*_{\theta_t}}^2\right] + 1\right), \\
    \E\left[T_6\right] \leq& C_0 \alpha_t^2.
  \end{align}
  Then we have
  \begin{align}
    &\frac{1}{2} \E\left[\norm{u_{t+1} - w^*_{\theta_{t+1}}}^2\right] \\
    \leq&\frac{1}{2}\E\left[\norm{\Gamma(u_t) - w^*_{\theta_t}}^2\right] + L_w L_\theta \alpha_t \E\left[\norm{\Gamma(u_t) - w^*_{\theta_t}}\right] -(1 - \kappa_{\alpha_t}) \E\left[\norm{\Gamma(u_t) - w^*_{\theta_t}}^2\right]\\
    &+ C_0 \alpha_t L_{\alpha, t} \left(\E\left[\norm{\Gamma(u_t) - w^*_{\theta_t}}^2\right] + 1\right) + C_0 \alpha_t^2 \left(\E\left[\norm{\Gamma(u_t) - w^*_{\theta_t}}^2\right] + 1\right)\\
    &+ C_0 \alpha_t^2,
  \end{align}
  implying
  \begin{align}
    \E\left[\norm{u_{t+1} - w^*_{\theta_{t+1}}}^2\right] \leq& \left(1 - 2\left(1 - \kappa_{\alpha_t} - C_0 \alpha_t L_{\alpha, t} - C_0 \alpha_t^2 \right)\right) \E\left[\norm{\Gamma(u_{t}) - w^*_{\theta_{t}}}^2\right] \\
    &+ 2 L_w L_\theta \alpha_t  \E\left[\norm{\Gamma(u_{t}) - w^*_{\theta_{t}}}\right] + 2 C_0 \alpha_t L_{\alpha, t} + 4C_0 \alpha_t^2.
  \end{align}
  Observing that
  \begin{align}
    L_{\alpha, t} = \fO\left(\alpha_t \log^2 \alpha_t \right)
  \end{align}
  then completes the proof.
\end{proof}

\subsection{Proof of Corollary~\ref{thm sa single weight}}
\label{sec proof of thm sa single weight}
\thmsasingleweight*
\begin{proof}
  According to Theorem~\ref{thm sa convergence},
  we have
  \begin{align}
    \E\left[\norm{u_{t+1} - w^*_{\theta_{t+1}}}^2\right] \leq& \left(1 - 2\left(1 - \kappa_{\alpha_t} - C_0 \alpha_t L_{\alpha, t} - C_0 \alpha_t^2 \right)\right) \E\left[\norm{\Gamma(u_{t}) - w^*_{\theta_{t}}}^2\right] \\
    &+ 2L_w L_\theta \alpha_t  \E\left[\norm{\Gamma(u_{t}) - w^*_{\theta_{t}}}\right] + 2C_0 \alpha_t L_{\alpha, t} + 4C_0 \alpha_t^2.
  \end{align}
  Since $L_{\alpha, t} = \fO\left(\alpha_t \log^2 \alpha_t\right)$,
  we conclude that there exists a constant $C_1 > 0$ such that
  \begin{align}
    \E\left[\norm{u_{t+1} - w^*_{\theta_{t+1}}}^2\right] \leq& \left(1 - 2\left(1 - \sqrt{1 - \eta \alpha_t} - C_1 \alpha_t^2 \log^2 \alpha_t \right)\right) \E\left[\norm{\Gamma(u_{t}) - w^*_{\theta_{t}}}^2\right] \\
    &+ 2L_w L_\theta \alpha_t  \E\left[\norm{\Gamma(u_{t}) - w^*_{\theta_{t}}}\right] + C_1 \alpha_t^2 \log^2 \alpha_t.
  \end{align}
  When $t_0$ is sufficiently large,
  we have $\forall t$,
  \begin{align}
    1 - 2\left(1 - \sqrt{1 - \eta \alpha_t} - C_1 \alpha_t^2 \log^2 \alpha_t \right) > 0.
  \end{align}
  Using 
  \begin{align}
    \norm{\Gamma(u_t) - w^*_{\theta_t}} = \norm{\Gamma(u_t) - \Gamma(w^*_{\theta_t})} \leq \norm{u_t - w^*_{\theta_t}}
  \end{align}
  then yields
  \begin{align}
    &\E\left[\norm{u_{t+1} - w^*_{\theta_{t+1}}}^2\right] \\
    \leq& \left(1 - 2\left(1 - \sqrt{1 - \eta \alpha_t} - C_1 \alpha_t^2 \log^2 \alpha_t \right)\right) \E\left[\norm{u_{t} - w^*_{\theta_{t}}}^2\right] \\
    &+ 2L_w L_\theta \alpha_t  \E\left[\norm{u_{t} - w^*_{\theta_{t}}}\right] + C_1 \alpha_t^2 \log^2 \alpha_t \\
    \leq& \left(1 - 2\left(1 - \sqrt{1 - \eta \alpha_t} - C_1 \alpha_t^2 \log^2 \alpha_t \right)\right) \E\left[\norm{u_{t} - w^*_{\theta_{t}}}^2\right] \\
    &+ 2L_w L_\theta \alpha_t  \sqrt{\E\left[\norm{u_{t} - w^*_{\theta_{t}}}^2\right]} + C_1 \alpha_t^2 \log^2 \alpha_t
    \qq{(Jensen's inequality)}.
  \end{align}
  Since 
  \begin{align}
    \lim_{\alpha \to 0} \frac{1 - \sqrt{1 - \eta \alpha}}{\frac{\eta}{2}\alpha} = 1,
  \end{align}
  we conclude that when $t_0$ is sufficiently large, $\forall t$,
  \begin{align}
    1 - \sqrt{1 - \eta \alpha_t} - C_1 \alpha_t^2 \log^2 \alpha_t \geq \frac{\eta}{3}\alpha_t - C_1 \alpha_t^2 \log^2 \alpha_t^2 \geq \frac{\eta}{4}\alpha_t.
  \end{align}
  With 
  \begin{align}
    z_t \doteq \sqrt{\E\left[\norm{u_{t} - w^*_{\theta_{t}}}^2\right]},
  \end{align}
  we then get 
  \begin{align}
    z_{t+1}^2 \leq \left(1 - \frac{\eta}{2}\alpha_t\right) z_t^2 + 2L_w L_\theta \alpha_t z_t + C_1 \alpha_t^2 \log^2 \alpha_t.
  \end{align}
  If 
  \begin{align}
    \left(1 - \frac{\eta}{2}\alpha_t\right) z_t^2 + 2L_w L_\theta \alpha_t z_t &\leq (1 - \frac{\eta}{3} \alpha_t) z_t^2, \\
    \label{eq special sa case1}
    \iff \frac{12 L_wL_\theta}{\eta} &\leq z_t,
  \end{align}
  we have
  \begin{align}
    z_{t+1}^2 \leq (1 - \frac{\eta}{3}\alpha_t) z_t^2 + C_1 \alpha_t^2 \log^2 \alpha_t.
  \end{align}
  If 
  \begin{align}
    \label{eq special sa case2}
    \frac{12L_wL_\theta}{\eta} &\geq z_t,
  \end{align}
  we have
  \begin{align}
    z_{t+1}^2 \leq& \left(1 - \frac{\eta}{2}\alpha_t\right) z_t^2 + 2 L_w L_\theta \alpha_t z_t + C_1 \alpha_t^2 \log^2 \alpha_t \\
    \leq &\left(1 - \frac{\eta}{2}\alpha_t\right) z_t^2 + \frac{24L_w^2L_\theta^2}{\eta} \alpha_t + C_1 \alpha_t^2 \log^2 \alpha_t.
  \end{align}
  Since for any time $t$, one of \eqref{eq special sa case1} and \eqref{eq special sa case2} must hold,
  we always have
  \begin{align}
    \label{eq special sa recursive bound}
    z_{t+1}^2 \leq &\left(1 - \frac{\eta}{3}\alpha_t\right) z_t^2 + \frac{24L_w^2L_\theta^2}{\eta} \alpha_t + C_1 \alpha_t^2 \log^2 \alpha_t.
  \end{align}
  Telescoping the above inequality from $t_0$ to $t$ yields
  \begin{align}
    z_{t}^2 \leq& \underbrace{\prod_{i=t_0}^{t-1}\left(1 - \frac{\eta}{3}\alpha_i\right)}_{E_1} z_{t_0}^2 + \frac{24 L_w^2L_\theta^2}{\eta} \underbrace{\sum_{i=t_0}^{t-1} \prod_{j=i+1}^{t-1} \left(1 - \frac{\eta}{3}\alpha_j\right) \alpha_i}_{E_2} \\
    &+C_1 \underbrace{\sum_{i=t_0}^{t-1} \prod_{j=i+1}^{t-1} \left(1 - \frac{\eta}{3}\alpha_j\right) \alpha_i^2 \log^2 \alpha_i}_{E_3},
  \end{align}
  where we adopt the convention that $\prod_{x=i}^j\left(\cdot\right) = 1$ if $i > j$.
  For $E_1$,
  using $1 + x \leq \exp x$ yields
  \begin{align}
    \label{eq special sa recursive bound1}
    E_1 \leq& \exp\left(-\frac{\eta}{3}\sum_{i=t_0}^{t-1} \alpha_i\right) \leq \exp\left(-\frac{\eta}{3} \int_{x=t_0}^t \frac{c_\alpha}{(t_0+x)^{\epsilon_\alpha}} dx\right) \\
    =& \begin{cases}
\left(\frac{2t_0}{t_0 + t}\right)^{\frac{\eta c_\alpha}{3}}, & \epsilon_\alpha = 1 \\
\exp\left( \frac{\eta c_\alpha}{3(1 - \epsilon_\alpha)} \left((2t_0)^{1 - \epsilon_\alpha} - (t_0 + t)^{1-\epsilon_\alpha}\right)\right), & \epsilon_\alpha \in (0, 1)
    \end{cases} .
  \end{align}
  For $E_2$,
  define
  \begin{align}
    B_t \doteq \sum_{i=0}^{t} \prod_{j=i+1}^{t} \left(1 - \frac{\eta}{3}\alpha_j\right) \alpha_i.
  \end{align}
  Then we have
  \begin{align}
    B_t = \alpha_t + (1 - \frac{\eta}{3}\alpha_t) B_{t-1}.
  \end{align}
  When $t_0$ is sufficiently large such that
  \begin{align}
    1 - \frac{\eta}{3} \alpha_t > 0,
  \end{align}
  it is easy to see
  \begin{align}
    B_{t-1} \leq \frac{3}{\eta} \implies B_t \leq \frac{3}{\eta}.
  \end{align}
  As $B_0 = \alpha_0$,
  we have $B_0 < \frac{3}{\eta}$ for sufficiently large $t_0$.
  We, therefore, conclude by induction that $\forall t$,
  \begin{align}
    B_t \leq \frac{3}{\eta}.
  \end{align}
  Consequently,
  \begin{align}
    \label{eq special sa recursive bound2}
    E_2 \leq B_{t-1} \leq \frac{3}{\eta}.
  \end{align}
  For $E_3$,
  we have
  \begin{align}
    \label{eq special sa recursive bound3}
    E_3 \leq& \log^2 \alpha_t \underbrace{\sum_{i=t_0}^{t-1} \prod_{j=i+1}^{t-1} \left(1 - \frac{\eta}{3}\alpha_j\right) \alpha_i^2}_{E_4}.
  \end{align}
  If $\epsilon_\alpha=1$,
  we have 
  \begin{align}
    E_4 \leq& \sum_{i=t_0}^{t-1} \exp\left(-\frac{\eta}{3} \int_{x=i+1}^t \frac{c_\alpha}{(t_0 + x)^{\epsilon_\alpha}} dx\right) \frac{c_\alpha^2}{(t_0 + i)^{2\epsilon_\alpha}} \\
    =& \sum_{i=t_0}^{t-1} \left(\frac{t_0 + i + 1}{t+t_0}\right)^{\frac{\eta c_\alpha}{3}} \frac{c_\alpha^2}{(t_0 + i)^2} \\
    =& \sum_{i=t_0}^{t-1} \left(\frac{t_0 + i + 1}{t+t_0}\right)^{\frac{\eta c_\alpha}{3}} \frac{c_\alpha^2}{(t_0 + i + 1)^2} \left(\frac{t_0+i+1}{t_0+i} \right)^2 \\
    \leq& \frac{4c_\alpha^2 }{(t+t_0)^{\frac{\eta c_\alpha}{3}}} \sum_{i=t_0}^{t-1} \frac{1}{(t_0+i+1)^{2 - \frac{\eta c_\alpha}{3}}} \\
    =& \begin{cases} 
      \fO\left(t^{-\frac{\eta c_\alpha}{3}} \right), & \eta c_\alpha \in (0, 3) \\
      \fO\left(\frac{\log t}{t}\right), & \eta c_\alpha = 3 \\
      \fO\left(\frac{1}{t}\right), & \eta c_\alpha \in (3, \infty)
    \end{cases}.
  \end{align}
  If $\epsilon_\alpha \in (0, 1)$,
  when $t_0$ is sufficiently large,
  we can use induction 
  (see, e.g., Section A.3.7 of \citet{chen2021lyapunov}) to show that
  \begin{align}
    E_4 = \fO\left(\frac{1}{t^{\epsilon_\alpha}}\right).
  \end{align}
  Putting the bounds in \eqref{eq special sa recursive bound1} , \eqref{eq special sa recursive bound2}, and \eqref{eq special sa recursive bound3} back into \eqref{eq special sa recursive bound} yields
  \begin{align}
    z_{t}^2 = \frac{72L_w^2 L_\theta^2}{\eta^2} + \begin{cases} 
      \fO\left(t^{-\frac{\eta c_\alpha}{3}} \log^2 t\right), &\epsilon_\alpha = 1, \eta c_\alpha \in (0, 3) \\
      \fO\left(\frac{\log^3 t}{t}\right), & \epsilon_\alpha = 1, \eta c_\alpha = 3 \\
      \fO\left(\frac{\log^2 t}{t}\right), & \epsilon_\alpha = 1, \eta c_\alpha \in (3, \infty) \\
      \fO\left(\frac{\log^2 t}{t^{\epsilon_\alpha}}\right), & \epsilon_\alpha \in (0, 1)
    \end{cases}.
  \end{align}
  Here we have used the fact that $E_3$ always dominates $E_1$ for any $\epsilon_\alpha, c_\alpha$.
  Using
  \begin{align}
    \E\left[\norm{w_t - w^*_{w_t}}^2\right] = \E\left[\norm{\Gamma(u_t) - \Gamma(w^*_{w_t})}^2\right] \leq \E\left[\norm{u_t - w^*_{w_t}}^2\right] = z_t^2
  \end{align}
  then completes the proof.
\end{proof}

\section{Proofs of Section~\ref{sec sarsa}}
\subsection{Proof of Theorem~\ref{thm sarsa convergence}}

\propcriticconvergence*
\label{sec proof thm sarsa convergence}
\begin{proof}
  To start with,
  define
  \begin{align}
    \label{eq definition of y}
    \fY &\doteq \qty{(s, a, s', a') \mid s \in \fS, a \in \fA, s' \in \fS, p(s'|s, a) > 0}, \\
    Y_t &\doteq (S_t, A_t, S_{t+1}, A_{t+1}), \\
    y &\doteq (s, a, s', a'), \\
    \tilde \pi_\theta(a|s) &\doteq \pi_{\Gamma(\theta)}(a|s), \\
    P_{\theta}((s_1, a_1, s_1', a_1'), (s_2, a_2, s_2', a_2')) &\doteq \begin{cases}
      0 & (s_1', a_1') \neq (s_2, a_2) \\
      p(s_2'|s_2, a_2) \tilde \pi_{\theta}(a_2'|s_2')  & (s_1', a_1') = (s_2, a_2)
    \end{cases}, \\
    F_{\theta}(w, y) \doteq& \left(r(s, a) + \gamma x(s', a')^\top w - x(s, a)^\top w\right)x(s, a) + w.
  \end{align}
  Here our $F_\theta(w, y)$ is actually independent of $\theta$.

  The update of $\qty{w_t}$ in Algorithm~\ref{alg sarsa lambda} with $\lambda = 0$ can then be expressed as
  \begin{align}
    w_{t+1} = \Gamma\left(w_t + \alpha_t \left(F_{\theta_t}(w_t, Y_t) - w_t\right)\right).
  \end{align}
  According to the action selection rule for $A_{t+1}$ specified in Algorithm~\ref{alg sarsa lambda},
  we have
  \begin{align}
    \Pr(Y_{t+1} = y) = P_{\theta_{t+1}}(Y_t, y),
  \end{align}
  Assumption~\ref{assu makovian} is then fulfilled.

  Assumption~\ref{assu uniform ergodicity} is immediately implied by Assumption~\ref{assu mu uniform ergodicity}.
  In particular, 
  for any $\theta$,
  the invariant distribution of 
  the chain induced by $P_\theta$ is $d_{\tilde \pi_{\theta}}(s)\tilde \pi_\theta(a|s)p(s'|s, a) \tilde \pi_\theta(a'|s')$.

  For Assumption~\ref{assu uniform contraction},
  it is easy to see
  \begin{align}
    \bar F_{\theta}(w) =& X^\top D_{\tilde \pi_\theta}(\gamma P_{\tilde \pi_\theta} - I)Xw + X^\top D_{\tilde \pi_\theta} r + w, \\
    f^{\alpha}_\theta(w) =& w + \alpha \left(X^\top D_{\tilde \pi_\theta}(\gamma P_{\tilde \pi_\theta} - I)Xw + X^\top D_{\tilde \pi_\theta} r\right).
  \end{align}
  Define 
  \begin{align}
    w^*_\theta \doteq -\left(X^\top D_{\tilde \pi_\theta}(\gamma P_{\tilde \pi_\theta} - I)X\right)^{-1} X^\top D_{\tilde \pi_\theta} r.
  \end{align}
  It is then easy to see that $w^*_\theta$ is the unique fixed point of $\bar F_\theta(w)$.
  The uniform pseudo-contraction is verified by Lemma~\ref{lem pseudo contraction}.
  In particular,
  we have
  \begin{align}
    \kappa_\alpha \doteq& \sqrt{1 - \eta \alpha}, \\
    \label{eq exact eta}
    \eta \doteq& (1-\gamma)\inf_{\theta} \lambda_{min}\left(X^\top D_{\pi_\theta} X\right) > 0,
  \end{align}
  where $\lambda_{min}(\cdot)$ denotes the minimum eigenvalue of a symmetric positive definite matrix.

  We now verify Assumption~\ref{assu regularization}.
  To verify Assumption~\ref{assu regularization} (i), 
  we have
  \begin{align}
    &\norm{F_\theta(w, y) - F_\theta(w', y)} \\
    \leq& \norm{\gamma x(s, a) x(s', a')^\top - x(s, a) x(s, a)^\top + I}\norm{w - w'} \\
    \leq& \underbrace{\left((1 + \gamma) x_{max}^2 + 1\right)}_{L_F} \norm{w - w'}.
  \end{align}

  Assumption~\ref{assu regularization} (ii) immediately holds since our $F_\theta(w, y)$ is independent of $\theta$.

  To verify Assumption~\ref{assu regularization} (iii),
  we have
  \begin{align}
    \norm{F_{\theta}(0, y)} = \norm{ r(s, a) x(s, a)} \leq \underbrace{r_{max} x_{max}}_{U_F}.
  \end{align}

  To verify Assumption~\ref{assu regularization} (iv),
  we have
  \begin{align}
    &\norm{\bar F_{\theta}(w) - \bar F_{\theta'}(w)}_\infty \\
    =& \norm{X^\top \left(D_{\tilde \pi_\theta}(\gamma P_{\tilde \pi_\theta} - I) - D_{\tilde \pi_{\theta'}}(\gamma P_{\tilde \pi_{\theta'}} - I) \right)Xw + X^\top \left(D_{\tilde \pi_\theta} - D_{\tilde \pi_{\theta'}}\right) r}_\infty \\
    \leq & \norm{X}_\infty^2\norm{D_{\tilde \pi_\theta}(\gamma P_{\tilde \pi_\theta} - I) - D_{\tilde \pi_{\theta'}}(\gamma P_{\tilde \pi_{\theta'}} - I)}_\infty \norm{w}_\infty + \norm{X}_\infty\norm{D_{\tilde \pi_\theta} - D_{\tilde \pi_{\theta'}}}_\infty \norm{r}_\infty.
  \end{align}
  Lemma~\ref{lem continuity of ergodic distribution} asserts that $D_{\pi_\theta}$ is Lipschitz continuous in $\theta$.
  We then conclude, by Lemma~\ref{lem product of lipschitz functions}, 
  that
  there exist positive constants $L_{DP} > 0, L_D > 0$ such that
  \begin{align}
    \norm{D_{\pi_\theta}(\gamma P_{\pi_\theta} - I) - D_{\pi_{\theta'}}(\gamma P_{\pi_{\theta'}} - I)}_\infty \leq& L_{DP} L_\pi \norm{\theta - \theta}_\infty, \\
    \norm{D_{\pi_\theta} - D_{\pi_{\theta'}}}_\infty\leq& L_D L_\pi \norm{\theta - \theta'}_\infty.
  \end{align}
  Importantly, $L_{DP}$ and $L_D$ do not depend on $C_{\Gamma}$.
  It is then easy to see that
  \begin{align}
    \norm{\bar F_{\theta}(w) - \bar F_{\theta'}(w)}_\infty \leq& \left(\norm{X}_\infty^2 L_{DP} L_\pi \norm{w}_\infty + \norm{X}_\infty \norm{r}_\infty L_D L_\pi  \right) \norm{\Gamma(\theta) - \Gamma(\theta')}_\infty \\
    \leq& \left(\norm{X}_\infty^2 L_{DP} L_\pi \norm{w} + \norm{X}_\infty \norm{r}_\infty L_D L_\pi  \right) \norm{\Gamma(\theta) - \Gamma(\theta')} \\
    \leq& \left(\norm{X}_\infty^2 L_{DP} L_\pi \norm{w} + \norm{X}_\infty \norm{r}_\infty L_D L_\pi \right) \norm{\theta - \theta'}.
  \end{align}
  It follow immediately that 
  \begin{align}
    \norm{\bar F_{\theta}(w) - \bar F_{\theta'}(w)} \leq& \sqrt{K}\left(\norm{X}_\infty^2 L_{DP} L_\pi \norm{w} + \norm{X}_\infty \norm{r}_\infty L_D L_\pi \right) \norm{\theta - \theta'}.
  \end{align}

  To verify Assumption~\ref{assu regularization} (v),
  we first use Lemma~\ref{lem bound of matrix inverse diff} to get
  \begin{align}
    &\norm{\left(X^\top D_{\tilde \pi_\theta}(\gamma P_{\tilde \pi_\theta} - I)X\right)^{-1} - \left(X^\top D_{\tilde \pi_{\theta'}}(\gamma P_{\tilde \pi_{\theta'}} - I)X\right)^{-1}}_\infty \\
    \leq&\norm{\left(X^\top D_{\tilde \pi_\theta}(\gamma P_{\tilde \pi_\theta} - I)X\right)^{-1}}_\infty \norm{\left(X^\top D_{\tilde \pi_{\theta'}}(\gamma P_{\tilde \pi_{\theta'}} - I)X\right)^{-1}}_\infty \\
    &\times \norm{X^\top D_{\tilde \pi_\theta}(\gamma P_{\tilde \pi_\theta} - I)X - X^\top D_{\tilde \pi_{\theta'}}(\gamma P_{\tilde \pi_{\theta'}} - I)X}_\infty.
  \end{align}
  Thanks to Assumption~\ref{assu mu uniform ergodicity},
  for any $\theta$,
  \begin{align}
    \left(X^\top D_{\pi_\theta}(\gamma P_{\pi_\theta} - I)X\right)^{-1}
  \end{align}
  is well-defined.
  Since $\bar \Lambda_\pi$ is a compact set,
  we conclude, by the extreme value theorem,
  that there exists a constant $U_{inv} > 0$,
  independent of $C_\Gamma$,
  such that
  \begin{align}
    \sup_\theta \norm{\left(X^\top D_{\pi_\theta}(\gamma P_{ \pi_\theta} - I)X\right)^{-1}}_\infty < U_{inv}.
  \end{align}
  Recalling that $\tilde \pi_\theta(a|s) = \pi_{\Gamma(\theta)}(a|s)$ then yields
  \begin{align}
    \sup_\theta \norm{\left(X^\top D_{\tilde \pi_\theta}(\gamma P_{\tilde \pi_\theta} - I)X\right)^{-1}}_\infty < U_{inv}.
  \end{align}
  It then follows immediately that
  \begin{align}
    &\norm{\left(X^\top D_{\tilde \pi_\theta}(\gamma P_{\tilde \pi_\theta} - I)X\right)^{-1} - \left(X^\top D_{\tilde \pi_{\theta'}}(\gamma P_{\tilde \pi_{\theta'}} - I)X\right)^{-1}}_\infty \leq U_{inv}^2 \norm{X}_\infty^2 L_{DP} L_\pi \norm{\theta - \theta'}_\infty.
  \end{align}
  It is also easy to see that
  \begin{align}
    \norm{X^\top D_{\tilde \pi_\theta} r}_\infty &\leq \norm{X}_\infty \norm{r}_\infty, \\
    \norm{X^\top D_{\tilde \pi_\theta} r - X^\top D_{\tilde \pi_{\theta'}} r}_\infty &\leq \norm{X}_\infty L_D L_\pi \norm{r}_\infty.
  \end{align}
  Using Lemma~\ref{lem product of lipschitz functions} again yields
  \begin{align}
    \norm{w^*_\theta - w^*_{\theta'}}_\infty \leq \left(U_{inv}^2 \norm{X}_\infty^2 L_{DP} \norm{X}_\infty + U_{inv} \norm{X}_\infty L_D\right) L_\pi \norm{r}_\infty \norm{\theta - \theta'}_\infty.
  \end{align}
  It follows immediately that
  \begin{align}
    \label{eq exact lw}
    \norm{w^*_\theta - w^*_{\theta'}} \leq \underbrace{\sqrt{K}\left(U_{inv}^2 \norm{X}_\infty^2 L_{DP} \norm{X}_\infty + U_{inv} \norm{X}_\infty L_D\right) L_\pi \norm{r}_\infty}_{L_w} \norm{\theta - \theta'}.
  \end{align}

  To verify Assumption~\ref{assu regularization} (vi),
  we have
  \begin{align}
    \sup_\theta \norm{w^*_\theta}_\infty \leq U_{inv}\norm{X}_\infty \norm{r}_\infty.
  \end{align}
  It follows immediately that 
  \begin{align}
    \label{eq exact uw}
    \sup_\theta \norm{w^*_\theta} \leq \underbrace{\sqrt{K} U_{inv}\norm{X}_\infty \norm{r}_\infty}_{U_w}.
  \end{align}

  Assumption~\ref{assu regularization} (vii) follows immediately from Assumption~\ref{assu lipschitz mu}.

  We now verify Assumption~\ref{assu projection}.
  Assumption~\ref{assu projection} (i) is fulfilled by our selection of $C_\Gamma$.
  It is easy to see 
  \begin{align}
    \tilde \pi_{\theta}(a|s) = \tilde \pi_{\Gamma(\theta)} (a|s),
  \end{align}
  Assumption~\ref{assu projection} (ii) then follows immediately.

  With Assumptions~\ref{assu makovian} -~\ref{assu projection} satisfied,
  we conclude by Corollary~\ref{thm sa single weight} that
  the iterates $\qty{w_t}$ generated by Algorithm~\ref{alg sarsa lambda} with $\lambda = 0$ satisfy
  \begin{align}
  \E\left[\norm{w_{t} - w^*_{w_{t}}}^2\right] = \frac{72L_w^2 L_\theta^2}{\eta^2} + \begin{cases} 
    \fO\left(t^{-\frac{\eta c_\alpha}{3}} \log^2 t\right), &\epsilon_\alpha = 1, \eta c_\alpha \in (0, 3) \\
    \fO\left(\frac{\log^3 t}{t}\right), & \epsilon_\alpha = 1, \eta c_\alpha = 3 \\
    \fO\left(\frac{\log^2 t}{t}\right), & \epsilon_\alpha = 1, \eta c_\alpha \in (3, \infty) \\
    \fO\left(\frac{\log^2 t}{t^{\epsilon_\alpha}}\right), & \epsilon_\alpha \in (0, 1)
  \end{cases}.
  \end{align}
  where 
  \begin{align}
    L_\theta \doteq& U_F + (L_F + 1) C_\Gamma = \left(r_{max}x_{max} + \left((1+\gamma)x_{max}^2 + 2\right) C_{\Gamma} \right) \\
    \leq& 1 + 4 C_\Gamma.
  \end{align}
  Consequently,
  \begin{align}
    \E\left[\norm{w_t - w^*_{w_t}}\right] \leq& \sqrt{\E\left[\norm{w_{t} - w^*_{w_{t}}}^2\right]} \\
    =& \frac{6\sqrt{2}L_w L_\theta}{\eta} + \begin{cases} 
    \fO\left(t^{-\frac{\eta c_\alpha}{6}} \log t\right), & \epsilon_\alpha = 1, \eta c_\alpha \in (0, 3) \\
    \fO\left(t^{-\frac{1}{2}} \log^{\frac{3}{2}}t \right), & \epsilon_\alpha = 1, \eta c_\alpha = 3 \\
    \fO\left(t^{-\frac{1}{2}} \log t\right), & \epsilon_\alpha = 1, \eta c_\alpha \in (3, \infty) \\
    \fO\left(t^{-\frac{\epsilon_\alpha}{2}} \log t\right), &\epsilon_\alpha \in (0, 1) 
  \end{cases}.
  \end{align}
  If 
  \begin{align}
    \norm{r}_\infty < \frac{1}{\sqrt{K}\left(U_{inv}^2 \norm{X}_\infty^2 L_{DP} \norm{X}_\infty + U_{inv} \norm{X}_\infty L_D\right)},
  \end{align}
  we get
  \begin{align}
    L_w < 1.
  \end{align}
  Since
  \begin{align}
    &\E\left[\norm{w_t - w_*}\right] \\
    =&\E\left[\norm{w_t - w^*_{w_*}}\right] \\
    \leq& \E\left[\norm{w_t - w^*_{w_t}}\right] + \E\left[\norm{w^*_{w_t} - w^*_{w_*}}\right] \\
    \leq& \E\left[\norm{w_t - w^*_{w_t}}\right] + L_w \E\left[\norm{w_t - w_*}\right],
  \end{align}
  we conclude that
  \begin{align}
    \E\left[\norm{w_t - w_*}\right] \leq& \frac{1}{1 - L_w} \E\left[\norm{w_t - w^*_{w_t}}\right]\\
    =& \frac{6\sqrt{2}L_w \left(1+4C_\Gamma\right)}{\eta (1-L_w)} + 
    \begin{cases} 
    \fO\left(t^{-\frac{\eta c_\alpha}{6}} \log t\right), & \epsilon_\alpha = 1, \eta c_\alpha \in (0, 3) \\
    \fO\left(t^{-\frac{1}{2}} \log^{\frac{3}{2}}t \right), & \epsilon_\alpha = 1, \eta c_\alpha = 3 \\
    \fO\left(t^{-\frac{1}{2}} \log t\right), & \epsilon_\alpha = 1, \eta c_\alpha \in (3, \infty) \\
    \fO\left(t^{-\frac{\epsilon_\alpha}{2}} \log t\right), &\epsilon_\alpha \in (0, 1) 
    \end{cases},
  \end{align}
  which completes the proof.

\end{proof}

\section{Technical Lemmas}
\begin{lemma}
  \label{lem product of lipschitz functions}
  Let $f_1(x), f_2(x)$ be two Lipschitz continuous functions with Lipschitz constants $L_1, L_2$.
  Assume $\norm{f_1(x)} \leq U_1, \norm{f_2(x)} \leq U_2$,
  then
  $L_1U_2 + L_2U_1$ is a Lipschitz constant of $f(x) \doteq f_1(x)f_2(x)$. 
\end{lemma}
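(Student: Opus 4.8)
The plan is the standard ``add and subtract'' argument for products. Fix two points $x, y$ in the common domain. I would write
\begin{align}
  f(x) - f(y) &= f_1(x)f_2(x) - f_1(y)f_2(y) \\
  &= f_1(x)\left(f_2(x) - f_2(y)\right) + \left(f_1(x) - f_1(y)\right)f_2(y).
\end{align}
Applying the triangle inequality and then submultiplicativity of the norm (the norms in play here are $\ell_2$ and the induced $\infty$-norm, both of which are compatible with the relevant matrix/vector products), this gives
\begin{align}
  \norm{f(x) - f(y)} &\leq \norm{f_1(x)}\,\norm{f_2(x) - f_2(y)} + \norm{f_1(x) - f_1(y)}\,\norm{f_2(y)}.
\end{align}

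Next I would invoke the hypotheses directly: $\norm{f_1(x)} \leq U_1$, $\norm{f_2(y)} \leq U_2$, and the Lipschitz bounds $\norm{f_2(x) - f_2(y)} \leq L_2 \norm{x - y}$, $\norm{f_1(x) - f_1(y)} \leq L_1 \norm{x - y}$. Substituting yields
\begin{align}
  \norm{f(x) - f(y)} &\leq U_1 L_2 \norm{x - y} + L_1 U_2 \norm{x - y} = \left(L_1 U_2 + L_2 U_1\right)\norm{x - y},
\end{align}
which is exactly the claimed Lipschitz constant. Since $x, y$ were arbitrary, the proof is complete.

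There is essentially no obstacle here: the only point that requires a word of care is that the ``product'' in the statement is meant in the sense used throughout the paper (scalar, matrix–vector, or matrix–matrix), and that the norm appearing is one for which $\norm{AB} \leq \norm{A}\,\norm{B}$ holds; this is satisfied for the operator norms used in the applications (e.g.\ in verifying Assumption~\ref{assu regularization} (iv)--(vi)), so the one-line submultiplicativity step goes through without modification. Everything else is a direct application of the triangle inequality and the stated bounds, so the argument is three lines and contains no genuinely hard step.
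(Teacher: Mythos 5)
Your proof is correct and is essentially identical to the paper's own argument: the same add-and-subtract decomposition $f_1(x)f_2(x) - f_1(y)f_2(y) = f_1(x)(f_2(x)-f_2(y)) + (f_1(x)-f_1(y))f_2(y)$, followed by the triangle inequality, submultiplicativity, and the stated bounds. No issues.
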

\begin{proof}
  \begin{align}
    &\norm{f_1(x)f_2(x) - f_1(y)f_2(y)} \\
    \leq& \norm{f_1(x)} \norm{f_2(x) - f_2(y)} + \norm{f_2(y)}\norm{f_1(x) - f_1(y)} \\
    \leq& (U_1 L_2 + U_2L_1) \norm{x-y}.
  \end{align}
\end{proof}

\begin{lemma}
  \label{lem bound of xk diff}
  Given positive integers $t_1 < t_2$ satisfying
  \begin{align}
      \alpha_{t_1, t_2 - 1} \leq \frac{1}{4A},
  \end{align}
  we have, for any $t \in [t_1, t_2]$,
  \begin{align}
      \label{eq bound of xk diff1}
      \norm{\Gamma(u_t) - \Gamma(u_{t_1})} &\leq 2 \alpha_{t_1, t_2 - 1}(A\norm{\Gamma(u_{t_1})} + B), \\
      \label{eq bound of xk diff2}
      \norm{\Gamma(u_t) - \Gamma(u_{t_1})} &\leq 4 \alpha_{t_1, t_2 - 1}(A \norm{\Gamma(u_{t_2})} + B), \\
      \label{eq bound of xk diff3}
      \norm{\Gamma(u_t) - \Gamma(u_{t_1})} &\leq \min \qty{\norm{\Gamma(u_{t_1})}, \norm{\Gamma(u_{t_2})} } + \frac{B}{A}.
  \end{align}
\end{lemma}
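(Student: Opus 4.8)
The plan is to reduce all three inequalities to the non-expansiveness of the projection $\Gamma$ together with a single-step growth estimate, and then close the argument with a discrete Gr\"onwall step over the finite window $[t_1,t_2]$. First I would record the one-step bound: from \eqref{eq sa iterates transformed} we have $u_{t+1}-\Gamma(u_t)=\alpha_t\left(F_{\theta_t}(\Gamma(u_t),Y_t)-\Gamma(u_t)\right)$, and Assumption~\ref{assu regularization}(i) and (iii) give $\norm{F_{\theta_t}(\Gamma(u_t),Y_t)}\le L_F\norm{\Gamma(u_t)}+U_F$, hence $\norm{F_{\theta_t}(\Gamma(u_t),Y_t)-\Gamma(u_t)}\le (L_F+1)\norm{\Gamma(u_t)}+U_F\le A\norm{\Gamma(u_t)}+B$ since $A=2L_F+1\ge L_F+1$. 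As $\Gamma$ is the metric projection onto a ball it is $1$-Lipschitz, so this yields the workhorse inequality $\norm{\Gamma(u_{t+1})-\Gamma(u_t)}\le \alpha_t\left(A\norm{\Gamma(u_t)}+B\right)$.

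For \eqref{eq bound of xk diff1} I would set $g_t\doteq\norm{\Gamma(u_t)-\Gamma(u_{t_1})}$ and $c\doteq A\norm{\Gamma(u_{t_1})}+B$, telescope $g_t\le\sum_{s=t_1}^{t-1}\norm{\Gamma(u_{s+1})-\Gamma(u_s)}$, apply the workhorse inequality, and use $\norm{\Gamma(u_s)}\le\norm{\Gamma(u_{t_1})}+g_s$ to obtain $g_t\le c\,\alpha_{t_1,t_2-1}+A\sum_{s=t_1}^{t-1}\alpha_s g_s$ for every $t\in[t_1,t_2]$. The self-referential nature of this bound is the only genuine obstacle; I would resolve it by passing to $G\doteq\max_{t_1\le s\le t_2}g_s$, which is finite since the window is finite, so that $G\le c\,\alpha_{t_1,t_2-1}+A\,\alpha_{t_1,t_2-1}\,G$, and then invoking the hypothesis $\alpha_{t_1,t_2-1}\le\frac{1}{4A}$, which makes $1-A\,\alpha_{t_1,t_2-1}\ge\frac{3}{4}$ and hence $G\le\frac{4}{3}c\,\alpha_{t_1,t_2-1}\le 2\alpha_{t_1,t_2-1}c$.

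Next, \eqref{eq bound of xk diff2} follows by feeding \eqref{eq bound of xk diff1} at $t=t_2$ back in: it gives $\norm{\Gamma(u_{t_1})}\le\norm{\Gamma(u_{t_2})}+2\alpha_{t_1,t_2-1}c$, so $c=A\norm{\Gamma(u_{t_1})}+B\le\left(A\norm{\Gamma(u_{t_2})}+B\right)+2A\,\alpha_{t_1,t_2-1}c$; since $1-2A\,\alpha_{t_1,t_2-1}\ge\frac{1}{2}$ this rearranges to $c\le 2\left(A\norm{\Gamma(u_{t_2})}+B\right)$, and combining with \eqref{eq bound of xk diff1} yields the claim. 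Finally, for \eqref{eq bound of xk diff3} I would substitute $\alpha_{t_1,t_2-1}\le\frac{1}{4A}$ directly into the two bounds just proved: \eqref{eq bound of xk diff1} gives $g_t\le\frac{1}{2}\norm{\Gamma(u_{t_1})}+\frac{B}{2A}\le\norm{\Gamma(u_{t_1})}+\frac{B}{A}$, and \eqref{eq bound of xk diff2} gives $g_t\le\norm{\Gamma(u_{t_2})}+\frac{B}{A}$; taking the minimum of the two finishes the proof. The only step requiring care is the Gr\"onwall closure in the second paragraph; the rest is bookkeeping with the constant $\frac{1}{4A}$.
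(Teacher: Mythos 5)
Your proof is correct, and it reaches the same three inequalities with the same overall skeleton (the one-step workhorse bound via $\norm{F_{\theta}(w,y)}\leq L_F\norm{w}+U_F$ and the non-expansiveness and idempotence of $\Gamma$, then a window-wide growth control, then deriving \eqref{eq bound of xk diff2} from \eqref{eq bound of xk diff1} by feeding it back at $t_2$ and absorbing with $\alpha_{t_1,t_2-1}\leq\frac{1}{4A}$, then \eqref{eq bound of xk diff3} by direct substitution). The genuine difference is in how you close the Gr\"onwall step for \eqref{eq bound of xk diff1}: you telescope the increments, bound $\norm{\Gamma(u_s)}\leq\norm{\Gamma(u_{t_1})}+g_s$, take the maximum $G$ of $g_s$ over the finite window, and solve the resulting self-referential inequality $G\leq c\,\alpha_{t_1,t_2-1}+A\,\alpha_{t_1,t_2-1}G$, which with $A\alpha_{t_1,t_2-1}\leq\frac14$ even yields the slightly sharper factor $\frac{4}{3}$ in place of $2$. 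The paper instead works multiplicatively: it rearranges the one-step bound into $\norm{\Gamma(u_{t+1})}+\frac{B}{A}\leq(1+A\alpha_t)\left(\norm{\Gamma(u_t)}+\frac{B}{A}\right)$, bounds the product $\prod_j(1+A\alpha_j)$ via $1+x\leq\exp(x)\leq 1+2x$ on $[0,\tfrac12]$, obtains a pointwise bound on $\norm{\Gamma(u_t)}$ along the window, and only then bounds each increment and telescopes. Your max-and-rearrange closure is more elementary (no exponential comparison needed) and gives marginally better constants; the paper's product form additionally yields an explicit bound on $\norm{\Gamma(u_t)}$ itself, which is convenient bookkeeping but not needed for the lemma as stated. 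Both arguments deliver the claimed constants, so your proposal stands as a valid alternative proof.
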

\begin{proof}
  Notice that 
  \begin{align}
      &\norm{\Gamma(u_{t+1})} - \norm{\Gamma(u_t)} \\
      \leq& \norm{\Gamma(u_{t+1}) - \Gamma(u_t)} \\
      =& \norm{\Gamma(u_{t+1}) - \Gamma(\Gamma(u_t))} \\
      \leq& \norm{u_{t+1} - \Gamma(u_t)} \\
      =& \alpha_t \norm{F_{\theta_t}(\Gamma(u_t), Y_t) - \Gamma(u_t)}  \\
      \leq& \alpha_t \left( \norm{F_{\theta_t}(\Gamma(u_t), Y_t)} + \norm{\Gamma(u_t)}\right) \\
      \leq&\alpha_t (U_F + (L_F + 1)\norm{\Gamma(u_t)} \qq{(Lemma~\ref{lem bound of fxy})} \\
      \label{eq tmp 8}
      \leq& \alpha_t (A\norm{\Gamma(u_t)} + B) \qq{(Using \eqref{eq shorthand a and b})}
  \end{align}
  The rest of the proof follows from the proof of Lemma A.2 of \citet{chen2021lyapunov} up to changes of notations. 
  We include it for completeness.
  Rearranging terms of the above inequality yields 
  \begin{align}
    \norm{\Gamma(u_{t+1})} + \frac{B}{A} \leq (1 + \alpha_t A)\left(\norm{\Gamma(u_t)} + \frac{B}{A}\right),
  \end{align}
  implying that for any $t \in (t_1, t_2]$,
  \begin{align}
    \norm{\Gamma(u_t)} + \frac{B}{A} \leq \prod_{j=t_1}^{t-1} (1 + A\alpha_j) \left(\norm{\Gamma(u_{t_1})} + \frac{B}{A}\right) .
  \end{align}
  Notice that for any $x \in [0, \frac{1}{2}]$,
  $1 + x \leq \exp(x) \leq 1 + 2x$ always hold.
  Hence 
  \begin{align}
    \alpha_{t_1, t_2-1} \leq \frac{1}{4A} 
  \end{align}
  implies
  \begin{align}
    \prod_{j=t_1}^{t-1} (1 + A\alpha_j) \leq \exp(A\alpha_{t_1, t-1}) \leq 1 + 2A \alpha_{t_1, t-1}.
  \end{align}
  Consequently, for any $t \in (t_1, t_2]$,
  we have
  \begin{align}
    \norm{\Gamma(u_t)} + \frac{B}{A} &\leq \left(1 + 2A \alpha_{t_1, t-1}\right) \left(\norm{\Gamma(u_{t_1})} + \frac{B}{A} \right) \\
    \implies \norm{\Gamma(u_t)} & \leq \left(1 + 2A \alpha_{t_1, t-1}\right) \norm{\Gamma(u_{t_1})} + 2B\alpha_{t_1, t-1},
  \end{align}
  which together with \eqref{eq tmp 8} yields that for any $t \in (t_1, t_2 - 1]$
  \begin{align}
    \norm{\Gamma(u_{t+1}) - \Gamma(u_t)} &\leq \alpha_t \left(A \norm{\Gamma(u_t)} + B\right) \\
    &\leq \alpha_t \left(A \left(1 + 2A \alpha_{t_1, t-1}\right) \norm{\Gamma(u_{t_1})} + 2AB\alpha_{t_1, t-1} + B\right) \\
    &\leq 2 \alpha_t(A\norm{\Gamma(u_{t_1})} + B) \qq{(Using $\alpha_{t_1, t-1} \leq \frac{1}{4A}$)}.
  \end{align}
  Consequently, for any $t \in (t_1, t_2]$, we have
  \begin{align}
    \norm{\Gamma(u_t) - \Gamma(u_{t_1})} &\leq \sum_{j=t_1}^{t-1} \norm{\Gamma(w_{j+1}) - \Gamma(w_j)} \leq \sum_{j=t_1}^{t-1} 2 \alpha_j (A \norm{\Gamma(u_{t_1})} + B) \\
    &= 2 \alpha_{t_1, t-1}(A \norm{\Gamma(u_{t_1})} + B) \leq 2 \alpha_{t_1, t_2-1} (A\norm{\Gamma(u_{t_1})} +B),
  \end{align}
  which completes the proof of \eqref{eq bound of xk diff1}. 
  For \eqref{eq bound of xk diff2},
  we have from the above inequality
  \begin{align}
    \norm{\Gamma(u_{t_2}) - \Gamma(u_{t_1})} \leq& 2\alpha_{t_1, t_2-1} (A\norm{\Gamma(u_{t_1})} + B) \\
    \leq &2\alpha_{t_1, t_2-1} (A\norm{\Gamma(u_{t_1}) - \Gamma(u_{t_2})} + A\norm{\Gamma(u_{t_2})} + B) \\
    \leq& \frac{1}{2}\norm{\Gamma(u_{t_1}) - \Gamma(u_{t_2})} + 2 \alpha_{t_1, t_2 -1} (A\norm{\Gamma(u_{t_2})} + B),
  \end{align}
  implying
  \begin{align}
    \norm{\Gamma(u_{t_2}) - \Gamma(u_{t_1})} \leq 4\alpha_{t_1, t_2 -1} (A\norm{\Gamma(u_{t_2})} + B).
  \end{align}
  Consequently, for any $t \in [t_1, t_2]$,
  \begin{align}
    \norm{\Gamma(u_t) - \Gamma(u_{t_1})}\leq& 2\alpha_{t_1, t_2-1} (A\norm{\Gamma(u_{t_1})} + B) \\
    \leq &2\alpha_{t_1, t_2-1} (A\norm{\Gamma(u_{t_1}) - \Gamma(u_{t_2})} + A\norm{\Gamma(u_{t_2})} + B) \\
    \leq&2\alpha_{t_1, t_2-1} \left(A4\alpha_{t_1, t_2 -1} (A\norm{\Gamma(u_{t_2})} + B) + A\norm{\Gamma(u_{t_2})} + B\right) \\
    \leq& 4\alpha_{t_1, t_2-1}(A\norm{\Gamma(u_{t_2})} + B) \qq{(Using $\alpha_{t_1, t_2-1} \leq \frac{1}{4A}$)},
  \end{align}
  which completes the proof of \eqref{eq bound of xk diff2}.
  \eqref{eq bound of xk diff1} implies
  \begin{align}
    \norm{\Gamma(u_t) - \Gamma(u_{t_1})} \leq \norm{\Gamma(u_{t_1})} + \frac{B}{A},
  \end{align}
  \eqref{eq bound of xk diff2} implies
  \begin{align}
    \norm{\Gamma(u_t) - \Gamma(u_{t_1})} \leq \norm{\Gamma(u_{t_2})} + \frac{B}{A},
  \end{align}
  then \eqref{eq bound of xk diff3} follows immediately,
  which completes the proof.
\end{proof}

\begin{lemma}
  \label{lem continuity of ergodic distribution}
  Let Assumptions \ref{assu lipschitz mu} and \ref{assu mu uniform ergodicity} hold.
  Then there exists a constant $L_{\pi}'$ such that $\forall \theta, \theta', a, s$,
  \begin{align}
    \abs{d_{\pi_\theta}(s, a) - d_{\pi_{\theta'}}(s, a)} \leq L_\pi' \norm{\theta - \theta'}_\infty.
  \end{align}
\end{lemma}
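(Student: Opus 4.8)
The plan is to express $d_{\pi_\theta}$ as the invariant distribution of the state-action transition matrix $P_{\pi_\theta}$ and then combine (a) the standard fact that the invariant distribution of an ergodic chain is a Lipschitz function of its transition matrix, via the fundamental matrix, with (b) the Lipschitz continuity of $P_{\pi_\theta}$ in $\theta$ implied by Assumption~\ref{assu lipschitz mu}, and (c) a \emph{uniform}-in-$\theta$ bound on the fundamental matrix implied by Assumption~\ref{assu mu uniform ergodicity}. If one instead reads $d_{\pi_\theta}(s,a)$ as $d_{\pi_\theta}(s)\pi_\theta(a|s)$ with $d_{\pi_\theta}(s)$ the state-chain invariant distribution, the same argument applies to $d_{\pi_\theta}(s)$ and one concludes via the product rule in Lemma~\ref{lem product of lipschitz functions}; the two routes are equivalent.

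First I would record the elementary perturbation identity. Writing $d \doteq d_{\pi_\theta}$, $d' \doteq d_{\pi_{\theta'}}$, $P \doteq P_{\pi_\theta}$, $P' \doteq P_{\pi_{\theta'}}$, the relations $d^\top P = d^\top$ and $d'^\top P' = d'^\top$ give $(d'-d)^\top(I-P) = d'^\top(P'-P)$, and adding $(d'-d)^\top \mathbf{1} d^\top = 0$ yields
\begin{align}
  (d'-d)^\top = d'^\top(P'-P)\, Z_{\pi_\theta}, \qquad Z_{\pi_\theta} \doteq \left(I - P_{\pi_\theta} + \mathbf{1} d^\top\right)^{-1},
\end{align}
where $Z_{\pi_\theta}$ is the fundamental matrix, well defined because the chain induced by $\pi_\theta$ is ergodic. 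Transposing and using $\norm{d'}_1 = 1$ together with $\norm{M^\top}_1 = \norm{M}_\infty$ gives, for every state-action pair $(s,a)$,
\begin{align}
  \abs{d_{\pi_\theta}(s,a) - d_{\pi_{\theta'}}(s,a)} \le \norm{d_{\pi_\theta} - d_{\pi_{\theta'}}}_1 \le \norm{Z_{\pi_\theta}}_\infty \, \norm{P_{\pi_\theta} - P_{\pi_{\theta'}}}_\infty.
\end{align}

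It then remains to bound the two factors. For the transition matrix, the $\big((s,a),(s',a')\big)$ entry of $P_{\pi_\theta} - P_{\pi_{\theta'}}$ is $p(s'|s,a)\left(\pi_\theta(a'|s') - \pi_{\theta'}(a'|s')\right)$, so Assumption~\ref{assu lipschitz mu} and $\sum_{s'} p(s'|s,a) = 1$ give $\norm{P_{\pi_\theta} - P_{\pi_{\theta'}}}_\infty \le \na L_\pi \norm{\theta-\theta'} \le \na L_\pi \sqrt{K}\,\norm{\theta-\theta'}_\infty$, where $K$ is the dimension of $\theta$. For the fundamental matrix I would use the series $Z_{\pi_\theta} = I + \sum_{n=1}^\infty \left(P_{\pi_\theta}^n - \mathbf{1} d_{\pi_\theta}^\top\right)$ (valid by ergodicity) and Lemma~\ref{lem uniform mixing}, which bounds $\norm{P_{\pi_\theta}^n - \mathbf{1} d_{\pi_\theta}^\top}_\infty \le C_M \tau^n$ uniformly in $\theta$, hence $\sup_\theta \norm{Z_{\pi_\theta}}_\infty \le 1 + C_M\tau/(1-\tau) < \infty$; setting $L_\pi' \doteq \na \sqrt{K}\,L_\pi\left(1 + C_M\tau/(1-\tau)\right)$ finishes the proof.

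The step I expect to be the main obstacle is exactly the uniform-in-$\theta$ control of the fundamental matrix: for a single $\theta$ this is automatic from ergodicity, but $\theta$ ranges over all of $\R^K$, and without uniform ergodicity the fundamental matrices could degenerate. This is what Assumption~\ref{assu mu uniform ergodicity} is there to rule out, either through Lemma~\ref{lem uniform mixing} as above, or, alternatively, through compactness of the closure $\bar\Lambda_\pi$, continuity of $\pi \mapsto Z_\pi$ on the set of ergodic transition matrices, and the extreme value theorem --- the same device used to obtain the constant $U_{inv}$ in the proof of Theorem~\ref{thm sarsa convergence} (and one could equally route the perturbation of the matrix inverse through Lemma~\ref{lem bound of matrix inverse diff}). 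Everything else is routine manipulation of induced matrix norms and the equivalence of $\ell_2$ and $\ell_\infty$ on $\R^K$.
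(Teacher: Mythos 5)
Your argument is correct, but it is worth knowing that the paper does not actually prove this lemma itself: its ``proof'' is a one-line citation to Lemma 9 of \citet{zhang2021breaking}, so your write-up is a self-contained replacement rather than a variant of an in-paper argument. Your route --- the perturbation identity $(d'-d)^\top = d'^\top(P_{\pi_{\theta'}}-P_{\pi_\theta})Z_{\pi_\theta}$ with the fundamental matrix $Z_{\pi_\theta}=(I-P_{\pi_\theta}+\mathbf{1}d_{\pi_\theta}^\top)^{-1}$, the bound $\norm{P_{\pi_\theta}-P_{\pi_{\theta'}}}_\infty \le \na L_\pi\sqrt{K}\norm{\theta-\theta'}_\infty$ from Assumption~\ref{assu lipschitz mu}, and the Neumann-series bound $\sup_\theta\norm{Z_{\pi_\theta}}_\infty \le 1 + C_M\tau/(1-\tau)$ --- is sound and even yields an explicit constant $L_\pi'$, which the cited external lemma does not display. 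The only bookkeeping point to make explicit is the uniform mixing step: Lemma~\ref{lem uniform mixing} as stated concerns the abstract family $\Lambda_P$ of Section~\ref{sec sa} under Assumption~\ref{assu uniform ergodicity}, whereas you apply it to the state-action transition matrices $\qty{P_{\pi_\theta}}$; you should either note (as the proof of Theorem~\ref{thm sarsa convergence} does) that Assumption~\ref{assu mu uniform ergodicity} places this induced family, together with its closure, within the scope of that lemma, or fall back on your alternative argument via compactness of $\bar\Lambda_\pi$, continuity of $\pi\mapsto Z_\pi$ on ergodic kernels, and the extreme value theorem --- which mirrors how the paper obtains $U_{inv}$. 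With that clarification your proof is complete, and arguably more informative than the paper's citation, since the explicit dependence $L_\pi' = \fO\left(L_\pi\right)$ (independent of $C_\Gamma$ and of $r$) is exactly what the proof of Theorem~\ref{thm sarsa convergence} needs downstream.
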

\begin{proof}
  See, e.g., Lemma 9 of \citet{zhang2021breaking}.
\end{proof}

\begin{lemma}
  \label{lem bound of matrix inverse diff}
  For any $\norm{\cdot}$,
  we have
  \begin{align}
    \norm{X^{-1} - Y^{-1}} \leq \norm{X^{-1}} \norm{X-Y} \norm{Y^{-1}}.
  \end{align}
\end{lemma}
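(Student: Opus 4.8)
The plan is to use the resolvent-style algebraic identity
\begin{align}
  X^{-1} - Y^{-1} = X^{-1}(Y - X) Y^{-1},
\end{align}
which holds for any two invertible matrices $X, Y$ of the same size, and then invoke the submultiplicativity of the induced matrix norm. Concretely, first I would verify the identity by writing $X^{-1} - Y^{-1} = X^{-1} Y Y^{-1} - X^{-1} X Y^{-1} = X^{-1}(Y - X) Y^{-1}$, inserting $I = Y Y^{-1}$ on the left factor and $I = X^{-1} X$ on the right factor.

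Next I would take norms on both sides and apply submultiplicativity twice, $\norm{ABC} \le \norm{A}\norm{B}\norm{C}$, to obtain $\norm{X^{-1} - Y^{-1}} \le \norm{X^{-1}} \norm{Y - X} \norm{Y^{-1}}$. Finally, since every norm is symmetric, $\norm{Y - X} = \norm{X - Y}$, which yields the claimed bound $\norm{X^{-1} - Y^{-1}} \le \norm{X^{-1}} \norm{X - Y} \norm{Y^{-1}}$.

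There is essentially no obstacle here: the only implicit hypotheses are that $X$ and $Y$ are invertible (so $X^{-1}, Y^{-1}$ are well-defined) and that $\norm{\cdot}$ is a genuine submultiplicative matrix norm, which is the case for every norm induced by a vector norm as used elsewhere in the paper. The entire argument is two lines and requires no estimates beyond the triangle/submultiplicativity inequalities already used throughout.
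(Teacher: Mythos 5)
Your proposal is correct and is essentially identical to the paper's own proof: both write $X^{-1} - Y^{-1} = X^{-1}YY^{-1} - X^{-1}XY^{-1} = X^{-1}(Y-X)Y^{-1}$ and then apply submultiplicativity of the norm. No gaps.
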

\begin{proof}
  \begin{align}
    \norm{X^{-1} - Y^{-1}} &= \norm{X^{-1}YY^{-1} - X^{-1}XY^{-1}} \leq \norm{X^{-1}} \norm{X-Y} \norm{Y^{-1}}.
  \end{align}
\end{proof}

\begin{lemma}
  \label{lem accu learning rates}
  Recall that
  \begin{align}
    L_{\theta} = U_F + (L_F + 1)C_\Gamma,
  \end{align}
  then
  for any $j > i, y, y', w$,
  \begin{align}
    \norm{w^*_{\theta_j} - w^*_{\theta_i}} \leq& L_w L_\theta \alpha_{i, j-1}, \\
    \abs{P_{\theta_j}(y, y') - P_{\theta_i}(y, y')} \leq& L_P L_\theta \alpha_{i, j-1}, \\
    \norm{F_{\theta_j}(w, y) - F_{\theta_i}(w, y)} \leq& L_F'L_\theta \alpha_{i, j-1}\left(\norm{w} + U_F'\right), \\
    \norm{\bar F_{\theta_j}(w) - \bar F_{\theta_i}(w)} \leq& L_F''L_\theta \alpha_{i, j-1}\left(\norm{w} + U_F''\right).
  \end{align}
\end{lemma}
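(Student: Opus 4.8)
The plan is to first show that consecutive iterates cannot move by more than $L_\theta\alpha_t$, and then telescope this bound and plug it into the Lipschitz estimates of Assumption~\ref{assu regularization}. Since $\theta_t \equiv w_t = \Gamma(u_t)$ (Lemma~\ref{lem transform}) and $\Gamma$ is the projection onto the convex ball of radius $C_\Gamma$, hence non-expansive, the update~\eqref{eq sa iterates} gives, for every $t$,
\begin{align}
  \norm{\theta_{t+1} - \theta_t} = \norm{\Gamma\!\left(w_t + \alpha_t(F_{\theta_t}(w_t, Y_t) - w_t)\right) - \Gamma(w_t)} \leq \alpha_t \norm{F_{\theta_t}(w_t, Y_t) - w_t}.
\end{align}
To bound the right-hand side I would combine Assumption~\ref{assu regularization} (i) and (iii): $\norm{F_{\theta_t}(w_t, Y_t)} \leq \norm{F_{\theta_t}(w_t, Y_t) - F_{\theta_t}(0, Y_t)} + \norm{F_{\theta_t}(0, Y_t)} \leq L_F\norm{w_t} + U_F$, so that $\norm{F_{\theta_t}(w_t, Y_t) - w_t} \leq (L_F+1)\norm{w_t} + U_F$. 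Because $w_t = \Gamma(u_t)$ lies in the ball of radius $C_\Gamma$ we have $\norm{w_t}\leq C_\Gamma$, and therefore $\norm{F_{\theta_t}(w_t, Y_t) - w_t} \leq (L_F+1)C_\Gamma + U_F = L_\theta$, which yields $\norm{\theta_{t+1} - \theta_t}\leq L_\theta\alpha_t$.

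Next I would telescope: for $j > i$, the triangle inequality gives $\norm{\theta_j - \theta_i} \leq \sum_{t=i}^{j-1}\norm{\theta_{t+1} - \theta_t} \leq L_\theta\sum_{t=i}^{j-1}\alpha_t = L_\theta\,\alpha_{i, j-1}$, where the last equality is just the definition of $\alpha_{i,j-1}$ introduced in~\eqref{eq shorthand a and b}.

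Finally I would feed this into Assumption~\ref{assu regularization}: part (v) gives $\norm{w^*_{\theta_j} - w^*_{\theta_i}} \leq L_w\norm{\theta_j - \theta_i} \leq L_w L_\theta\alpha_{i,j-1}$; part (vii) gives $\abs{P_{\theta_j}(y,y') - P_{\theta_i}(y,y')} \leq L_P\norm{\theta_j - \theta_i} \leq L_P L_\theta\alpha_{i,j-1}$; part (ii) gives $\norm{F_{\theta_j}(w,y) - F_{\theta_i}(w,y)} \leq L_F'\norm{\theta_j - \theta_i}(\norm{w} + U_F') \leq L_F' L_\theta\alpha_{i,j-1}(\norm{w} + U_F')$; and part (iv) gives $\norm{\bar F_{\theta_j}(w) - \bar F_{\theta_i}(w)} \leq L_F''\norm{\theta_j - \theta_i}(\norm{w} + U_F'') \leq L_F'' L_\theta\alpha_{i,j-1}(\norm{w} + U_F'')$, which are exactly the four claimed bounds.

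This lemma is essentially bookkeeping, so I do not expect a genuine obstacle; the one point that needs care is the single-iterate structure $\theta_t\equiv w_t$, which is precisely what lets a bound on the drift of the \emph{iterates} double as a bound on the drift of the \emph{controller}, together with the non-expansiveness of $\Gamma$ and the a priori bound $\norm{w_t}\leq C_\Gamma$ coming from the projection. The only mild subtlety is to make sure the $\norm{w}$-dependent factors on the right of (ii) and (iv) are carried through verbatim, since those are evaluated at an arbitrary $w$ rather than at the iterate.
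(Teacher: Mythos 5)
Your proof is correct and follows essentially the same route as the paper's: bound the per-step drift $\norm{\theta_{t+1}-\theta_t}\leq L_\theta \alpha_t$ via the non-expansiveness of $\Gamma$ together with $\norm{F_{\theta_t}(w_t,Y_t)-w_t}\leq (L_F+1)C_\Gamma+U_F$, telescope, and invoke the Lipschitz conditions of Assumption~\ref{assu regularization}. The only cosmetic difference is that the paper applies the Lipschitz bounds step-by-step (inserting $\Gamma(\theta_k)$ via Assumption~\ref{assu projection}), whereas you accumulate the drift first and apply them once, which is equivalent here since $\theta_k = w_k = \Gamma(u_k)$ already lies in the projection ball.
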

\begin{proof}
  \begin{align}
    \norm{w^*_{\theta_j} - w^*_{\theta_i}} \leq& \sum_{k=i}^{j-1} \norm{w^*_{\theta_{k+1}} - w^*_{\theta_k}} \\
    \leq& \sum_{k=i}^{j-1} \norm{w^*_{\theta_{k+1}} - w^*_{\Gamma(\theta_k)}} \qq{(Assumption~\ref{assu projection})}\\
    \leq& \sum_{k=i}^{j-1} L_w \norm{\theta_{k+1} - \Gamma(\theta_k)} \\
    =& \sum_{k=i}^{j-1} L_w \norm{w_{k+1} - \Gamma(w_k)} \\
    =& \sum_{k=i}^{j-1} L_w \norm{\Gamma(u_{k+1}) - \Gamma(\Gamma(u_k))} \qq{(Lemma~\ref{lem transform})} \\
    \leq & \sum_{k=i}^{j-1} L_w \norm{u_{k+1} - \Gamma(u_k)} \\
    =& \sum_{k=i}^{j-1} L_w \alpha_k \norm{F_{\theta_k}(\Gamma(u_k), Y_k) - \Gamma(u_k)} \\
    \leq& \sum_{k=i}^{j-1} L_w \alpha_k \left(U_F + L_F \norm{\Gamma(u_k)} + \norm{\Gamma(u_k)} \right) \qq{(Lemma~\ref{lem bound of fxy})} \\
    \leq& \sum_{k=i}^{j-1} L_w \alpha_k \left(U_F + L_F C_{\Gamma} + C_{\Gamma} \right) \\
    =& L_w L_\theta \alpha_{i, j-1}.
  \end{align}
  Similarly we can get
  \begin{align}
    \abs{P_{\theta_j}(y, y') - P_{\theta_i}(y, y')} \leq&  L_P L_\theta \alpha_{i, j-1}.
  \end{align}
  Moreover,
  \begin{align}
    \norm{F_{\theta_j}(w, y) - F_{\theta_i}(w, y)} \leq& \sum_{k=i}^{j-1} \norm{F_{\theta_{k+1}}(w, y) - F_{\theta_k}(w, y)} \\
    \leq& \sum_{k=i}^{j-1} \norm{F_{\theta_{k+1}}(w, y) - F_{\Gamma(\theta_k)}(w, y)} \qq{(Assumption~\ref{assu projection})} \\
    \leq& \sum_{k=i}^{j-1} L_F' \norm{\theta_{k+1} - \Gamma(\theta_k)}\left(\norm{w} + U_F'\right) \\
    \leq& L_F'L_\theta \alpha_{i, j-1}\left(\norm{w} + U_F'\right).
  \end{align}
  Since $P_\theta = P_{\Gamma(\theta)}$,
  it is easy to see $d_{\theta}(y) = d_{\Gamma(\theta)}(y)$.
  Consequently, $\bar F_{\theta}(w) = \bar F_{\Gamma(\theta)}(w)$.
  We can then similarly get 
  \begin{align}
    \norm{\bar F_{\theta_j}(w) - \bar F_{\theta_i}(w)} \leq& L_F''L_\theta \alpha_{i, j-1}\left(\norm{w} + U_F''\right),
  \end{align}
  which completes the proof.
\end{proof}

\begin{lemma}
  \label{lem pseudo contraction}
  (Lemma 5.4 of \citet{de2000existence})
  There exists an $\bar \alpha$ such that for all $\alpha \in (0, \bar \alpha)$ and all $\theta$,
  \begin{align}
    \norm{f^\alpha_\theta(w) - w^*_\theta} \leq \kappa_\alpha \norm{w - w^*_\theta},
  \end{align}
  where
  \begin{align}
    \kappa_\alpha \doteq \sqrt{1 - (1-\gamma)\inf_{\theta} \lambda_{min}\left(X^\top D_{\pi_\theta} X\right) \alpha} < 1.
  \end{align}
  Here $\lambda_{min}(\cdot)$ denotes the minimum eigenvalue of a symmetric positive definite matrix.
\end{lemma}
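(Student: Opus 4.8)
The plan is to reduce the pseudo-contraction statement to a uniform operator-norm bound on $I + \alpha A_{\tilde\pi_\theta}$, where $A_\pi \doteq X^\top D_\pi(\gamma P_\pi - I)X$ as in~\eqref{eq source of error function}. First I would use that $w^*_\theta$ is the fixed point of $\bar F_\theta$, equivalently $A_{\tilde\pi_\theta}w^*_\theta + X^\top D_{\tilde\pi_\theta}r = 0$, to write
\begin{align}
  f^\alpha_\theta(w) - w^*_\theta = (w - w^*_\theta) + \alpha\left(A_{\tilde\pi_\theta}w + X^\top D_{\tilde\pi_\theta}r\right) = \left(I + \alpha A_{\tilde\pi_\theta}\right)(w - w^*_\theta).
\end{align}
Hence it suffices to show $\norm{(I + \alpha A_\pi)v} \le \kappa_\alpha\norm{v}$ for every $v \in \R^K$, uniformly over $\pi$ in the compact closure $\bar\Lambda_\pi$ of the policy class, which contains every $\tilde\pi_\theta$.

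Second, I would expand $\norm{(I + \alpha A_\pi)v}^2 = \norm{v}^2 + \alpha\, v^\top(A_\pi + A_\pi^\top)v + \alpha^2\norm{A_\pi v}^2$ and control the symmetric part. Since $d_\pi$ is the invariant distribution of $P_\pi$, the classical argument of \citet{tsitsiklis1997analysis} (Jensen's inequality together with $d_\pi^\top P_\pi = d_\pi^\top$) shows that $P_\pi$ is a non-expansion in $\norm{\cdot}_{D_\pi}$, so by Cauchy--Schwarz $(Xv)^\top D_\pi P_\pi(Xv) \le \norm{Xv}_{D_\pi}^2$. Collecting terms this gives
\begin{align}
  v^\top(A_\pi + A_\pi^\top)v = 2\gamma (Xv)^\top D_\pi P_\pi(Xv) - 2\norm{Xv}_{D_\pi}^2 \le -2(1-\gamma)\norm{Xv}_{D_\pi}^2 \le -2(1-\gamma)c\norm{v}^2,
\end{align}
where $c \doteq \inf_\theta \lambda_{min}(X^\top D_{\pi_\theta}X) > 0$ and $\lambda_{min}(X^\top D_{\tilde\pi_\theta}X) \ge c$ since $\tilde\pi_\theta = \pi_{\Gamma(\theta)}$. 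For the quadratic term I would simply bound $\norm{A_\pi v}^2 \le M^2\norm{v}^2$ with $M \doteq \sup_\theta\norm{A_{\pi_\theta}}$. Combining, $\norm{(I + \alpha A_{\tilde\pi_\theta})v}^2 \le (1 - 2\alpha(1-\gamma)c + \alpha^2 M^2)\norm{v}^2$, and taking $\bar\alpha \doteq (1-\gamma)c/M^2$ forces $\alpha^2 M^2 \le \alpha(1-\gamma)c$ for all $\alpha \in (0,\bar\alpha)$, so the bracket is at most $1 - (1-\gamma)c\,\alpha = \kappa_\alpha^2$, which is $<1$ because $(1-\gamma)c\,\alpha > 0$. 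Together with the first display, this yields the claimed bound.

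The main obstacle is establishing $c > 0$ and $M < \infty$, i.e.\ that these inf/sup over $\bar\Lambda_\pi$ are attained and nondegenerate. Both rest on the same package, all of which is already available: $\bar\Lambda_\pi$ is compact; $\pi \mapsto D_\pi$ is continuous on $\bar\Lambda_\pi$ (Lemma~\ref{lem continuity of ergodic distribution}, which extends to the closure by uniform ergodicity), hence so are $\pi \mapsto \lambda_{min}(X^\top D_\pi X)$ and $\pi \mapsto \norm{A_\pi}$; and every $\pi \in \bar\Lambda_\pi$ induces an ergodic chain with $\pi(a|s) > 0$ and strictly positive invariant distribution (Assumption~\ref{assu mu uniform ergodicity}), so $D_\pi \succ 0$, which combined with the full column rank of $X$ (Assumption~\ref{assu full rank}) forces $X^\top D_\pi X \succ 0$ and thus $\lambda_{min}(X^\top D_\pi X) > 0$ at every point of the compact set $\bar\Lambda_\pi$; the extreme value theorem then delivers $c > 0$ and $M < \infty$. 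The only other point requiring care is the non-expansiveness $\norm{P_\pi q}_{D_\pi} \le \norm{q}_{D_\pi}$, but this is standard.
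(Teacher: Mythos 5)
Your proof is correct and is essentially the paper's own argument (adapted from \citet{de2000existence}): both expand the squared distance after one deterministic step into a first-order cross term plus an $\alpha^2$ term, extract the drift $-(1-\gamma)\inf_\theta\lambda_{min}\left(X^\top D_{\pi_\theta}X\right)\norm{w-w^*_\theta}^2$ from the Tsitsiklis--Van Roy weighted-norm fact combined with compactness of $\bar\Lambda_\pi$ and the extreme value theorem, and then choose $\bar\alpha$ as the ratio of the linear and quadratic constants, giving the same $\kappa_\alpha$. The differences are cosmetic: you exploit the affine structure to reduce to an operator-norm bound on $I+\alpha A_{\tilde\pi_\theta}$, derive the cross-term bound from non-expansiveness of $P_\pi$ in $\norm{\cdot}_{D_{\tilde\pi_\theta}}$ rather than from the contraction of $\Pi_{D_{\tilde\pi_\theta}}\bop_{\tilde\pi_\theta}$ used in the paper, and bound the second-order term by $\sup_\pi\norm{A_\pi}^2$ instead of the paper's $(1+\gamma)^2 C_2$, which only changes the (unspecified) value of $\bar\alpha$.
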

\begin{proof}
  The proof is due to \citet{de2000existence}; we rewrite it in our notation for completeness.
  We first recall
  \begin{align}
    f^{\alpha}_\theta(w) =& w + \alpha \left(X^\top D_{\tilde \pi_\theta}(\gamma P_{\tilde \pi_\theta} - I)Xw + X^\top D_{\tilde \pi_\theta} r\right), \\
    w^*_\theta =& -\left(X^\top D_{\tilde \pi_\theta}(\gamma P_{\tilde \pi_\theta} - I)X\right)^{-1} X^\top D_{\tilde \pi_\theta} r.
  \end{align}
  Define 
  \begin{align}
    g_\theta(w) \doteq& X^\top D_{\tilde \pi_\theta}(\gamma P_{\tilde \pi_\theta} - I)Xw + X^\top D_{\tilde \pi_\theta} r \\
    =& X^\top D_{\tilde \pi_\theta} X \left(X^\top D_{\tilde \pi_\theta} X\right)^{-1}X^\top D_{\tilde \pi_\theta} \left(\bop_{\tilde \pi_\theta} Xw - Xw\right) \\
    =& X^\top D_{\tilde \pi_\theta} \Pi_{D_{\tilde \pi_\theta}} \bop_{\tilde \pi_\theta} Xw - X^\top D_{\tilde \pi_\theta} Xw \\
    =& X^\top D_{\tilde \pi_\theta} \left(\Pi_{D_{\tilde \pi_\theta}} \bop_{\tilde \pi_\theta} Xw - Xw \right).
  \end{align}
  By the contraction property (see, e.g., \citet{tsitsiklis1997analysis}),
  \begin{align}
    \norm{\Pi_{D_{\tilde \pi_\theta}} \bop_{\tilde \pi_\theta} Xw - Xw^*_\theta}_{D_{\tilde \pi_\theta}} \leq \gamma \norm{Xw - Xw^*_\theta}_{D_{\tilde \pi_\theta}}.
  \end{align}
  Consequently,
  \begin{align}
    &(w - w^*_\theta)^\top g_\theta(s) \\
    =& \left(Xw - Xw^*_\theta\right)^\top D_{\tilde \pi_\theta} \left(\Pi_{D_{\tilde \pi_\theta}} \bop_{\tilde \pi_\theta} Xw - Xw \right) \\
    =& \left(Xw - Xw^*_\theta\right)^\top D_{\tilde \pi_\theta} \left(\Pi_{D_{\tilde \pi_\theta}} \bop_{\tilde \pi_\theta} Xw - Xw^*_\theta + Xw^*_\theta - Xw \right) \\
    \leq& \norm{Xw - Xw^*_\theta}_{D_{\tilde \pi_\theta}} \norm{\Pi_{D_{\tilde \pi_\theta}} \bop_{\tilde \pi_\theta} Xw - Xw^*_\theta}_{D_{\tilde \pi_\theta}} - \norm{Xw - Xw^*_\theta}_{D_{\tilde \pi_\theta}}^2 
    \intertext{\hfill (Cauthy-Schwarz inequality)}
    \leq& (\gamma - 1)\norm{Xw - Xw^*_\theta}_{D_{\tilde \pi_\theta}}^2 \\
    =& (\gamma - 1) (w - w^*_\theta)^\top \left(X^\top D_{\tilde\pi_\theta} X\right)(w - w^*_\theta).
  \end{align}
  Since $X^\top D_{\pi_\theta} X$ is symmetric and positive define,
  eigenvalues are continuous in the elements of the matrix,
  $\bar \Lambda_\pi$ is compact,
  we conclude,
  by the extreme value theorem,
  that 
  \begin{align}
    C_1 \doteq \inf_{\theta} \lambda_{min}\left(X^\top D_{\pi_\theta} X\right) > 0.
  \end{align}
  Consequently, 
  for any $y$ and $\theta$,
  \begin{align}
    y^\top X^\top D_{\pi_\theta} X y \geq C_1 \norm{y}^2,
  \end{align}
  implying
  \begin{align}
    y^\top X^\top D_{\tilde \pi_\theta} X y \geq C_1 \norm{y}^2.
  \end{align}
  It follows immediately that
  \begin{align}
    \label{eq tmp 9}
    (w - w^*_\theta)^\top g_\theta(w) \leq -(1-\gamma) C_1 \norm{w - w^*_\theta}^2.
  \end{align}
  Moreover, 
  let $x_i$ be the $i$-the column of $X$,
  we have
  \begin{align}
    \norm{g_\theta(w)}^2 =& \sum_{i=1}^K \left(x_i^\top D_{\tilde \pi_\theta} \left(\Pi_{D_{\tilde \pi_\theta}} \bop_{\tilde \pi_\theta} Xw - Xw \right) \right)^2 \\
    \leq& \sum_{i=1}^K \norm{x_i}^2_{D_{\tilde \pi_\theta}}  \norm{\Pi_{D_{\tilde \pi_\theta}} \bop_{\tilde \pi_\theta} Xw - Xw}^2_{D_{\tilde \pi_\theta}} \\
    \intertext{\hfill (Cauchy-Schwarz inequality)}
    \leq& \sum_{i=1}^K \norm{x_i}^2_{D_{\tilde \pi_\theta}}  \left(\norm{\Pi_{D_{\tilde \pi_\theta}} \bop_{\tilde \pi_\theta} Xw -Xw^*_\theta}_{D_{\tilde \pi_\theta}} + \norm{Xw^*_\theta - Xw}_{D_{\tilde \pi_\theta}}\right)^2 \\
    \leq& (1+\gamma)^2 \sum_{i=1}^K \norm{x_i}^2_{D_{\tilde \pi_\theta}}  \norm{Xw^*_\theta - Xw}^2_{D_{\tilde \pi_\theta}} \\
    =& (1+\gamma)^2 \left(\sum_{i=1}^K \norm{x_i}^2_{D_{\tilde \pi_\theta}} \right) \norm{X^\top D_{\tilde \pi_\theta} X} \norm{w - w^*_\theta}^2.
  \end{align}
  According to the extreme value theorem,
  \begin{align}
    C_2 \doteq \sup_\theta \left(\sum_{i=1}^K \norm{x_i}^2_{D_{\pi_\theta}} \right) \norm{X^\top D_{\pi_\theta} X} < \infty.
  \end{align}
  Consequently,
  we have
  \begin{align}
    \label{eq tmp 10}
    \norm{g_\theta(w)}^2 \leq (1+\gamma)^2C_2 \norm{w - w^*_\theta}^2.
  \end{align}
  Combining~\eqref{eq tmp 9} and~\eqref{eq tmp 10} yields
  \begin{align}
    \norm{f_\theta^\alpha(w) - w^*_\theta}^2 =& \norm{w + \alpha g_\theta(w) - w^*_\theta}^2 \\
    =& \norm{w - w^*_\theta}^2 + 2 \alpha (w - w^*_\theta)^\top g_\theta(w) + \alpha^2 \norm{g_\theta(w)}^2 \\
    \leq& \left(1 - 2\alpha(1 - \gamma) C_1 + (1+\gamma)^2 \alpha^2 C_2 \right) \norm{w - w^*_\theta}^2.
  \end{align}
  Consequently,
  if 
  \begin{align}
    \alpha < \bar \alpha \doteq \frac{(1 - \gamma)C_1}{(1 + \gamma)^2 C_2},
  \end{align}
  we have
  \begin{align}
    1 - 2\alpha(1 - \gamma) C_1 + (1+\gamma)^2 \alpha^2 C_2 \leq 1 - (1-\gamma)C_1\alpha.
  \end{align}
  Defining 
  \begin{align}
    \kappa_\alpha \doteq \sqrt{1 - (1-\gamma)C_1\alpha}
  \end{align}
  then completes the proof.
  Importantly,
  both $C_1$ and $C_2$ here are independent of $C_\Gamma$.
\end{proof}

\section{Proof of Auxiliary Lemmas}

\subsection{Proof of Lemma~\ref{lem bound t1}}
\label{sec proof lem bound t1}
\lemboundtone*
\begin{proof}
  \begin{align}
      T_1 =&\indot{\Gamma(u_t) - w^*_{\theta_t}}{w^*_{\theta_t} - w^*_{\theta_{t+1}}} \\
      \leq & \norm{\Gamma(u_t) - w^*_{\theta_t}} \norm{w^*_{\theta_t} - w^*_{\theta_{t+1}}} \\
      \leq & \norm{\Gamma(u_t) - w^*_{\theta_t}} L_w L_\theta \alpha_t \qq{(Lemma~\ref{lem accu learning rates})}
  \end{align}
\end{proof}

\subsection{Proof of Lemma~\ref{lem bound t2}}
\label{sec proof lem bound t2}
\lemboundttwo*
\begin{proof}
\begin{align}
  T_2 = & \indot{\Gamma(u_t) - w^*_{\theta_t}}{f^{\alpha_t}_{\theta_t}(\Gamma(u_t)) - \Gamma(u_t)} \\
  =& \indot{\Gamma(u_t) - w^*_{\theta_t}}{f^{\alpha_t}_{\theta_t}(\Gamma(u_t)) - w^*_{\theta_t}} - \indot{\Gamma(u_t) - w^*_{\theta_t}}{\Gamma(u_t) - w^*_{\theta_t}} \\
  \leq & \norm{\Gamma(u_t) - w^*_{\theta_t}} \norm{f^{\alpha_t}_{\theta_t}(\Gamma(u_t)) - w^*_{\theta_t}} - \norm{\Gamma(u_t) - w^*_{\theta_t}}^2 \\
  \leq & \norm{\Gamma(u_t) - w^*_{\theta_t}} \kappa_{\alpha_t} \norm{\Gamma(u_t) - w^*_{\theta_t}} - \norm{\Gamma(u_t) - w^*_{\theta_t}}^2 \qq{(Assumption~\ref{assu uniform contraction})} \\
  = & -(1 - \kappa_{\alpha_t}) \norm{\Gamma(u_t) - w^*_{\theta_t}}^2.
\end{align}
\end{proof}

\subsection{Proof of Lemma~\ref{lem bound of t31}}
\label{sec proof lem bound t31}
\lemboundoftthreeone*
\begin{proof}
  \begin{align}
      T_{31} = &\indot{\Gamma(u_t) - w^*_{\theta_t} - \left(\Gamma(u_{t-\tau_{\alpha_t}}) - w^*_{\theta_{t-\tau_{\alpha_t}}}\right)}{F_{\theta_t}(\Gamma(u_t), Y_t) - \bar F_{\theta_t}(\Gamma(u_t))} \\
      \leq &\norm{\Gamma(u_t) - w^*_{\theta_t} - \left(\Gamma(u_{t-\tau_{\alpha_t}}) - w^*_{\theta_{t-\tau_{\alpha_t}}}\right)}\norm{F_{\theta_t}(\Gamma(u_t), Y_t) - \bar F_{\theta_t}(\Gamma(u_t))}.
  \end{align}
For the first term,
we have
\begin{align}
  &\norm{\Gamma(u_t) - w^*_{\theta_t} - \left(\Gamma(u_{t-\tau_{\alpha_t}}) - w^*_{\theta_{t-\tau_{\alpha_t}}}\right)} \\
  \leq& \norm{\Gamma(u_t) - \Gamma(u_{t-\tau_{\alpha_t}})} + \norm{w^*_{\theta_t} - w_{\theta_{t-\tau_{\alpha_t}}}^*} \\
  \leq& \norm{\Gamma(u_t) - \Gamma(u_{t-\tau_{\alpha_t}})} + L_w L_\theta \alpha_{t-\tau_{\alpha_t}, t-1} \qq{(Lemma~\ref{lem accu learning rates})}\\
  \leq& 4 \alpha_{t-\tau_{\alpha_t}, t-1} (A\norm{\Gamma(u_t)} + B)+  L_w L_\theta \alpha_{t-\tau_{\alpha_t}, t-1} \qq{(Lemma \ref{lem bound of xk diff})}\\
  \leq& 4 \alpha_{t-\tau_{\alpha_t}, t-1} (A\norm{\Gamma(u_t) - w^*_{\theta_t}} + A\norm{w^*_{\theta_t}} + B)+  L_w L_\theta \alpha_{t-\tau_{\alpha_t}, t-1} \\
  \leq& 4 \alpha_{t-\tau_{\alpha_t}, t-1} (L_wL_\theta + 1) (A\norm{\Gamma(u_t) - w^*_{\theta_t}} + A\norm{w^*_{\theta_t}} + B + 1).
\end{align}
For the second term,
we have
\begin{align}
  &\norm{F_{\theta_t}(\Gamma(u_t), Y_t) - \bar F_{\theta_t}(\Gamma(u_t))} \\
  \leq& \norm{F_{\theta_t}(\Gamma(u_t), Y_t)} + \norm{\bar F_{\theta_t}(\Gamma(u_t)) - \bar F_{\theta_t}(w^*_{\theta_t})} + \norm{w^*_{\theta_t}} \qq{(Assumption~\ref{assu uniform contraction}(i))}\\
  \leq&  U_F + L_F \norm{\Gamma(u_t)} + \norm{\bar F_{\theta_t}(\Gamma(u_t)) - \bar F_{\theta_t} (w^*_{\theta_t})} + \norm{w^*_{\theta_t}} \qq{(Lemma~\ref{lem bound of fxy})} \\
  =&  U_F + L_F \norm{\Gamma(u_t)} + \norm{\sum_y d_{\theta_t}(y) \left(F_{\theta_t}(\Gamma(u_t), y) - F_{\theta_t} (w^*_{\theta_t}, y) \right)} + \norm{w^*_{\theta_t}}  \\
  \leq& U_F + L_F \norm{\Gamma(u_t)} + L_F \norm{\Gamma(u_t) - w^*_{\theta_t}} + \norm{w^*_{\theta_t}} \\
  \leq& U_F + L_F \norm{\Gamma(u_t) - w^*_{\theta_t}} + L_F \norm{w^*_{\theta_t}} +  L_F \norm{\Gamma(u_t) - w^*_{\theta_t}} + \norm{w^*_{\theta_t}} \\
  \leq& A\norm{\Gamma(u_t) - w^*_{\theta_t}} + A \norm{w^*_{\theta_t}} + B.
\end{align}
Combining the two inequalities together yields
\begin{align}
  &\indot{\Gamma(u_t) - w^*_{\theta_t} - \left(\Gamma(u_{t-\tau_{\alpha_t}}) - w^*_{\theta_t}\right)}{F_{\theta_t}(\Gamma(u_t), Y_t) - \bar F_{\theta_t}(\Gamma(u_t))} \\
  \leq& 4 (L_wL_\theta + 1)  \alpha_{t-\tau_{\alpha_t}, t-1}(A\norm{\Gamma(u_t) - w^*_{\theta_t}} + C)^2 \\
  \leq& 8 (L_wL_\theta + 1)  \alpha_{t-\tau_{\alpha_t}, t-1}(A^2\norm{\Gamma(u_t) - w^*_{\theta_t}}^2 + C^2),
\end{align}
which completes the proof.
\end{proof}

\subsection{Proof of Lemma~\ref{lem bound t32}}
\label{sec proof lem bound t32}
\lemboundtthreetwo*
\begin{proof}
  \begin{align}
T_{32} = &\indot{\Gamma(u_{t-\tau_{\alpha_t}}) - w^*_{\theta_{t-\tau_{\alpha_t}}}}{F_{\theta_t}(\Gamma(u_t), Y_t) - F_{\theta_t}(\Gamma(u_{t- \tau_{\alpha_t}}), Y_t) + \bar F_{\theta_t}(\Gamma(u_{t- \tau_{\alpha_t}})) - \bar F_{\theta_t}(\Gamma(u_t))} \\
\leq & \norm{\Gamma(u_{t-\tau_{\alpha_t}}) - w^*_{\theta_{t-\tau_{\alpha_t}}}}\norm{F_{\theta_t}(\Gamma(u_t), Y_t) - F_{\theta_t}(\Gamma(u_{t- \tau_{\alpha_t}}), Y_t) + \bar F_{\theta_t}(\Gamma(u_{t- \tau_{\alpha_t}})) - \bar F_{\theta_t}(\Gamma(u_t))}.
  \end{align}
For the first term,
we have
\begin{align}
  \label{eq gradient bound dual norm}
  &\norm{\Gamma(u_{t-\tau_{\alpha_t}}) - w^*_{\theta_{t-\tau_{\alpha_t}}}} \\
  =&\norm{\Gamma(u_{t-\tau_{\alpha_t}}) - w^*_{\theta_{t-\tau_{\alpha_t}}} - (w^*_{\theta_t} - w^*_{\theta_t})} \\
  \leq& \norm{\Gamma(u_{t-\tau_{\alpha_t}}) - w^*_{\theta_t}} +  \norm{w^*_{\theta_t} - w^*_{\theta_{t-\tau_{\alpha_t}}}}  \\
  \leq& \norm{\Gamma(u_{t-\tau_{\alpha_t}}) - w^*_{\theta_t}} +  L_w L_\theta \alpha_{t-\tau_{\alpha_t}, t-1} \qq{(Lemma~\ref{lem accu learning rates})}\\
  \leq& \norm{\Gamma(u_{t-\tau_{\alpha_t}}) - \Gamma(u_t)} + \norm{\Gamma(u_t) - w^*_{\theta_t}} +  L_w L_\theta \alpha_{t-\tau_{\alpha_t}, t-1} \\
  \leq& \norm{\Gamma(u_t)} + \frac{B}{A} + \norm{\Gamma(u_t) - w^*_{\theta_t}} +  L_w L_\theta \alpha_{t-\tau_{\alpha_t}, t-1} \qq{(Lemma \ref{lem bound of xk diff})} \\
  \leq& (1 + L_wL_\theta \alpha_{t-\tau_{\alpha_t}, t-1}) \left(\norm{w^*_{\theta_t}} + \norm{\Gamma(u_t) - w^*_{\theta_t}} + \frac{B}{A} + \norm{\Gamma(u_t) - w^*_{\theta_t}} + 1\right) \\
  \leq& 2(1 + L_wL_\theta \alpha_{t-\tau_{\alpha_t}, t-1}) \left(U_w + \frac{B}{A} + \norm{\Gamma(u_t) - w^*_{\theta_t}} + 1\right) \\
  \leq& \frac{2(1 + L_wL_\theta \alpha_{t-\tau_{\alpha_t}, t-1})}{A} \left(A\norm{\Gamma(u_t) - w^*_{\theta_t}} + C\right).
\end{align}
For the second term,
\begin{align}
  &\norm{F_{\theta_t}(\Gamma(u_t), Y_t) - F_{\theta_t}(\Gamma(u_{t- \tau_{\alpha_t}}), Y_t) + \bar F_{\theta_t}(\Gamma(u_{t- \tau_{\alpha_t}})) - \bar F_{\theta_t}(\Gamma(u_t))} \\
  \leq &\norm{F_{\theta_t}(\Gamma(u_t), Y_t) - F_{\theta_t}(\Gamma(u_{t- \tau_{\alpha_t}}), Y_t)} + \norm{\bar F_{\theta_t}(\Gamma(u_{t- \tau_{\alpha_t}})) - \bar F_{\theta_t}(\Gamma(u_t))} \\
  \leq& L_F \norm{\Gamma(u_{t-\tau_{\alpha_t}}) - \Gamma(u_t)} + \norm{\sum_{y} d_{\theta_t}(y) \left(F_{\theta_t}(\Gamma(u_{t-\tau_{\alpha_t}}), y) - F_{\theta_t}(\Gamma(u_t), y)\right)} \\
  \leq& 2L_F \norm{\Gamma(u_{t-\tau_{\alpha_t}}) - \Gamma(u_t)} \\
  \leq& A \norm{\Gamma(u_{t-\tau_{\alpha_t}}) - \Gamma(u_t)} \\
  \leq& 4A \alpha_{t-\tau_{\alpha_t}, t-1} \left(A \norm{\Gamma(u_t)} + B\right) \qq{(Lemma~\ref{lem bound of xk diff})} \\
  \leq& 4A \alpha_{t-\tau_{\alpha_t}, t-1} (A\norm{\Gamma(u_t) - w^*_{\theta_t}} + A \norm{w^*_{\theta_t}} + B).
\end{align}
Combining the two inequalities together yields
\begin{align}
  T_{32} \leq& 8\alpha_{t-\tau_{\alpha_t}, t-1}(1 + L_wL_\theta \alpha_{t-\tau_{\alpha_t}, t-1})(A\norm{u_t - w^*_{\theta_t}} + C)^2 \\
  \leq& 16\alpha_{t-\tau_{\alpha_t}, t-1}(1 + L_wL_\theta \alpha_{t-\tau_{\alpha_t}, t-1})\left(A^2\norm{u_t - w^*_{\theta_t}}^2 + C^2\right),
\end{align}
which completes the proof.
\end{proof}

\subsection{Proof of Lemma~\ref{lem bound t331}}
\label{sec proof lem bound t331}
\lemboundtthreethreeone*
\begin{proof}
  \begin{align}
      \label{eq conditional independence t331}
      &\E\left[T_{331}\right] \\
      = &\E\left[\indot{\Gamma(u_{t-\tau_{\alpha_t}}) - w^*_{\theta_{t-\tau_{\alpha_t}}}}{F_{\theta_{t - \tau_{\alpha_t}}}(\Gamma(u_{t- \tau_{\alpha_t}}), \tilde Y_t) - \bar F_{\theta_{t-\tau_{\alpha_t}}}(\Gamma(u_{t- \tau_{\alpha_t}}))}\right] \\
      =&\E \left[ \E\left[\indot{\Gamma(u_{t-\tau_{\alpha_t}}) - w^*_{\theta_{t-\tau_{\alpha_t}}}}{F_{\theta_{t - \tau_{\alpha_t}}}(\Gamma(u_{t- \tau_{\alpha_t}}), \tilde Y_t) - \bar F_{\theta_{t-\tau_{\alpha_t}}}(\Gamma(u_{t- \tau_{\alpha_t}}))} \mid \substack{\theta_{t-\tau_{\alpha_t}} \\ u_{t-\tau_{\alpha_t}} \\ Y_{t-\tau_{\alpha_t}}} \right] \right]\\
      =&\E \left[ \indot{\Gamma(u_{t-\tau_{\alpha_t}}) - w^*_{\theta_{t-\tau_{\alpha_t}}}}{\E\left[F_{\theta_{t - \tau_{\alpha_t}}}(\Gamma(u_{t- \tau_{\alpha_t}}), \tilde Y_t) - \bar F_{\theta_{t-\tau_{\alpha_t}}}(\Gamma(u_{t- \tau_{\alpha_t}}))\mid \substack{\theta_{t-\tau_{\alpha_t}} \\ u_{t-\tau_{\alpha_t}} \\ Y_{t-\tau_{\alpha_t}}} \right] }\right] \\
      \leq&\E \left[ \norm{\Gamma(u_{t-\tau_{\alpha_t}}) - w^*_{\theta_{t-\tau_{\alpha_t}}}} \norm{\E\left[F_{\theta_{t - \tau_{\alpha_t}}}(\Gamma(u_{t- \tau_{\alpha_t}}), \tilde Y_t) - \bar F_{\theta_{t-\tau_{\alpha_t}}}(\Gamma(u_{t- \tau_{\alpha_t}}))\mid \substack{\theta_{t-\tau_{\alpha_t}} \\ u_{t-\tau_{\alpha_t}} \\ Y_{t-\tau_{\alpha_t}}} \right] } \right].
  \end{align}
We now bound the inner expectation.
\begin{align}
  &\norm{\E\left[F_{\theta_{t - \tau_{\alpha_t}}}(\Gamma(u_{t- \tau_{\alpha_t}}), \tilde Y_t) - \bar F_{\theta_{t-\tau_{\alpha_t}}}(\Gamma(u_{t- \tau_{\alpha_t}})) \mid \substack{\theta_{t-\tau_{\alpha_t}} \\u_{t-\tau_{\alpha_t}} \\ Y_{t-\tau_{\alpha_t}}}\right]} \\
  =&\norm{\sum_y \left(\Pr(\tilde Y_t = y \mid \substack{\theta_{t-\tau_{\alpha_t}} \\u_{t-\tau_{\alpha_t}} \\ Y_{t-\tau_{\alpha_t}}}) - d_{\theta_{t-\tau_{\alpha_t}}}(y) \right) F_{\theta_{t - \tau_{\alpha_t}}}(\Gamma(u_{t- \tau_{\alpha_t}}), y) } \\
  \leq &\max_y \norm{ F_{\theta_{t - \tau_{\alpha_t}}}(\Gamma(u_{t- \tau_{\alpha_t}}), y) } \sum_y \left|\Pr(\tilde Y_t = y \mid \substack{\theta_{t-\tau_{\alpha_t}} \\u_{t-\tau_{\alpha_t}} \\ Y_{t-\tau_{\alpha_t}}}) - d_{\theta_{t-\tau_{\alpha_t}}}(y) \right| \\
  \label{eq bound of xk-tk}
  \leq &\max_y \norm{ F_{\theta_{t - \tau_{\alpha_t}}}(\Gamma(u_{t- \tau_{\alpha_t}}), y) } \alpha_t \qq{(Definition of $\tau_{\alpha_t}$)} \\
  \leq & \alpha_t \left(U_F + L_F \norm{\Gamma(u_{t-\tau_{\alpha_t}})} \right) \qq{(Lemma \ref{lem bound of fxy})} \\
  \leq & \alpha_t \left(U_F + L_F \norm{\Gamma(u_{t-\tau_{\alpha_t}}) - \Gamma(u_t)} + L_F \norm{\Gamma(u_t)}\right) \\
  \leq & \alpha_t \left(B + A \left(\norm{\Gamma(u_t)} + \frac{B}{A}\right) + A \norm{\Gamma(u_t)}\right) \qq{(Lemma \ref{lem bound of xk diff})} \\
  \leq & \alpha_t \left(2B + (A + 1) \norm{\Gamma(u_t)}\right) \\
  \leq & 2\alpha_t \left(B + A \norm{\Gamma(u_t)}\right) \\
  \leq & 2\alpha_t \left(B + A \norm{\Gamma(u_t) - w^*_{\theta_t}} + A \norm{w^*_{\theta_t}}\right) \\
  \leq & 2\alpha_t \left(A \norm{\Gamma(u_t) - w^*_{\theta_t}} + C\right) \\
\end{align}
Using the above inequality and \eqref{eq gradient bound dual norm} yields
\begin{align}
  &\E\left[T_{331}\right] \\
  \leq & \E \left[\frac{4 \alpha_t(1 + L_wL_\theta \alpha_{t-\tau_{\alpha_t}, t-1})}{A}\left(A\norm{\Gamma(u_t) - w^*_{\theta_t}} + C\right)^2\right] \\
  \leq & \E \left[\frac{8 \alpha_t(1 + L_wL_\theta \alpha_{t-\tau_{\alpha_t}, t-1})}{A}\left(A^2\norm{\Gamma(u_t) - w^*_{\theta_t}}^2 + C^2\right)\right],
\end{align}
which completes the proof.
\end{proof}

\subsection{Proof of Lemma~\ref{lem bound t332}}
\label{sec proof lem bound t332}
\lemboundtthreethreetwo*
\begin{proof}
  \begin{align}
      &\E\left[T_{332}\right] \\
      = &\E\left[\indot{\Gamma(u_{t-\tau_{\alpha_t}}) - w^*_{\theta_{t-\tau_{\alpha_t}}}}{F_{\theta_{t - \tau_{\alpha_t}}}(\Gamma(u_{t- \tau_{\alpha_t}}), Y_t) -F_{\theta_{t - \tau_{\alpha_t}}}(\Gamma(u_{t- \tau_{\alpha_t}}), \tilde Y_t)}\right] \\
      \leq &  \E\left[\norm{\Gamma(u_{t-\tau_{\alpha_t}}) - w^*_{\theta_{t-\tau_{\alpha_t}}}} \norm{ \E \left[{F_{\theta_{t - \tau_{\alpha_t}}}(\Gamma(u_{t- \tau_{\alpha_t}}), Y_t) -F_{\theta_{t - \tau_{\alpha_t}}}(\Gamma(u_{t- \tau_{\alpha_t}}), \tilde Y_t)} \mid \substack{u_{t-\tau_{\alpha_t}} \\ \theta_{t-\tau_{\alpha_t}} \\ Y_{t-\tau_{\alpha_t}}} \right]}\right] \\
      \intertext{\hfill (Similar to \eqref{eq conditional independence t331})}
      \leq& \E \Bigg[ \frac{2(1 + L_wL_\theta \alpha_{t-\tau_{\alpha_t}, t-1})}{A} \left(A\norm{\Gamma(u_t) - w^*_{\theta_t}} + C\right)\\
      &\times 2\ny L_PL_\theta \sum_{j=t-\tau_{\alpha_t}}^{t-1}\alpha_{t-\tau_{\alpha_t}, j}\left(A \norm{\Gamma(u_t) - w^*_{\theta_t}} + C\right) \Bigg] \\
      \intertext{\hfill (Using \eqref{eq gradient bound dual norm} and Lemma \ref{lem chain difference bound})}
      \leq & \frac{8 \ny L_P L_\theta \sum_{j=t-\tau_{\alpha_t}}^{t-1}\alpha_{t-\tau_{\alpha_t}, j} (1 + L_wL_\theta \alpha_{t-\tau_{\alpha_t}, t-1})}{A} \left(A^2 \E\left[\norm{\Gamma(u_t) - w^*_{\theta_t}}^2\right] + C^2\right),
  \end{align}
  which completes the proof.
\end{proof}

\subsection{Proof of Lemma~\ref{lem bound of t333}}
\label{sec proof lem bound t333}
\lemboundoftthreethreethree*
\begin{proof}
  \begin{align}
      T_{333} = &\indot{\Gamma(u_{t-\tau_{\alpha_t}}) - w^*_{\theta_{t-\tau_{\alpha_t}}}}{F_{\theta_{t}}(\Gamma(u_{t- \tau_{\alpha_t}}), Y_t) -F_{\theta_{t - \tau_{\alpha_t}}}(\Gamma(u_{t- \tau_{\alpha_t}}), Y_t)}  \\
      \leq &\norm{\Gamma(u_{t-\tau_{\alpha_t}}) - w^*_{\theta_{t-\tau_{\alpha_t}}}} \norm{F_{\theta_{t}}(\Gamma(u_{t- \tau_{\alpha_t}}), Y_t) -F_{\theta_{t - \tau_{\alpha_t}}}(\Gamma(u_{t- \tau_{\alpha_t}}), Y_t)}  \\
      \leq & \frac{2(1 + L_wL_\theta \alpha_{t-\tau_{\alpha_t}, t-1})}{A}\left(A\norm{\Gamma(u_t) - w^*_{\theta_t}} + C\right) \\
      &\times L_F' L_\theta \alpha_{t-\tau_{\alpha_t}, t-1} \left(\norm{\Gamma(u_{t-\tau_{\alpha_t}})} + U_F' \right) \qq{(Using \eqref{eq gradient bound dual norm} and Lemma~\ref{lem accu learning rates})}.
  \end{align}
Since 
\begin{align}
  \label{eq tmp 3}
  &\norm{\Gamma(u_{t-\tau_{\alpha_t}})} \\
  \leq & \norm{\Gamma(u_{t-\tau_{\alpha_t}}) - \Gamma(u_t)} + \norm{\Gamma(u_t)} \\
  \leq & 2\norm{\Gamma(u_t)} + \frac{B}{A} \qq{(Lemma \ref{lem bound of xk diff})} \\
  \leq & 2\norm{\Gamma(u_t) - w^*_{\theta_t}} + 2\norm{w^*_{\theta_t}} + \frac{B}{A},
\end{align}
we have
\begin{align}
  T_{333} \leq \frac{8L_F' L_\theta \alpha_{t-\tau_{\alpha_t}, t-1} (1 + L_wL_\theta \alpha_{t-\tau_{\alpha_t}, t-1})}{A^2 }\left(A^2 \norm{\Gamma(u_t) - w^*_{\theta_t}}^2 + C^2\right),
\end{align}
which completes the proof.
\end{proof}

\subsection{Proof of Lemma~\ref{lem bound t334}}
\label{sec proof lem bound t334}
\lemboundtthreethreefour*
\begin{proof}
  \begin{align}
      T_{334} = &\indot{\Gamma(u_{t-\tau_{\alpha_t}}) - w^*_{\theta_{t-\tau_{\alpha_t}}}}{\bar F_{\theta_{t - \tau_{\alpha_t}}}(\Gamma(u_{t- \tau_{\alpha_t}})) - \bar F_{\theta_{t}}(\Gamma(u_{t- \tau_{\alpha_t}}))}  \\
      \leq &\norm{\Gamma(u_{t-\tau_{\alpha_t}}) - w^*_{\theta_{t-\tau_{\alpha_t}}}} \norm{\bar F_{\theta_{t}}(\Gamma(u_{t- \tau_{\alpha_t}})) - \bar F_{\theta_{t - \tau_{\alpha_t}}}(\Gamma(u_{t- \tau_{\alpha_t}}))}  \\
      \leq & \frac{2(1 + L_wL_\theta \alpha_{t-\tau_{\alpha_t}, t-1})}{A}\left(A\norm{\Gamma(u_t) - w^*_{\theta_t}} + C\right) \\
      &\times L_F'' L_\theta \alpha_{t-\tau_{\alpha_t}, t-1} \left(\norm{\Gamma(u_{t-\tau_{\alpha_t}})} + U_F''\right) \qq{(Using \eqref{eq gradient bound dual norm} and Lemma~\ref{lem accu learning rates})}.
  \end{align}
  Using \eqref{eq tmp 3} completes the proof.
\end{proof}

\subsection{Proof of Lemma~\ref{lem bound t5}}
\label{sec proof lem bound t5}
\lemboundtfive*
\begin{proof}
  \begin{align}
      T_5 =&\norm{F_{\theta_t}(\Gamma(u_t), Y_t) - \Gamma(u_t)}^2 \\
      \leq& \left(\norm{F_{\theta_t}(\Gamma(u_t), Y_t)} + \norm{\Gamma(u_t)} \right)^2  \\
      \leq& \left(U_F + (L_F + 1) \norm{\Gamma(u_t)}\right)^2 \qq{(Lemma~\ref{lem bound of fxy})} \\
      \leq& \left(B + A \norm{\Gamma(u_t)}\right)^2  \\
      \leq& \left(B + A \norm{\Gamma(u_t) - w^*_{\theta_t}} + A\norm{w^*_{\theta_t}}\right)^2  \\
      \leq& 2\left(A^2 \norm{\Gamma(u_t) - w^*_{\theta_t}}^2 + C^2\right)
  \end{align}
\end{proof}

\begin{lemma}
  \label{lem bound of fxy}
  For any $\theta, w, y$,
  \begin{align}
      \norm{F_{\theta}(w, y)} \leq U_F + L_F \norm{w}
  \end{align}
\end{lemma}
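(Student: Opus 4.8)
\textbf{Proof proposal for Lemma~\ref{lem bound of fxy}.}
The plan is to reduce the bound on $\norm{F_\theta(w,y)}$ to the two regularity conditions in Assumption~\ref{assu regularization} that are already available, namely part (i) (Lipschitz continuity of $F_\theta(\cdot,y)$ in its first argument with constant $L_F$) and part (iii) (the uniform bound $\norm{F_\theta(0,y)} \leq U_F$). The only manipulation needed is to introduce the reference point $w=0$ by adding and subtracting $F_\theta(0,y)$ inside the norm.

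Concretely, first I would write
\begin{align}
  \norm{F_\theta(w,y)} \leq \norm{F_\theta(w,y) - F_\theta(0,y)} + \norm{F_\theta(0,y)}
\end{align}
by the triangle inequality. Then I would bound the first term by Assumption~\ref{assu regularization}(i) with $w' = 0$, obtaining $\norm{F_\theta(w,y) - F_\theta(0,y)} \leq L_F \norm{w - 0} = L_F \norm{w}$, and the second term by Assumption~\ref{assu regularization}(iii), obtaining $\norm{F_\theta(0,y)} \leq U_F$. Combining the two gives $\norm{F_\theta(w,y)} \leq U_F + L_F \norm{w}$, as claimed, uniformly over $\theta$ and $y$ since both constants $L_F, U_F$ are uniform in those variables.

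There is no real obstacle here; the statement is a direct corollary of Assumption~\ref{assu regularization}. The only thing to be careful about is to invoke part (i) with the specific choice $w'=0$ rather than a generic second argument, so that the right-hand side collapses to $L_F\norm{w}$ and matches exactly the form in which the lemma is used throughout Section~\ref{sec proof thm sa convergence} (e.g., in Lemmas~\ref{lem bound of xk diff}, \ref{lem bound t5}, and \ref{lem accu learning rates}).
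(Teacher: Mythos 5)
Your proposal is correct and matches the paper's argument: the paper applies the (reverse) triangle inequality around the reference point $w'=0$ and then invokes Assumption~\ref{assu regularization}(i) and (iii), exactly as you do.
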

\begin{proof}
  Assumption \ref{assu regularization} implies that
  \begin{align}
      \norm{F_{\theta}(w, y)} - \norm{F_{\theta}(0, y)}  &\leq \norm{F_{\theta}(0, y) - F_{\theta}(w, y)} \\
      &\leq L_F \norm{w - 0},
  \end{align}
  which completes the proof.
\end{proof}

\begin{lemma}
  \label{lem chain difference bound}
  \begin{align}
      &\norm{ \E \left[{F_{\theta_{t - \tau_{\alpha_t}}}(\Gamma(u_{t - \tau_{\alpha_t}}), Y_t) -F_{\theta_{t - \tau_{\alpha_t}}}(\Gamma(u_{t - \tau_{\alpha_t}}), \tilde Y_t)} \mid \substack{u_{t-\tau_{\alpha_t}} \\ \theta_{t-\tau_{\alpha_t}} \\ Y_{t-\tau_{\alpha_t}}} \right]} \\
      \leq &2\ny L_PL_\theta \sum_{j=t-\tau_{\alpha_t}}^{t-1}\alpha_{t-\tau_{\alpha_t}, j}(A \norm{\Gamma(u_t) - w^*_{\theta_t}} + C)
  \end{align}
\end{lemma}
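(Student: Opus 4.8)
Here is the plan. Write $\mathcal{G} \doteq \sigma\left(u_{t-\tau_{\alpha_t}}, \theta_{t-\tau_{\alpha_t}}, Y_{t-\tau_{\alpha_t}}\right)$ for the conditioning. The first step is to turn the statement into a total‑variation estimate between the $\mathcal{G}$‑conditional laws of $Y_t$ and $\tilde Y_t$. Since $\theta_{t-\tau_{\alpha_t}}$ and $u_{t-\tau_{\alpha_t}}$ are $\mathcal{G}$‑measurable, the map $g(\cdot) \doteq F_{\theta_{t-\tau_{\alpha_t}}}(\Gamma(u_{t-\tau_{\alpha_t}}), \cdot)\colon \fY \to \R^K$ is fixed given $\mathcal{G}$, so
\begin{align}
  \E\left[g(Y_t) - g(\tilde Y_t) \mid \mathcal{G}\right] = \sum_{y \in \fY} \left(\Pr(Y_t = y \mid \mathcal{G}) - \Pr(\tilde Y_t = y \mid \mathcal{G})\right) g(y),
\end{align}
whose norm is at most $\max_y \norm{g(y)}$ times $\sum_y \abs{\Pr(Y_t = y \mid \mathcal{G}) - \Pr(\tilde Y_t = y \mid \mathcal{G})}$.

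Next I would bound $\max_y \norm{g(y)}$. By Lemma~\ref{lem bound of fxy}, $\norm{g(y)} \leq U_F + L_F \norm{\Gamma(u_{t-\tau_{\alpha_t}})}$; then \eqref{eq bound of xk diff3} of Lemma~\ref{lem bound of xk diff} (valid once $t_0$ is large enough that $\alpha_{t-\tau_{\alpha_t}, t-1} \leq \tfrac{1}{4A}$) gives $\norm{\Gamma(u_{t-\tau_{\alpha_t}}) - \Gamma(u_t)} \leq \norm{\Gamma(u_t)} + \tfrac{B}{A}$, and $\norm{\Gamma(u_t)} \leq \norm{\Gamma(u_t) - w^*_{\theta_t}} + U_w$. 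Running the same chain of inequalities used to bound $\max_y \norm{F_{\theta_{t-\tau_{\alpha_t}}}(\Gamma(u_{t-\tau_{\alpha_t}}), y)}$ in the proof of Lemma~\ref{lem bound t331} (but without the spurious $\alpha_t$ factor, which there came from the mixing bound) yields $\max_y \norm{g(y)} \leq 2\left(A\norm{\Gamma(u_t) - w^*_{\theta_t}} + C\right)$.

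The core of the argument is the $\ell_1$ bound on the conditional laws. Set $q_k(y) \doteq \Pr(Y_{t-\tau_{\alpha_t}+k} = y \mid \mathcal{G})$ and $\tilde q_k(y) \doteq \Pr(\tilde Y_{t-\tau_{\alpha_t}+k} = y \mid \mathcal{G}) = \left(P_{\theta_{t-\tau_{\alpha_t}}}^{k}\right)(Y_{t-\tau_{\alpha_t}}, y)$, so $q_0 = \tilde q_0$. Using the update structure $\Pr(Y_{j+1} = y \mid \text{history up to } j) = P_{\theta_{j+1}}(Y_j, y)$ and $\mathcal{G}$‑measurability of $P_{\theta_{t-\tau_{\alpha_t}}}$, I would decompose
\begin{align}
  q_{k+1}(y) - \tilde q_{k+1}(y)
  =& \E\left[\bigl(P_{\theta_{t-\tau_{\alpha_t}+k+1}} - P_{\theta_{t-\tau_{\alpha_t}}}\bigr)\left(Y_{t-\tau_{\alpha_t}+k}, y\right) \,\middle|\, \mathcal{G}\right] \\
  &+ \sum_{y'}\bigl(q_k(y') - \tilde q_k(y')\bigr) P_{\theta_{t-\tau_{\alpha_t}}}(y', y).
\end{align}
Summing $\abs{\cdot}$ over $y \in \fY$: the first term contributes at most $\ny L_P L_\theta \alpha_{t-\tau_{\alpha_t},\, t-\tau_{\alpha_t}+k}$ because the pathwise kernel bound of Lemma~\ref{lem accu learning rates} survives the conditional expectation, and the second contributes at most $\norm{q_k - \tilde q_k}_1$ since $P_{\theta_{t-\tau_{\alpha_t}}}$ is stochastic. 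Hence $\norm{q_{k+1} - \tilde q_{k+1}}_1 \leq \norm{q_k - \tilde q_k}_1 + \ny L_P L_\theta \alpha_{t-\tau_{\alpha_t},\, t-\tau_{\alpha_t}+k}$, and telescoping from $k=0$ to $k=\tau_{\alpha_t}-1$ gives $\sum_y \abs{\Pr(Y_t = y \mid \mathcal{G}) - \Pr(\tilde Y_t = y \mid \mathcal{G})} \leq \ny L_P L_\theta \sum_{j=t-\tau_{\alpha_t}}^{t-1} \alpha_{t-\tau_{\alpha_t}, j}$. Multiplying by the bound on $\max_y \norm{g(y)}$ from the previous paragraph yields exactly the claimed inequality.

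The main obstacle is this last step: because $\theta_t = w_t$ changes every step and each $\theta_{t-\tau_{\alpha_t}+k+1}$ is a path‑dependent random variable, the $\mathcal{G}$‑conditional law of $Y_t$ is \emph{not} a fixed product of transition matrices, so the usual "compare two deterministic matrix products" argument does not apply verbatim; the remedy is the one‑step telescoping above, which keeps everything inside a conditional expectation and only uses that the kernel‑perturbation estimate of Lemma~\ref{lem accu learning rates} holds uniformly along every sample path. Everything else is bookkeeping that parallels the computations already done for Lemma~\ref{lem bound t331}.
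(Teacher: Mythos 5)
Your proposal is correct and follows essentially the same route as the paper: bound the conditional expectation by $\max_y\norm{F_{\theta_{t-\tau_{\alpha_t}}}(\Gamma(u_{t-\tau_{\alpha_t}}),y)}$ (reusing the chain from Lemma~\ref{lem bound t331}, i.e.\ \eqref{eq bound of xk-tk} without the $\alpha_t$ mixing factor) times the $\ell_1$ distance between the conditional laws of $Y_t$ and $\tilde Y_t$, and control the latter by a one-step recursion driven by the pathwise kernel bound of Lemma~\ref{lem accu learning rates}, telescoped over the $\tau_{\alpha_t}$ steps. The only difference is cosmetic: the paper carries out the recursion by summing over the finite set $\Theta_t$ of possible $\theta$ values and factoring the joint probabilities, while you keep everything inside a conditional expectation via the tower property; both yield exactly \eqref{eq y difference two chains}.
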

\begin{proof}
  In this proof, all $\Pr$ and $\E$ are implicitly conditioned on $u_{t-\tau_{\alpha_t}}, \theta_{t-\tau_{\alpha_t}}, Y_{t-\tau_{\alpha_t}}$. 
  We use $\Theta_t$ to denote the set of all possible $\theta_t$ given $u_{t-\tau_{\alpha_t}}, \theta_{t-\tau_{\alpha_t}}, Y_{t-\tau_{\alpha_t}}$.
  Obviously, $\Theta_t$ is a finite set.
  We have
  \begin{align}
      &\Pr(Y_t = y') \\
      =& \sum_{y} \sum_{z \in \Theta_{t}} \Pr(Y_t = y' , Y_{t-1} = y, \theta_{t} = z) \\
      =& \sum_{y} \sum_{z \in \Theta_{t}} \Pr(Y_t = y' \mid Y_{t-1} = y, \theta_{t} = z) \Pr(Y_{t-1} = y, \theta_{t} = z) \\
      =& \sum_{y} \sum_{z \in \Theta_{t}} P_{z}(y, y') \Pr(Y_{t-1} = y) \Pr(\theta_{t} = z | Y_{t-1} = y)
  \end{align}
  \begin{align}
      &\Pr(\tilde Y_t = y') \\
      =& \sum_{y} \Pr(\tilde Y_{t-1} = y) P_{\theta_{t-\tau_{\alpha_t}}}(y, y') \\
      =& \sum_{y} \Pr(\tilde Y_{t-1} = y) P_{\theta_{t-\tau_{\alpha_t}}}(y, y') \sum_{z \in \Theta_{t}} \Pr(\theta_{t} = z | Y_{t-1} = y) \\
      =& \sum_{y} \sum_{z \in \Theta_{t}} \Pr(\tilde Y_{t-1} = y) P_{\theta_{t-\tau_{\alpha_t}}}(y, y')  \Pr(\theta_{t} = z | Y_{t-1} = y)
  \end{align}
  Consequently,
  \begin{align}
      &\sum_{y'} \left|\Pr(Y_t = y') - \Pr(\tilde Y_t = y') \right| \\
      \leq &\sum_{y, y'} \sum_{z \in \Theta_{t}} \left| \Pr(Y_{t-1} = y) P_z(y, y') - \Pr(\tilde Y_{t-1} = y) P_{\theta_{t-\tau_{\alpha_t}}}(y, y')\right| \Pr(\theta_{t} = z \mid Y_{t-1} = y).
  \end{align}
Since for any $z \in \Theta_{t}$,
\begin{align}
  &\left| \Pr(Y_{t-1} = y) P_z(y, y') - \Pr(\tilde Y_{t-1} = y) P_{\theta_{t-\tau_{\alpha_t}}}(y, y')\right| \\
  \leq & \left| \Pr(Y_{t-1} = y) P_z(y, y') - \Pr(\tilde Y_{t-1} = y) P_{z}(y, y')\right| \\
  &+ \left| \Pr(\tilde Y_{t-1} = y) P_z(y, y') - \Pr(\tilde Y_{t-1} = y) P_{\theta_{t-\tau_{\alpha_t}}}(y, y')\right| \\
  \leq & \left| \Pr(Y_{t-1} = y) - \Pr(\tilde Y_{t-1} = y) \right| P_z(y, y') + L_PL_\theta \alpha_{t-\tau_{\alpha_t}, t-1} \Pr(\tilde Y_{t-1} = y) \qq{(Lemma~\ref{lem accu learning rates})},
\end{align}
we have
\begin{align}
  &\sum_{y'} \left|\Pr(Y_t = y') - \Pr(\tilde Y_t = y') \right| \\
  \leq & \sum_y \left| \Pr(Y_{t-1} = y) - \Pr(\tilde Y_{t-1} = y) \right| + \ny L_PL_\theta \alpha_{t-\tau_{\alpha_t}, t-1}.
\end{align}
Applying the above inequality recursively yields
\begin{align}
  \label{eq y difference two chains}
  \sum_{y'} \left|\Pr(Y_t = y') - \Pr(\tilde Y_t = y') \right| \leq \ny L_PL_\theta \sum_{j=t-\tau_{\alpha_t}}^{t-1} \alpha_{t-\tau_{\alpha_t}, j}.
\end{align}
Consequently,
\begin{align}
  &\norm{ \E \left[{F_{\theta_{t - \tau_{\alpha_t}}}(\Gamma(u_{t - \tau_{\alpha_t}}), Y_t) -F_{\theta_{t - \tau_{\alpha_t}}}(\Gamma(u_{t - \tau_{\alpha_t}}), \tilde Y_t)} \right]} \\
  =& \norm{\sum_y \left(\Pr(Y_t = y) - \Pr(\tilde Y_t = y)\right) F_{\theta_{t-\tau_{\alpha_t}}}(\Gamma(u_{t-\tau_{\alpha_t}}), y)} \\
  \leq &\max_y \norm{F_{\theta_{t-\tau_{\alpha_t}}}(\Gamma(u_{t-\tau_{\alpha_t}}), y)} \ny L_PL_\theta \sum_{j=t-\tau_{\alpha_t}}^{t-1}\alpha_{t-\tau_{\alpha_t}, j} \\
  \leq & 2\ny L_PL_\theta \sum_{j=t-\tau_{\alpha_t}}^{t-1}\alpha_{t-\tau_{\alpha_t}, j}(A \norm{\Gamma(u_t) - w^*_{\theta_t}} + C) \qq{(Using \eqref{eq bound of xk-tk}),}
\end{align}
which completes the proof.
\end{proof}

\end{document}